\renewcommand{\hat}{\widehat}
\renewcommand{\tilde}{\widetilde}
\newcommand{\sd}{\mathfrak d}
\DeclareMathOperator*{\argminnew}{arg\,min}
\newcommand{\vct}{\boldsymbol }
\newcommand{\ud}{\mathrm d}
\newcommand{\kl}{\mathrm{KL}}
\newcommand{\SPAN}{\mathrm{span}}
\newcommand{\diag}{\mathrm{diag}}
\newcommand{\op}{\mathrm{op}}
\newcommand{\cavg}{\mathsf{CA}}
\newcommand{\cw}{\mathsf{CW}}
\newcommand{\rnn}{\mathsf{R}}
\newcommand{\sP}{\mathsf{Q}}
\newcommand{\sS}{\mathsf{S}}
\newcommand{\avg}{\mathrm{avg}}
\begin{document}

\title{How Many Samples are Needed to Estimate a Convolutional or Recurrent Neural Network?
\thanks{A preliminary version of this paper titled ``How Many Samples are Needed to Estimate a Convolutional Neural Network''
appeared in Proceedings of the 32nd Conference on Neural Information Processing Systems (NeurIPS 2018), with results for convolutional neural networks only.}}

\author{\name Simon Du\thanks{Simon Du and Yining Wang contributed equally to this work.} \email ssdu@cs.cmu.edu \\
       \name Yining Wang\textsuperscript{$\dagger$} \email yiningwa@cs.cmu.edu\\
       \addr Machine Learning Department, School of Computer Science\\
       Carnegie Mellon University, Pittsburgh, PA 15213, USA\\
       \name Xiyu Zhai \email xiyuzhai@mit.edu\\
       \addr Department of Electrical Engineering and Computer Science\\
       Massachusetts Institute of Technology, Cambridge, MA 02139, USA\\
       \name Sivaraman Balakrishnan \email siva@stat.cmu.edu\\
       \addr Department of Statistics and Data Science\\
       Carnegie Mellon University, Pittsburgh, PA 15213, USA\\
       \name Ruslan Salakhutdinov \email rsalakhu@cs.cmu.edu\\
       \name Aarti Singh \email aarti@cs.cmu.edu\\
       \addr Machine Learning Department, School of Computer Science\\
       Carnegie Mellon University, Pittsburgh, PA 15213, USA
       }

\editor{}

\maketitle

\begin{abstract}
It is widely believed that the practical success of Convolutional Neural Networks (CNNs) and Recurrent Neural Networks (RNNs)
owes to the fact that CNNs and RNNs use a more compact parametric representation than their Fully-Connected Neural Network (FNN) counterparts, and consequently require fewer training examples to accurately estimate their parameters.
We initiate the study of rigorously characterizing the sample-complexity of estimating CNNs and RNNs.
We  show that the sample-complexity to learn CNNs and RNNs scales linearly with their intrinsic dimension and this sample-complexity is much smaller than for their FNN counterparts.
For both CNNs and RNNs, we also present lower bounds showing our sample complexities are tight up to logarithmic factors.
Our main technical tools for deriving these results are a localized empirical process analysis and a new technical lemma characterizing the convolutional and recurrent structure. We believe that these tools may inspire further developments in understanding CNNs and RNNs.
\end{abstract}

\begin{keywords}
 convolutional neural networks, recurrent neural networks, sample-complexity, minimax analysis
\end{keywords}

\maketitle






\section{Introduction}
Convolutional Neural Networks (CNNs) and  Recurrent Neural Networks (RNNs) have achieved remarkable impact in many machine learning applications.
The key building block of these improvements is the use of weight sharing layers to replace traditional fully connected layers, dating back to~\cite{lecun1995convolutional,rumelhart1988learning}.
A common folklore for explaining the success of CNNs and RNNs is that they use a more compact  representation than Fully-connected Neural Networks (FNNs) and thus require fewer samples to reliably estimate.
However, to our knowledge, there is no rigorous characterization of the precise sample-complexity of learning a CNN or an RNN and thus it is unclear, from a statistical point of view, why using CNNs or RNNs often results in 
a better performance than just using FNNs.

\noindent {\bf Our Contributions: } In this paper, we take a step towards understanding the statistical behavior of CNNs and RNNs.
We adapt tools from localized empirical process theory~\citep{geer2000empirical} and combine them with a structural property of convolutional filters in CNNs (see Lemma~\ref{lem:covering-cavg},~\ref{lem:covering-cw}) or the recurrent transition matrix in RNNs (see Lemma~\ref{lem:covering-rnn}) to give a sharp characterization of the sample-complexity of estimating simple CNNs and RNNs. 
\begin{enumerate}
\item We first consider the problem of estimating a convolutional filter with \emph{average} pooling (described in~Section~\ref{sec:model-specification}) using the least squares estimator.
We show in the standard statistical learning setting, under some conditions on the input distribution, the least squares estimate
$\hat{w}$ satisfies:
\[\sqrt{\mathbb{E}_{x \sim \mu}|F^{\cavg}(x,\hat{w})-F^{\cavg}(x,w_0)|^2}= \widetilde{O}\left(\sqrt{{m}/{n}}\right),\] 
where $\mu$ is the input distribution, $w_0$ is the underlying true convolutional filter, $m$ is the filter size,
and $F^{\cavg}(\cdot)$ denotes the convolutional network with average pooling.
Notably, to achieve an $\epsilon$ error, the CNN only needs $\widetilde{O}({m}/{\epsilon^2})$ samples whereas the FNN needs $\Omega({d}/{\epsilon^2})$ with $d$ being the input size.
Since the filter size $m \ll d$, this result clearly justifies the folklore that the convolutional layer is a more compact representation.
Furthermore, we complement this upper bound with a minimax lower bound which shows the error bound $\widetilde{O}(\sqrt{{m}/{n}})$ is tight up to logarithmic factors.

\item Next, we consider a one-hidden-layer CNN in which the filter $w \in \mathbb{R}^{m}$ and output weights $a \in \mathbb{R}^{r}$ are unknown.
%
This architecture was previously considered in~\cite{du2017spurious}.
However, the focus of that work was on understanding the dynamics of gradient descent.
Using similar tools as in analyzing a single convolutional filter, we show that the least squares estimator achieves the error bound $\widetilde{O}(\sqrt{{(m+r)}/{n}})$ if the ratio between the stride size and the filter size is a constant.
Further, we present a minimax lower bound showing that the obtained rate is tight up to logarithmic-factors.

\item Lastly, we consider an RNN as described in~\eqref{eq:rnn-model}. 
Based on a new structural lemma for the RNN model, we show that the least squares estimator has prediction error upper bounded as $\widetilde{O}(\sqrt{{dr}/{n}})$, where $d$ is the input dimension and $r$ is the dimension of the hidden state.
On the other hand, the corresponding FNN has $Ld$ features where $L$ is the length of the input sequence.
In typical applications, we have that $r\ll L \ll d$~(see for instance the paper of \cite{mikolov2010recurrent}).
Our result demonstrates the sample-complexity benefits from using the RNN to exploit the hidden structure rather than using the FNN.
\end{enumerate}
To our knowledge, these theoretical results are the first sharp analyses of the statistical sample-complexity of the CNN and RNN.

\subsection{Comparison with existing work}
\label{sec:rel}
Our work is closely related to the analysis of the generalization ability of neural networks~\citep{arora2018stronger,anthony2009neural,bartlett2017nearly,bartlett2017spectrally,neyshabur2017pac,konstantinos2017pac,li2018tighter}.
These generalization bounds are often of the form:
\begin{align}
L(\theta) - L_{\text{tr}}(\theta) \le D/\sqrt{n}
\label{eq:generalization}
\end{align} 
where $\theta$ represents the parameters of a neural network, $L(\cdot)$ and $L_{\text{tr}}(\cdot)$ represent population and empirical error under some \emph{additive} loss,
and $D$ is the model capacity and is finite only if the (spectral) norm of the weight matrix for 
each layer is bounded.
Comparing with generalization bounds based on model capacity, our result has two advantages: 
\begin{itemize}
	\item If $L(\cdot)$ is taken to be the mean-squared\footnote{Because the mean-squared error $\mathbb{E}|\cdot|^2$ is a sum of independent random variables, it is common to apply generalization error bounds directly on this quantity.}
	Eq.~(\ref{eq:generalization}) implies an $\tilde O(1/\epsilon^4)$ sample-complexity to achieve a standardized mean-square error of $\sqrt{\mathbb E|\cdot|^2}\leq \epsilon$,
	which is considerably larger than the $\tilde O(1/\epsilon^2)$ sample-complexity we establish in this paper.
	\item Since the complexity $D$ of a model class in regression problems typically depends on the magnitude of model parameters,
	generalization error bounds like~\eqref{eq:generalization} are not scale-independent and deteriorate if the magnitude of the parameters is large.
	In contrast, our analysis has no dependence on the magnitude.
\end{itemize}
On the other hand, we consider the special case where the neural network model is well-specified and the labels are generated according to a neural network with unbiased additive noise (see~\eqref{eq:model-general}) whereas the generalization bounds discussed in this section are typically model agnostic.

\subsection{Other related work}
\label{sec:other_rel}
Recently, researchers have made progress in theoretically understanding various aspects of neural networks, including understanding the hardness of estimation~\citep{goel2016reliably,song2017complexity,brutzkus2017globally}, the landscape of the loss function~\citep{kawaguchi2016deep,choromanska2015loss,hardt2016identity,haeffele2015global,freeman2016topology,safran2016quality,zhou2017landscape,nguyen2017loss,nguyen2017loss2,ge2017learning,zhou2017landscape,safran2017spurious,du2018power}, the dynamics of gradient descent~\citep{tian2017analytical,zhong2017recovery,li2017convergence}, and developing provable learning algorithms~\citep{goel2017eigenvalue,goel2017learning,zhang2015learning}.

Focusing on the convolutional neural network, most existing work has analyzed the convergence rate of gradient descent or its variants~\citep{du2017convolutional,du2017spurious,goel2018learning,brutzkus2017globally,zhong2017learning}.
Our paper differs from these past works in that we do not consider the computational-complexity but only the sample-complexity and the fundamental information theoretic limits of estimating a CNN.

%
The convolutional structure has also been studied in the dictionary learning~\citep{singh2018minimax} and blind de-convolution~\citep{zhang2017global} literature.
These papers studied the unsupervised setting where their goal is to recover structured signals from observations generated according to convolution operations whereas our paper focuses on the supervised learning setting where the target (ground-truth) predictor has a convolutional structure.

Our formulation of an RNN can be viewed as a special case of the classical 
\citep{kalman1960new}
problem of learning a linear dynamical system~\citep{hazan2017learning,hardt2018gradient,simchowitz2018learning,oymak2018non}.
These recent works consider both computational and statistical issues and to our knowledge, their sample-complexity results are not tight.

Lastly, a line of recent works has studied over-parameterized neural networks, requiring the width of the neural network at every layer to be larger than the number of data points~\citep{du2018provably,du2018global,allen2018convergence,allen2018convergence,allen2019can,zou2018stochastic,li2018learning,arora2019fine}.
In particular, \citet{arora2019fine,allen2018learning,allen2019can} showed that these over-parameterized neural networks can also generalize in some cases.
These works are different from ours in their focus. We do not consider the over-parameterized setup. Instead we present tight information-theoretic characterizations of the fundamental statistical limits of estimating CNNs and RNNs.

\section{Preliminaries}
In this section, we introduce the convolutional filter, convolutional neural network and recurrent neural network models that we study. 
We then introduce briefly the least squares estimator that we study for our upper bounds, and introduce the minimax risk which we subsequently lower bound.

\subsection{Problem Setup}\label{sec:model-specification}

\begin{figure*}[t!]
	\centering
	\begin{subfigure}[t]{0.45\textwidth}
		\includegraphics[width=\textwidth]{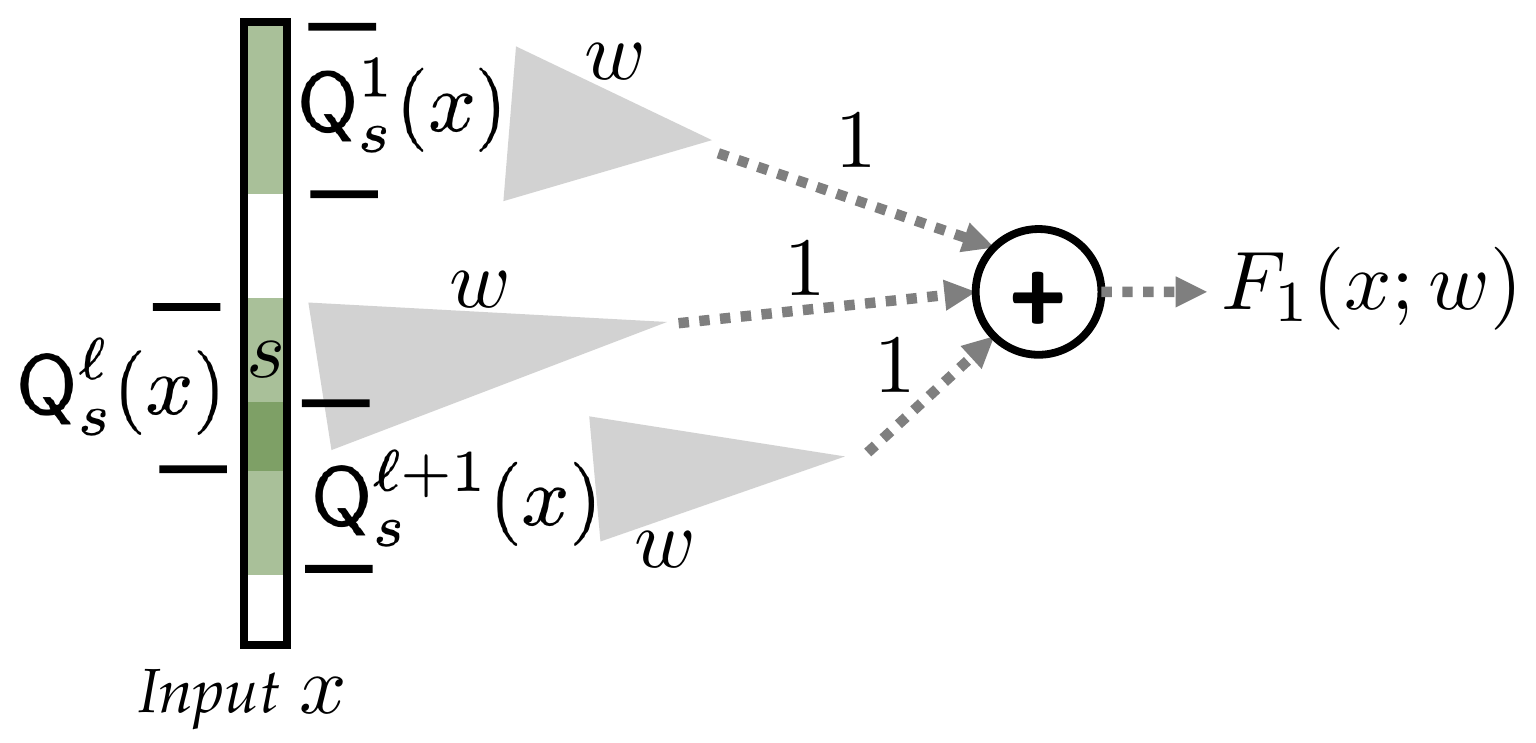}
		\caption{
		Prediction function formalized~\eqref{eqn:conv_filter_avg_pooling}.
		It consists of a convolutional filter followed by averaged pooling.
		The convolutional filter is unknown.
		}
		\label{fig:archi_filter}
	\end{subfigure}	
	\qquad
	\begin{subfigure}[t]{0.45\textwidth}
		\includegraphics[width=\textwidth]{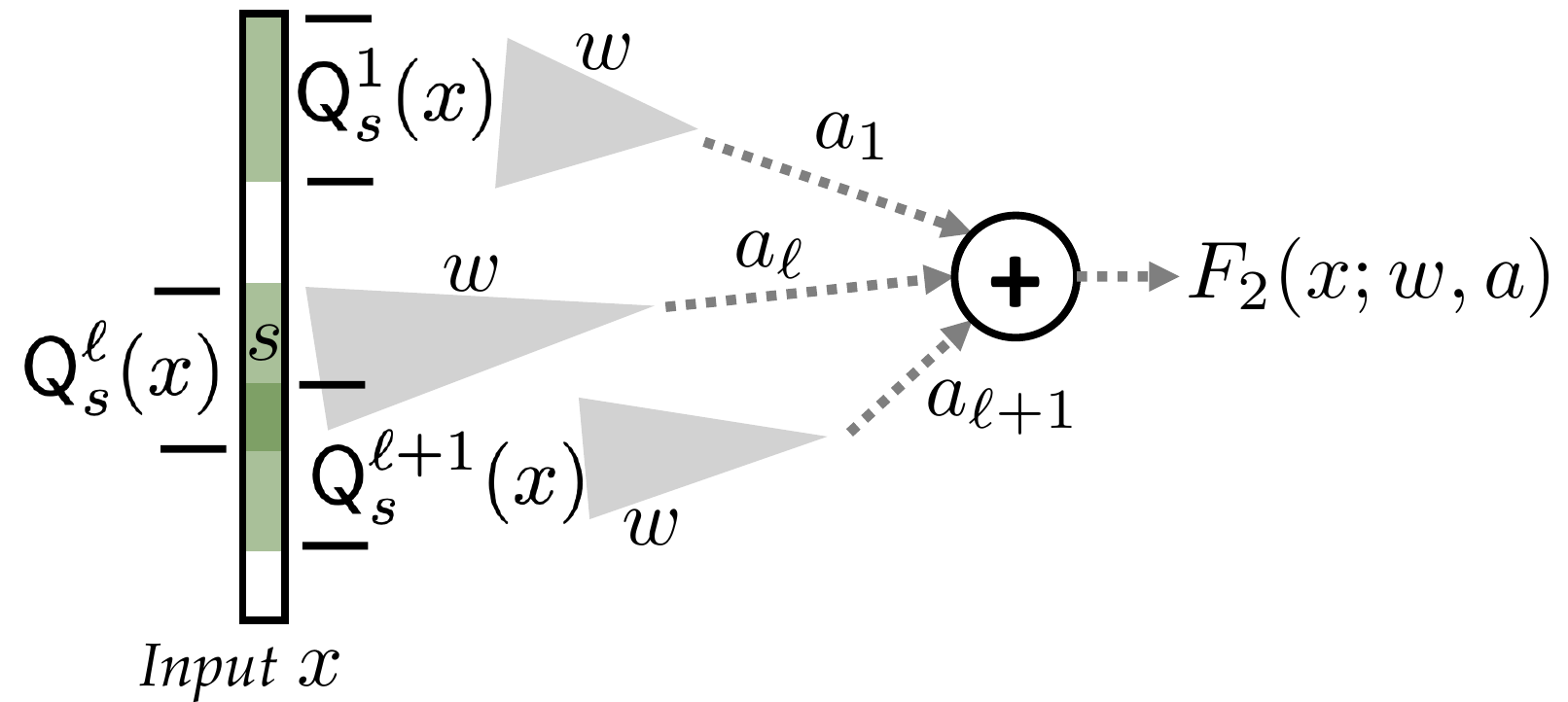}
		\caption{Prediction function formalized in~\eqref{eqn:two_layer}
		It consists of a convolutional filter followed by a linear prediction layer.
		Both layers are unknown.
		}
		\label{fig:archi_twolayer}
	\end{subfigure}	
	\quad
	\caption{CNN architectures that we consider in this paper. 
	}
	\label{fig:architecture}
\end{figure*}


The $i$-th labeled data point is denoted as $(X^i,Y^i)$, where $X^i$ is the input vector for the $i$-th data point and $Y_i\in\mathbb R$ represents its corresponding label.
The basic models we study in this paper are best abstracted in the form,
\begin{equation}
Y_i = F(X_i;\theta) + \xi_i,
\label{eq:model-general}
\end{equation}
where $F$ represents the network, $\theta\in\Theta$ are the underlying true parameters of the network, and 
$\{\xi_i\}$ are zero-mean random variables capturing the measurement noise.

\subsubsection{Convolutional neural networks with average pooling}
We consider convolutional neural networks (CNN) with vector inputs, represented by $x^i\in\mathbb R^d$.
The convolutional filter is assumed to be of size $m$, with weight vector $w\in\mathbb R^m$.
The filter is applied to different segments of the input vector $x^i$, with a \emph{stride} of $s$.
More specifically, the CNN computes the inner products of
\begin{equation}
w^\top \sP_s^0(x^i),\;\; w^\top\sP_s^1(x^i), \;\; \ldots, \;\; w^\top \sP_s^{\lfloor (d-m)/s\rfloor}(x^i),
\label{eq:cnn-inner-products}
\end{equation}
where $\sP_s^\ell(x^i)=(x_{\ell s+1}^i,\ldots,x_{\ell s+m}^i)$ is an $m$-dimensional segment of $x^i$.
Afterwards, \emph{average pooling} is used to aggregate the convolved inner products to obtain the final output:
\begin{equation}
Y^i = F^{\cavg}(X^i; w) +\xi_i = \sum_{\ell=0}^{\lfloor (d-m)/s\rfloor} w^\top \sP_s^\ell (x^i) + \xi_i.
\label{eqn:conv_filter_avg_pooling}
\end{equation}
A graphical illustration of the CNN model with average pooling is given in Fig.~\ref{fig:archi_filter}.
Throughout the remainder of the paper, in order 
simplify our analysis and notation we assume that both $d$ and $m$ are divisible by $s$, 
and consequently that $\lfloor (d-m)/s\rfloor=(d-m)/s$.

\subsubsection{Convolutional neural networks with weighted pooling}

In addition to CNNs with average pooling, we also consider CNNs with an additional unknown \emph{weighted} pooling layer,
making the model essentially a two-layer neural network.
To be more specific, building upon the convolutional inner products computed in Eq.~(\ref{eq:cnn-inner-products}),
the outputs of CNNs with weighted pooling can be modeled as
\begin{equation}
Y^i = F^{\cw}(X^i; w,a) +\xi_i = \sum_{\ell=0}^{\lfloor (d-m)/s\rfloor} a_\ell w^\top \sP_s^\ell (x^i) + \xi_i,
\label{eqn:two_layer}
\end{equation}
where $a=(a_0,a_1,\ldots,a_{\lfloor(d-m)/s\rfloor})\in\mathbb R^{\lfloor(d-m)/s\rfloor+1}$
is an unknown vector of the additional weighted pooling layer, and $\{\xi_i\}_{i=1}^n$ are noise variables.

A graphical illustration of the CNN model with weighted pooling is given in Fig.~\ref{fig:archi_twolayer}.
Again, we assume that both $d$ and $m$ are divisible by $s$, 
and therefore $\lfloor (d-m)/s\rfloor=(d-m)/s$.

\subsubsection{Recurrent neural networks}
The recurrent neural network (RNN) is assumed to have $r$ hidden units.
More specifically, each input element $x_t^i$ is associated with a latent representation $h_t^i\in\mathbb R^r$.
The network also has a pre-specified ``starting state'' $h^i = 0$.
The dynamics of the latent representation is modeled as: 
\begin{equation}
h_t^i = A h_{t-1}^i + B x_t^i, \;\;\;\;\;\; t=1, 2, \ldots, L,
\label{eq:rnn-transition}
\end{equation}
where $A\in\mathbb R^{r\times r}$ and $B\in\mathbb R^{r\times d}$ are unknown weight matrices to be learnt.
We assume a linear (identity) activation in~\eqref{eq:rnn-transition}. Finally, the regression response $Y^i$ corresponding to $X^i$ is modeled by an average pooling over the final state, i.e.:
\begin{equation}
Y^i = F^{\rnn}(X^i,A,B)+\xi_i = \vct 1^\top h_L^i + \xi_i,
\label{eq:rnn-model}
\end{equation}
where $\{\xi_i\}_{i=1}^n$ are noise variables.



\subsection{Least-squares estimation}

The estimators we consider throughout this paper are \emph{least-squares} estimators.
More specifically, on a training data set $\{X^i,Y^i\}_{i=1}^n$ generated from an underlying network $F$,
we solve the following problem to estimate the network parameters:
\begin{equation}
\hat\theta \in \argminnew_{\theta\in\Theta} \sum_{i=1}^n \big| Y^i - F(X^i;\theta)\big|^2.
\label{eq:ls}
\end{equation}
Note that this optimization problem might not have a unique solution
for networks $F^{\cw}$ or $F^{\rnn}$. For instance,
we may choose different scalings between $w$ and $a$ in $F^{\cw}$,
or exchange two hidden units in $F^{\rnn}$ when $r\geq 2$, and obtain the same function $F$ and objective value.
In such cases, any solution $\hat\theta$ leading to the optimal least-squares objective value can be chosen, and
our statistical guarantees apply to this estimate $\hat\theta$.

We remark that we only focus on the statistical rate of convergence for this estimator and we leave the analysis of computational complexity of solving the least squares problem as future work.
In our experiments, we simply use gradient descent to obtain an estimate. 


\subsection{Assumptions and minimax analysis}

We use \emph{minimax analysis} \citep{lehmann2006theory,tsybakov2009introduction,wasserman2013all} to understand the fundamental limit of estimating convolutional and recurrent neural networks.
We being by introducing two regularity assumptions imposed on the distributions of $\{X^i\}$ and $\{Y^i\}$:
\begin{enumerate}
\item[(A1)] (Sub-gaussian noise): $\{\xi_i\}_{i=1}^n$ are independent, centered sub-Gaussian random variables with sub-Gaussian parameters upper bounded by $\sigma^2$;
\item[(A2)] (Non-degenerate random design): there exists a centered underlying sub-Gaussian distribution $\mu$ with sub-Gaussian parameter $C$ over $\mathbb R^d$ such that $\{x^i\}_{i=1}^n$
 (for $F^{\cavg}$ and $F^{\cw}$) or $\{x_t^i\}_{i,t=1}^{n,L}$ (for $F^{\rnn}$) is i.i.d.~sampled from $P_X$;
 furthermore $c I \preceq \mathbb E_{\mu}[xx^\top] \preceq C I$ for some constants $0<c\leq C<\infty$.
\end{enumerate}
To resolve the issues of non-identifiability of the parameter $\theta$, we use the \emph{population prediction error} instead of the 
more classical parameter estimation error to characterize the information-theoretic sample-complexity of estimating convolutional or recurrent neural networks.
Given a parameter estimate $\hat\theta$ and the true underlying parameter $\theta$ of a neural network $F$,
the mean-square prediction error is defined as:
\begin{equation}
\mathrm{err}(\hat\theta,\theta) := \sqrt{\mathbb E_\mu|F(x;\theta)-F(x;\hat\theta)|^2},
\label{eq:err}
\end{equation}
where $\mu$ is the (unknown) underlying distribution defined in Assumption (A2).

To evaluate and benchmark the quality of the least-squares estimator, we adopt the minimax framework to characterize the fundamental hardness of the estimation problems we study in this paper.
The \emph{minimax risk} of prediction is defined as
\begin{equation}
\mathfrak M(n; F) := \inf_{\hat\theta}\sup_{\theta}\mathbb E_{\mu,\theta}\left[\mathrm{err}(\hat\theta,\theta)\right],
\label{eq:minimax}
\end{equation}
where the $\mathbb E_{\mu,\theta}$ notation summarizes the data generating process $\{X^i\}_{i=1}^n\overset{\text{i.i.d}}{\sim}\mu$, 
$Y^i=F(X^i;\theta)+\xi_i$,
and we use the notation $\mathfrak M(n;F)$ to emphasize that the minimax risk depends crucially on the specific type of networks to be estimated and the size of the training sample.
Although the minimax risk typically also depends on other problem parameters such as the input dimension, we suppress this dependency in the notation $\mathfrak M(n; F)$ to concisely present our results.


\section{Main results}
In this section, we present our main upper and lower bounds on the minimax risk.

\subsection{Upper bounds}\label{results-upper}


Throughout this section we assume the assumptions (A1) and (A2) hold.
We also suppress constants potentially depending on $c$ and $C$ (defined in Assumption A2) in the asymptotic $\lesssim$ notation.
We will establish the following upper bounds on the minimax prediction error of convolutional or recurrent neural networks.

\begin{theorem}
For $\delta\in(0,1/2)$ and sufficiently large\footnote{A detailed scalings of $n$ and other problem dependent parameters are given in the remarks immediately following Theorem \ref{thm:upper-bound}.} $n$,
with probability $1-\delta$ over the random draws of $\{x^i\}_{i=1}^n$ (for $F^{\cavg}$ and $F^{\cw}$) or $\{x_{t}^i\}_{i,t=1}^{n,L}$ (for $F^{\rnn}$), it holds that
\begin{eqnarray}
\mathfrak M(n; F^{\cavg}) &\lesssim& \sqrt{\frac{\sigma^2m\log d}{n}}; \label{eq:upper-bound-conv1}\\
\mathfrak M(n;F^{\cw}) &\lesssim& \sqrt{\frac{\sigma^2 \min\{d, m+(d/s)\times (m/s)\}\cdot \log d}{n}};\label{eq:upper-bound-conv2}\\
\mathfrak M(n; F^{\rnn}) &\lesssim& \sqrt{\frac{\sigma^2(d+L)\min\{r,d\}\log(Ld)}{n}}.
\label{eq:upper-bound-rnn}
\end{eqnarray}
\label{thm:upper-bound}
\end{theorem}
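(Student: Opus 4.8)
\emph{Proof proposal.} The plan is to prove all three bounds through one template — a one-sided basic inequality for the least-squares estimator, followed by a localized empirical-process bound — whose only model-dependent ingredient is a metric-entropy estimate supplied by the structural Lemmas~\ref{lem:covering-cavg},~\ref{lem:covering-cw} and~\ref{lem:covering-rnn}. Write $f_\theta:=F(\cdot;\theta)$, let $\theta_0$ denote the truth, set $\|g\|_n^2:=\frac1n\sum_{i=1}^n g(X^i)^2$, and let $\mathcal G:=\{f_\theta-f_{\theta_0}:\theta\in\Theta\}$. First, comparing the least-squares value at $\hat\theta$ with that at $\theta_0$ and substituting $Y^i=f_{\theta_0}(X^i)+\xi_i$ gives
\[
\|f_{\hat\theta}-f_{\theta_0}\|_n^2\;\le\;\frac2n\sum_{i=1}^n\xi_i\big(f_{\hat\theta}(X^i)-f_{\theta_0}(X^i)\big)\;\le\;\frac2n\!\!\sup_{\substack{g\in\mathcal G\\ \|g\|_n\le\|f_{\hat\theta}-f_{\theta_0}\|_n}}\!\!\Big|\sum_{i=1}^n\xi_i\,g(X^i)\Big|.
\]
Conditionally on the design, the right-hand side is a sub-Gaussian process (in $\{\xi_i\}$, using (A1)) indexed by the \emph{localized} class $\mathcal G_t:=\{g\in\mathcal G:\|g\|_n\le t\}$; localization, rather than a global complexity, is exactly what makes the final rate independent of the magnitude of $\theta_0$, as advertised in Section~\ref{sec:rel}. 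A standard peeling argument over dyadic $t$ \citep{geer2000empirical} then yields, with high probability over $\{\xi_i\}$, that $\|f_{\hat\theta}-f_{\theta_0}\|_n\lesssim\delta^\star$, the critical radius solving $\delta^2\asymp\Phi(\delta)+\sigma^2\log(1/\delta)/n$ with $\Phi(t):=\frac1n\mathbb E_\xi\sup_{g\in\mathcal G_t}|\sum_i\xi_i g(X^i)|$.

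Second, I would bound the local complexity by Dudley's entropy integral, $\Phi(t)\lesssim\frac{\sigma}{\sqrt n}\int_0^t\sqrt{\log N(\epsilon,\mathcal G_t,\|\cdot\|_n)}\,\ud\epsilon$ (truncating the integral at an inverse-polynomial scale, which is harmless), reducing everything to covering numbers, which the structural lemmas control by identifying each network with a low-dimensional family of \emph{linear} predictors. For $F^{\cavg}$, average pooling collapses the net to $F^{\cavg}(x;w)=w^\top\bar x$ with $\bar x:=\sum_\ell\sP_s^\ell(x)\in\mathbb R^m$, so $\mathcal G_t$ sits in an $m$-dimensional subspace and Lemma~\ref{lem:covering-cavg} gives $\log N\lesssim m\log(\kappa t/\epsilon)$ with $\kappa=\|\mathbb E[\bar x\bar x^\top]\|_{\op}$ polynomial in $d$ by (A2). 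For $F^{\cw}$, $F^{\cw}(x;w,a)=\langle wa^\top,M(x)\rangle$ where $M(x)$ has columns $\sP_s^\ell(x)$, equivalently the induced coefficient vector $c(w,a)\in\mathbb R^d$ is a stride-$s$ convolution of $a$ with $w$; Lemma~\ref{lem:covering-cw} controls the metric entropy of this bilinear/strided-convolution class — the delicate step — and compares it against the trivial bound for an unrestricted $d$-dimensional linear class, giving $\log N\lesssim\min\{d,\ m+(d/s)(m/s)\}\log(dt/\epsilon)$. For $F^{\rnn}$, unrolling the recursion gives $F^{\rnn}(x;A,B)=\vct 1^\top\sum_{t=1}^L A^{L-t}Bx_t=\langle C(A,B),[x_1|\cdots|x_L]\rangle$ with $C(A,B)=[B^\top(A^\top)^{L-1}\vct 1\,|\,\cdots\,|\,B^\top\vct 1]\in\mathbb R^{d\times L}$, which factors through $B^\top\in\mathbb R^{d\times r}$ and so has rank $\le\min\{r,d\}$ — and differences of two such matrices still have rank $O(\min\{r,d\})$; Lemma~\ref{lem:covering-rnn} covers this rank-constrained set of coefficient matrices \emph{directly}, giving $\log N\lesssim(d+L)\min\{r,d\}\,\log(Ldt/\epsilon)$, crucially with no dependence on $\|A\|$ or $\|B\|$ because the scale of $C(A,B)-C(A_0,B_0)$ is pinned down only by the localization radius $t$ and the powers $A^{L-t}$ are never expanded coordinatewise.

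Third, I would assemble the pieces. In each case the entropy bound has the form $\log N(\epsilon,\mathcal G_t,\|\cdot\|_n)\lesssim D\log(\kappa t/\epsilon)$ with effective dimension $D$ equal to $m$, $\min\{d,m+(d/s)(m/s)\}$, or $(d+L)\min\{r,d\}$ respectively and $\kappa$ polynomial in $(d,L)$; then $\int_0^t\sqrt{D\log(\kappa t/\epsilon)}\,\ud\epsilon\lesssim t\sqrt{D\log\kappa}$, so $\Phi(t)\lesssim\sigma t\sqrt{D\log\kappa/n}$ and the fixed point is $\delta^\star\asymp\sigma\sqrt{D\log\kappa/n}$. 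To pass from the empirical prediction error to $\mathrm{err}(\hat\theta,\theta_0)=\|f_{\hat\theta}-f_{\theta_0}\|_{L^2(\mu)}$ I would invoke (A2) to get a uniform one-sided norm equivalence $\|g\|_{L^2(\mu)}^2\le 2\|g\|_n^2$ over all of $\mathcal G$ once $n\gtrsim D\log\kappa$ — this is the ``sufficiently large $n$'' hypothesis — via a covering bound plus a Bernstein inequality, using that $g(X)^2$ is sub-exponential when $\mu$ is sub-Gaussian. A union bound over this event and the empirical-process event gives the claimed probability, and combining the two yields $\mathrm{err}(\hat\theta,\theta_0)\lesssim\delta^\star\lesssim\sqrt{\sigma^2 D\log\kappa/n}$, matching~\eqref{eq:upper-bound-conv1},~\eqref{eq:upper-bound-conv2} and~\eqref{eq:upper-bound-rnn}.

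The empirical-process machinery is routine; the main obstacle will be the structural covering lemmas. For $F^{\cw}$ one must extract the correct effective dimension $m+(d/s)(m/s)$ from the weight sharing — the naive parameter count is not what the self-overlapping convolution produces — and for $F^{\rnn}$ the difficulty is to cover the image of the highly nonlinear map $(A,B)\mapsto C(A,B)$ with a bound that is simultaneously low-rank-sharp and entirely free of the spectral norms of $A$ and $B$; reparametrizing by the coefficient matrix rather than by $(A,B)$ is the device that makes this possible. A secondary technicality is the empirical-to-population transfer for $F^{\rnn}$, where the design is $nL$ feature vectors dependent across the time index $t$, so the concentration step must be organized over the appropriate blocks.
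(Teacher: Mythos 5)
Your proposal is correct and follows essentially the same route as the paper: the least-squares basic inequality, a localized empirical-process bound via Dudley's entropy integral, covering numbers obtained from the low-dimensional linear-subspace structure of the induced coefficient vectors (the content of Lemmas~\ref{lem:covering-cavg},~\ref{lem:covering-cw},~\ref{lem:covering-rnn}), and a restricted-eigenvalue/norm-equivalence step to pass between the empirical and population $L^2$ norms. The only cosmetic difference is that you localize by peeling to a critical radius, whereas the paper exploits that each parameter class is a cone and self-normalizes by $\|\hat\theta-\theta\|_X$ (Lemma~\ref{lem:star}); both devices yield the same scale-free $O(1/\sqrt n)$ rate.
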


\noindent \textbf{Remarks}:
\begin{enumerate}
\item All upper bounds are conditioned on the random draws of $\{x^i\}_{i=1}^n$ or $\{x^i_t\}_{i,t=1}^{n,L}$ (i.e, with expectation taken over the randomness of the noise variables $\{\xi_i\}_{i=1}^n$),
and are attained by the least-squares estimator defined in~\eqref{eq:ls}.

\item The number of training data points $n$ is ``sufficiently large'' under the context of Theorem \ref{thm:upper-bound} if it satisfies:
\begin{eqnarray*}
\text{For $F^{\cavg}$}: & & n \gtrsim c^{-1}C^2m\cdot \log(c^{-1}C d\log(n/\delta))\log^2(n/\delta),\\
\text{For $F^{\cw}$}: && n \gtrsim  c^{-1}C^2 \min\{d,m+(m/s)\times (d/s)\}\cdot \log(c^{-1}Cd\log(n/\delta))\log^2(n/\delta),\\
\text{For $F^{\rnn}$:} && n \gtrsim c^{-1}C^2(d+L)\min\{d,r\}\cdot \log(c^{-1}C Ld\log(n/\delta))\log^2(n/\delta).
\end{eqnarray*}
\end{enumerate}
All upper bounds in Theorem \ref{thm:upper-bound} have convergence rates $O(1/\sqrt{n})$ for the standardized mean-square error (see Eq.~(\ref{eq:err})),
which are in contrast to previous works based on concentration inequalities
yielding mostly $1/n^{1/4}$ convergence rates~\citep{bartlett2017spectrally,neyshabur2015norm}.
Furthermore, all upper bounds in Theorem \ref{thm:upper-bound} are \emph{scale-invariant}, because they do not depend in any way 
on the magnitude of the network weights $w, a,A$ or $B$.

Omitting logarithmic terms, the results in Theorem \ref{thm:upper-bound} also match the intuition of ``parameter counts'', 
which simply counts the number of unknown weight parameters in a neural network.
More specifically, for the $F^{\cavg}$ network, there are $m$ unknown weight parameters ($w\in\mathbb R^m$),
which matches the $\tilde O(\sqrt{m/n})$ rate in Eq.~(\ref{eq:upper-bound-conv1});
for the $F^{\cw}$ network, there are $(m+d/s)$ unknown weight parameters ($w\in\mathbb R^m$ and $a\in\mathbb R^{d/s}$),
which matches the $\tilde O(\sqrt{(\min\{d,m+(m/s)\times(d/s)\})/n})$ rate in Eq.~(\ref{eq:upper-bound-conv2})
when the stride $s$ is on the same order of the filter size $m$ (i.e., $m/s=O(1)$);
for the $F^{\rnn}$ network, there are $(dr+r^2)$ unknown weight parameters ($A\in\mathbb R^{r\times r}$ and $B\in\mathbb R^{r\times d}$),
which matches the $\tilde O(\sqrt{((d+L)\min\{d,r\})/n})$ rate in Eq.~(\ref{eq:upper-bound-rnn})
when $r\ll L\ll d$, a common setting in natural language processing applications \citep{mikolov2010recurrent}.


The three upper bound results in Theorem \ref{thm:upper-bound} share the same proof framework,
yet with different covering number analysis tailored to each neural network structure separately.
The proof framework is built upon the probability tool of \emph{self-normalized empirical processes}, 
which (with high-probability) upper bounds the supremum of an empirical process with suitable normalization.
Such upper bounds would eventually depend on the covering numbers of self-normalized parameter spaces,
which we upper bound for different network structures ($F^{\cavg}$, $F^{\cw}$ and $F^{\rnn}$) separately.

\subsection{Lower bounds}\label{sec:results-lower}

To complement our results in Theorem \ref{thm:upper-bound}, 
we prove the following theorem which establishes \emph{lower bounds} on the minimax rates $\mathfrak M(n;\cdot)$,
showing the information-theoretic limits of sample-complexity that no estimator could violate.

\begin{theorem}
Suppose $\{x^i\}_{i=1}^n\overset{i.i.d}{\sim} \mathcal N(0,I)$ for $F^{\cavg}, F^{\cw}$ and $\{x_t^i\}_{i,t=1}^{n,L}
\overset{i.i.d.}{\sim} \mathcal N(0,1)$ for $F^{\rnn}$.
Suppose also $\{\xi_i\}_{i=1}^n\overset{i.i.d.}{\sim}\mathcal N(0,\sigma^2)$.
Then there exists a universal constant $C>0$ such that
\begin{eqnarray}
\mathfrak M(n;F^{\cavg}) &\geq& C\sqrt{\frac{\sigma^2 m}{n}};
\label{eq:lower-bound-conv1}\\
\mathfrak M(n;F^{\cw}) &\geq& C\sqrt{\frac{\sigma^2(m+d/s)}{n}};\label{eq:lower-bound-conv2}\\
\mathfrak M(n;F^{\rnn}) &\geq& C\sqrt{\frac{\sigma^2 \min\{rd,Ld\}}{n}}.\label{eq:lower-bound-rnn}
\end{eqnarray}
\label{thm:lower-bound}
\end{theorem}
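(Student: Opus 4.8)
The plan is to reduce all three lower bounds to a single standard fact: the sharp minimax rate for isotropic Gaussian linear regression. Concretely, I will use the following lemma, provable by a Fano/Varshamov--Gilbert argument as in \citep{tsybakov2009introduction}: if $Y^i=\langle\beta,z^i\rangle+\xi_i$ with $z^i\overset{iid}{\sim}\nml(0,I_k)$ and $\xi_i\overset{iid}{\sim}\nml(0,\sigma^2)$ independent, then $\inf_{\hat\beta}\sup_{\beta\in\R^k}\E\|\hat\beta-\beta\|_2\ge c\,\sigma\sqrt{k/n}$ for a universal $c>0$ (when $n<k$ the left side is $+\infty$ and the bound is vacuous; the content is $n\gtrsim k$). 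For each network I exhibit a sub-family $\Theta_0$ of parameters on which (i) the response is an \emph{exactly} linear functional $\langle\mathbf c(\theta),\mathbf x\rangle$ of a standard Gaussian vector $\mathbf x$, (ii) the prediction error $\mathrm{err}(\hat\theta,\theta)=\sqrt{\E_\mu|F(x;\theta)-F(x;\hat\theta)|^2}$ equals $\|\mathbf c(\theta)-\mathbf c(\hat\theta)\|_2$, up to an invertible linear reparametrization, and (iii) as $\theta$ ranges over $\Theta_0$, $\mathbf c(\theta)$ sweeps a coordinate subspace of dimension equal to the claimed parameter count. Since $\mathfrak M(n;F)\ge\inf_{\hat\theta}\sup_{\theta\in\Theta_0}\E\,\mathrm{err}(\hat\theta,\theta)$ and $\mathrm{err}$ is the $L^2(\mu)$ pseudometric between prediction functions (so it obeys the triangle inequality needed by Fano), invoking the lemma on $\Theta_0$ finishes each case. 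Gaussianity of $\xi$ is what makes this clean: the KL divergence between two $n$-sample response laws equals exactly $\tfrac{n}{2\sigma^2}\|\mathbf c(\theta)-\mathbf c(\theta')\|_2^2$, independent of the ambient dimension.

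For $F^{\cavg}$: with $z(x)=\sum_{\ell=0}^{(d-m)/s}\sP_s^\ell(x)=Mx\in\R^m$, where $M$ is the $m\times d$ matrix with rows $M_{j\cdot}=\sum_\ell e_{\ell s+j}^\top$, we have $F^{\cavg}(x;w)=w^\top Mx$ and $Y^i=w^\top Mx^i+\xi_i$. Within each residue class mod $s$ the rows of $M$ are $m/s$ distinct shifts of a length-$((d-m)/s+1)$ interval indicator, hence linearly independent by the obvious triangular structure; so $M$ has full row rank and $\Sigma_z:=MM^\top\succ0$. Putting $u=\Sigma_z^{1/2}w$ and $\tilde z^i=\Sigma_z^{-1/2}Mx^i\sim\nml(0,I_m)$, the problem becomes isotropic regression in $\R^m$ with $\mathrm{err}(\hat w,w)=\|u-\hat u\|_2$, and the lemma gives $\mathfrak M(n;F^{\cavg})\gtrsim\sigma\sqrt{m/n}$.

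For $F^{\cw}$ I combine two sub-families. (a) Freeze the pooling weights at $a=e_0$, so only $\ell=0$ contributes: $Y^i=w^\top\sP_s^0(x^i)+\xi_i$ with $\sP_s^0(x^i)\sim\nml(0,I_m)$. The truth's prediction is $x\mapsto\langle w',x\rangle$ with $w'\in\R^d$ supported on the first $m$ coordinates, so this is $m$-dimensional isotropic regression (with a $w'$-independent KL formula), giving $\mathfrak M\gtrsim\sigma\sqrt{m/n}$. (b) Freeze the filter at $w=e_1\in\R^m$: $Y^i=\sum_{\ell}a_\ell x^i_{\ell s+1}+\xi_i$, and the coordinates $x^i_{\ell s+1}$, $\ell=0,\dots,(d-m)/s$, are distinct standard Gaussians, i.e.\ $\big((d-m)/s+1\big)$-dimensional isotropic regression, giving $\mathfrak M\gtrsim\sigma\sqrt{(d/s)/n}$. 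Combining the two, $\mathfrak M(n;F^{\cw})\gtrsim\sigma\sqrt{(m+d/s)/n}$. For $F^{\rnn}$, unrolling the recursion yields $Y^i=\sum_{t=1}^L\langle c_t,x^i_t\rangle+\xi_i$ with $c_t=B^\top(A^\top)^{L-t}\vct1\in\R^d$; stacking $\mathbf x^i=(x^i_1,\dots,x^i_L)\sim\nml(0,I_{Ld})$ and $\mathbf c=(c_1,\dots,c_L)$, we get $\mathrm{err}=\|\mathbf c-\hat{\mathbf c}\|_2$. Fix $A$ so that $A^\top$ is the nilpotent up-shift (index $r$) with $\vct1,A^\top\vct1,\dots,(A^\top)^{r-1}\vct1$ linearly independent. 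If $L\le r$, the map $B\mapsto(c_1,\dots,c_L)$ is onto $\R^{Ld}$, so $\mathfrak M\gtrsim\sigma\sqrt{Ld/n}$; if $L>r$, then $c_t=0$ for $t\le L-r$ while $B\mapsto(c_{L-r+1},\dots,c_L)$ is a linear bijection onto $\R^{rd}$, so $\mathfrak M\gtrsim\sigma\sqrt{rd/n}$. In either regime $\mathfrak M(n;F^{\rnn})\gtrsim\sigma\sqrt{\min\{rd,Ld\}/n}$.

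The base lemma is classical, and I would dispatch it quickly: take a Varshamov--Gilbert set $\{v_j\}\subset\{0,1\}^k$ of size $2^{k/8}$ with pairwise Hamming distances $\ge k/8$, set $\beta_j=\tau v_j$ with $\tau^2\asymp\sigma^2/n$ so that the maximal pairwise KL $\tfrac{n}{2\sigma^2}\max_{j,j'}\|\beta_j-\beta_{j'}\|_2^2\le\tfrac{n\tau^2k}{2\sigma^2}$ is a small fraction of $\log2^{k/8}$ while the packing radius is $\gtrsim\tau\sqrt k\asymp\sigma\sqrt{k/n}$, and apply Fano; for $k=O(1)$ a two-point Le Cam argument gives the same $\sigma/\sqrt n$ scaling. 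The genuinely network-specific work — and the step I expect to require the most care — is constructing the sub-families in step (iii), in particular the nilpotent choice of $A$ for the RNN, which is exactly what "zeroes out" the first $(L-r)d$ coordinates of $\mathbf c$ precisely in the regime $L>r$ where the $rd$ entries of $B$ can no longer all be identified; one must also verify the induced-estimator reduction (that $\hat\theta\mapsto\mathbf c(\hat\theta)$ is a legitimate estimator of the regression coefficient, which holds since every admissible prediction function is linear in the inputs) and the full-rank facts about $M$ and about $\{(A^\top)^j\vct1\}$.
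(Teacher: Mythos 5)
Your proposal is correct and follows essentially the same route as the paper: reduce each network to a structured linear model, exhibit a sub-family on which the linearized coefficient vector sweeps a free subspace of the claimed dimension (the two frozen sub-families for $F^{\cw}$ are identical to the paper's), and apply a Varshamov--Gilbert/Fano packing bound with the exact Gaussian KL formula. The only substantive local difference is in the $F^{\rnn}$ construction, where you take $A$ to be a nilpotent shift so that $\vct 1, A^\top\vct 1,\ldots,(A^\top)^{r-1}\vct 1$ are independent, whereas the paper uses a diagonal $A$ with distinct entries and a Vandermonde argument --- both are valid, and yours makes the $\min\{rd,Ld\}$ truncation slightly more transparent.
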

\noindent \textbf{Remarks}:
\begin{enumerate}
\item Theorem \ref{thm:lower-bound} establishes lower bounds for the \emph{worst-case} prediction error of $F^{\cavg},F^{\cw}$ and $F^{\rnn}$
for \emph{any} learning algorithm that takes as input $n$ labeled data points and outputs a prediction network $\hat F$.
This is clear from the definition of the minimax rates $\mathfrak M(n;\cdot)$.
\item While Theorem \ref{thm:lower-bound} considers isotropic Gaussian $\{x^i\}$, $\{x^i_t\}$
and Gaussian noises $\{\xi_i\}$, 
this should \emph{not} be interpreted as a limitation because the isotropic Gaussian data points and noises are a special case of the general learning problem
covered by the upper bounds in Theorem \ref{thm:upper-bound}. 
Hence, a lower bound for the isotropic Gaussian case implies a lower bound for the more general case.
\end{enumerate}

The lower bound results in Theorem \ref{thm:lower-bound} also corroborate the ``parameter counting'' intuition,
and match the upper bound results in Theorem \ref{thm:upper-bound} up to logarithmic factors, under common scenarios and settings.
More specifically, for the $F^{\cavg}$ network, Eq.~(\ref{eq:lower-bound-conv1}) matches Eq.~(\ref{eq:upper-bound-conv1}) up to $O(\sqrt{\log d})$ terms;
for the $F^{\cw}$ network, Eq.~(\ref{eq:lower-bound-conv2}) matches Eq.~(\ref{eq:upper-bound-conv2}) up to $O(\sqrt{\log d})$ terms,
when $s=\Omega(m)$ (and therefore $m/s=O(1)$);
for the $F^{\rnn}$ network, Eq.~(\ref{eq:lower-bound-rnn}) matches Eq.~(\ref{eq:upper-bound-rnn}) up to $O(\sqrt{\log(Ld)})$ terms,
provided that $r=O(L)$ and $L=O(d)$.

To prove Theorem \ref{thm:lower-bound}, 
we reduce it via standard results for lower bounding the minimax risk 
to the problem of finding ``free segments'' in certain structured linear models (Lemma \ref{lem:free-vary}).
Such ``free segments'' are then analyzed in a case-by-case manner for the three networks structures $F^{\cavg},F^{\cw},F^{\rnn}$ considered. 




\section{Proofs of upper bounds}\label{sec:proof-upper}

While Theorem \ref{thm:upper-bound} technically consists of three different upper bounds,
their proofs are similar to each other and therefore we decide to state the proofs in a unified framework,
presented in this section.
Some technical proofs are also deferred to the appendix for a cleaner presentation.

The proof can be roughly divided into three parts.
In the first part, we use the standard statistical analysis of least-squares estimators,
which uses the ``basic inequality'' to translate the task of upper bounding prediction error into
upper bounding the covering number of a suitably self-normalized empirical process \citep{geer2000empirical}.
In the second part, we use restricted eigenvalue arguments similar to \citep{bickel2009simultaneous} to further simplify 
the self-normalized parameter class constructed in the first step.
Finally, in the last part which is the most important step, we use a novel ``linear subspace'' argument
to derive a relatively tight upper bound on the covering number of the desired self-normalized parameter class.

\subsection{Structured linear models and the basic inequality}

Our first observation is that all three models ($F^{\cavg}$, $F^{\cw}$ and $F^{\rnn}$)
are \emph{structured linear models},
meaning that they can be written as a linear regression model with additional structures imposed on the linear regressors.
More specifically, we have the following proposition:
\begin{proposition}
Define vectors $z^i$ and $\theta$ as following:
\begin{enumerate}
\item For $F^{\cavg}$, $z^i := X^i$ and $\theta := \sum_{\ell=0}^{r-1}\sS_s^\ell(w)$, where
$$
\sS_s^\ell(w) = [\underbrace{0, \ldots, 0}_{\ell s\;\; \text{zeros}}, w_{1}, \ldots, w_{m}, 0, \ldots, 0] \in \mathbb R^d;
$$
\item For $F^{\cw}$, $z^i := X^i$ and $\theta := \sum_{\ell=0}^{r-1}a_\ell\sS_s^\ell(w)$;
\item For $F^{\rnn}$, $z^i := (x_1^i\; x_2^i\; \ldots \;x_L^i) \in\mathbb R^{Ld}$ and $\theta := (\vct 1^\top A^{L-1}B~~ \;\;\vct 1^\top A^{L-2}B~~\; \;\ldots\;\; ~~\vct 1^\top B)$.
\end{enumerate}
Then it holds for any $F\in\{F^{\cavg}, F^{\cw}, F^{\rnn}\}$ that $F(X^i;\theta) \equiv \langle z^i,\theta\rangle$.
\label{prop:structured-linear-regression}
\end{proposition}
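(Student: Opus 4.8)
The plan is to verify the claimed identity $F(X^i;\theta)\equiv\langle z^i,\theta\rangle$ separately for each of the three architectures, in each case by substituting the stated definitions of $z^i$ and $\theta$ into the network's defining equation and performing an elementary bookkeeping computation. In each case the key is to recognize that the weight-sharing structure of the network — a filter slid across segments of the input, or a recurrence unrolled over time — is exactly a way of writing an inner product in a higher-dimensional ambient space with a constrained (structured) coefficient vector.

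First I would treat $F^{\cavg}$. Here the defining equation is $F^{\cavg}(X^i;w)=\sum_{\ell=0}^{(d-m)/s} w^\top \sP_s^\ell(x^i)$ where $\sP_s^\ell(x^i)=(x^i_{\ell s+1},\ldots,x^i_{\ell s+m})$. The observation is that $w^\top\sP_s^\ell(x^i)=\langle \sS_s^\ell(w), x^i\rangle$, since $\sS_s^\ell(w)$ is the zero-padded embedding of $w$ into $\mathbb R^d$ starting at coordinate $\ell s+1$, so that the only nonzero coordinates of $\sS_s^\ell(w)$ align precisely with the coordinates $x^i_{\ell s+1},\ldots,x^i_{\ell s+m}$ picked out by $\sP_s^\ell$. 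Summing over $\ell$ and using linearity of the inner product in its first argument gives $\sum_\ell \langle \sS_s^\ell(w),x^i\rangle=\langle \sum_\ell \sS_s^\ell(w), x^i\rangle=\langle\theta,z^i\rangle$, which is the claim (with $r=(d-m)/s+1$ the number of segments). The case $F^{\cw}$ is identical except that each term carries the scalar weight $a_\ell$, so one pulls $a_\ell$ inside the inner product to get $a_\ell w^\top\sP_s^\ell(x^i)=\langle a_\ell\sS_s^\ell(w),x^i\rangle$ and then sums.

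The RNN case $F^{\rnn}$ requires unrolling the recurrence. Starting from $h^i_0=0$ and $h^i_t=Ah^i_{t-1}+Bx^i_t$, an easy induction on $t$ gives the closed form $h^i_t=\sum_{j=1}^{t}A^{t-j}Bx^i_j$; in particular $h^i_L=\sum_{j=1}^{L}A^{L-j}Bx^i_j$. Then $F^{\rnn}(X^i;A,B)=\vct 1^\top h^i_L=\sum_{j=1}^{L}\vct 1^\top A^{L-j}Bx^i_j=\sum_{j=1}^L\langle (\vct 1^\top A^{L-j}B)^\top, x^i_j\rangle$. Reading off the blocks, the vector $\theta=(\vct 1^\top A^{L-1}B\;\;\vct 1^\top A^{L-2}B\;\;\ldots\;\;\vct 1^\top B)\in\mathbb R^{Ld}$ and $z^i=(x^i_1\;x^i_2\;\ldots\;x^i_L)\in\mathbb R^{Ld}$ have matching block structure, so the sum of per-block inner products equals the full inner product $\langle z^i,\theta\rangle$, as claimed. (One should be slightly careful that the ordering convention on the blocks of $\theta$ matches that of $z^i$: the $j$-th block of $z^i$ is $x^i_j$ and must be paired with the coefficient block $\vct 1^\top A^{L-j}B$, which is the $j$-th block of $\theta$ as written.)

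None of the three verifications presents a genuine obstacle — this proposition is a definitional restatement whose purpose is to set up the structured-linear-model machinery used later. The only place requiring any care is the RNN unrolling: establishing the closed form $h^i_L=\sum_{j=1}^{L}A^{L-j}Bx^i_j$ by induction, and then correctly matching up the block indexing of $\theta$ with that of $z^i$ so the signs of the matrix powers line up. I would write out the induction step explicitly ($h^i_{t+1}=Ah^i_t+Bx^i_{t+1}=A\sum_{j=1}^t A^{t-j}Bx^i_j+Bx^i_{t+1}=\sum_{j=1}^{t+1}A^{t+1-j}Bx^i_j$) and then simply observe the block-by-block correspondence, which completes the proof.
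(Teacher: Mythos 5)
Your verification is correct and is exactly the "definitions and elementary algebra" argument the paper has in mind — the paper simply omits the proof, stating that the proposition is easily verified, and your zero-padding identity for $F^{\cavg}$, $F^{\cw}$ and the unrolled recurrence $h_L^i=\sum_{j=1}^{L}A^{L-j}Bx_j^i$ for $F^{\rnn}$ supply precisely that verification, with the block indexing of $\theta$ and $z^i$ matched correctly.
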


Proposition \ref{prop:structured-linear-regression} is easily verified using definitions and elementary algebra.
For notational simplicity, we will also use $D$ to denote the dimension of the structured linear model induced by certain types of neural networks.
In particular, for $F^{\cavg},F^{\cw}$ we have $D=d$, and for $F^{\rnn}$ we have $D=Ld$.


Let $\hat\theta\in\mathbb R^D$ be the least-squares estimator on training data $\{X^i,Y^i\}_{i=1}^n$.
Define the \emph{empirical norm} of and $D$-dimensional vector $\vartheta$ as
\begin{equation}
\|\vartheta\|_X^2 := \frac{1}{n}\sum_{i=1}^n \big|\langle z^i, \vartheta\rangle\big|^2.
\label{eq:empirical-norm}
\end{equation}
Because $\hat\theta$ minimizes the least-squares objective as defined in Eq.~(\ref{eq:ls}), we have
$$
\frac{1}{n}\sum_{i=1}^n (Y_i - \langle z^i, \hat\theta\rangle)^2 \leq \frac{1}{n}\sum_{i=1}^n (Y_i - \langle z^i, \theta\rangle)^2.
$$
Because $Y_i = \langle z^i,\theta\rangle + \xi_i$, the above inequality is reduced to
$$
\frac{1}{n}\sum_{i=1}^n (\langle z^i, \theta-\hat\theta\rangle+\xi_i)^2 \leq \frac{1}{n}\sum_{i=1}^n \xi_i^2.
$$
Re-arranging terms and canceling the $\sum_{i=1}\xi_i^2/n$ on both sides of the above inequality, we obtain
\begin{equation}
\|\hat\theta - \theta\|_X^2 \leq \frac{2}{n}\sum_{i=1}^n \xi_i\langle z^i, \hat\theta-\theta\rangle.
\label{eq:basic-ineq}
\end{equation}


\subsection{Self-normalized emprical process and the Dudley's integral}

Let $\Theta\subseteq\mathbb R^{D}$ be the parameter set.
That is, a $D$-dimensional vector $\theta$ belongs to $\Theta$ if and only if there exist 
a parameter configuration yielding $\theta$ in the structured linear model defined in Proposition \ref{prop:structured-linear-regression}. 
Define the \emph{self-normalized} parameter set $\overline\Theta_X$ as
\begin{equation}
\overline\Theta_X := \left\{\phi=\theta-\theta': \theta,\theta'\in\Theta, \|\phi\|_X\leq 1\right\}.
\label{eq:self-normalized-X}
\end{equation}
Define $\mathbb G_X(\phi) := \frac{1}{n}\sum_{i=1}^n \xi_i\langle z^i,\phi\rangle$
as the empirical process associated with $\phi$.
We have the following lemma:
\begin{lemma}
For the parameter sets $\Theta$ defined in Proposition~\ref{prop:structured-linear-regression} we have that:
\begin{align*}
\sup_{\widehat{\theta} \in \Theta} \sum_{i=1}^n \xi_j \langle z^i, \hat\theta-\theta\rangle \leq \|\widehat{\theta} - \theta\|_X 
\sup_{\phi \in \Theta_X} \sum_{i=1}^n \xi_j \langle z^i, \phi\rangle.
\end{align*}
\label{lem:star}
\end{lemma}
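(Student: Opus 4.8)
The plan is to reduce the lemma to a single structural fact: each parameter set $\Theta$ appearing in Proposition~\ref{prop:structured-linear-regression} is a \emph{cone}, i.e.\ $t\Theta\subseteq\Theta$ for every $t\ge 0$. Granting this, the proof is a one-line normalization. Fix any $\widehat\theta\in\Theta$ (in the application this is the least-squares estimator). If $\|\widehat\theta-\theta\|_X=0$, the left-hand side vanishes and the right-hand side is nonnegative because $0=\theta-\theta\in\overline\Theta_X$; so assume $\|\widehat\theta-\theta\|_X>0$ and set $\phi:=(\widehat\theta-\theta)/\|\widehat\theta-\theta\|_X$. Since $\Theta$ is a cone, both $\widehat\theta/\|\widehat\theta-\theta\|_X$ and $\theta/\|\widehat\theta-\theta\|_X$ lie in $\Theta$, so $\phi$ is a difference of two elements of $\Theta$ with $\|\phi\|_X=1$, hence $\phi\in\overline\Theta_X$ by~\eqref{eq:self-normalized-X}. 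Linearity of $\langle z^i,\cdot\rangle$ then gives
\[
\sum_{i=1}^n \xi_i\langle z^i,\widehat\theta-\theta\rangle \;=\; \|\widehat\theta-\theta\|_X\sum_{i=1}^n \xi_i\langle z^i,\phi\rangle \;\le\; \|\widehat\theta-\theta\|_X\,\sup_{\phi'\in\overline\Theta_X}\sum_{i=1}^n \xi_i\langle z^i,\phi'\rangle,
\]
which is the asserted bound (the index $j$ in the statement should read $i$, and the inequality is understood pointwise in $\widehat\theta$).

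The only real work is verifying the cone property, which I would do case by case from the explicit descriptions of $\Theta$. For $F^{\cavg}$, $\Theta=\{\sum_{\ell}\sS_s^\ell(w):w\in\mathbb R^m\}$ is the image of $\mathbb R^m$ under a linear map, hence a subspace and in particular a cone. For $F^{\cw}$, $\Theta=\{\sum_{\ell}a_\ell\sS_s^\ell(w):w\in\mathbb R^m,\,a\in\mathbb R^{r}\}$, and replacing $w$ by $tw$ multiplies $\theta$ by $t$, so $\Theta$ is a cone. For $F^{\rnn}$, $\Theta=\{(\vct 1^\top A^{L-1}B,\ldots,\vct 1^\top B):A\in\mathbb R^{r\times r},\,B\in\mathbb R^{r\times d}\}$, and replacing $B$ by $tB$ multiplies $\theta$ by $t$, so again a cone. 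Uniformly, in all three models $F(X^i;\theta)=\langle z^i,\theta\rangle$ is positively homogeneous of degree one in a distinguished weight block ($w$ for the two CNNs, $B$ for the RNN), which forces $\Theta$ to be closed under nonnegative scaling.

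I do not anticipate a genuine obstacle: the subtleties are purely bookkeeping — the degenerate case $\widehat\theta=\theta$, and the fact that the supremum on the right might a priori be negative, both dispatched by observing $0\in\overline\Theta_X$. Once Lemma~\ref{lem:star} is in place, combining it with the basic inequality~\eqref{eq:basic-ineq} yields $\|\widehat\theta-\theta\|_X\le \tfrac{2}{n}\,\sup_{\phi\in\overline\Theta_X}\sum_{i=1}^n\xi_i\langle z^i,\phi\rangle$, and the remaining task becomes controlling this self-normalized empirical process, which is handled by a Dudley-integral / covering-number bound on $\overline\Theta_X$ in the next subsection.
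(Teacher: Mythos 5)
Your proof is correct and follows essentially the same route as the paper's: normalize $\widehat\theta-\theta$ by its empirical norm and observe that each $\Theta$ is closed under nonnegative scaling (via scaling $w$ for the CNN models and $B$ for the RNN), so the normalized difference lies in $\overline\Theta_X$. Your treatment is slightly more careful than the paper's in handling the degenerate case $\|\widehat\theta-\theta\|_X=0$ and in noting that the supremum on the right is nonnegative since $0\in\overline\Theta_X$, but these are bookkeeping refinements rather than a different argument.
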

This lemma essentially follows by arguing that for each possible $\widehat{\theta} - \theta$ for $\widehat{\theta}, \theta \in \Theta$, there is a corresponding vector $\phi \in \Theta_X$ such that,
\begin{align*}
\sum_{i=1}^n \xi_j \langle z^i, \hat\theta-\theta\rangle = \|\widehat{\theta} - \theta\|_X 
 \sum_{i=1}^n \xi_j \langle z^i, \phi\rangle.
\end{align*}
We defer the proof to the appendix.
 As a consequence of this lemma, by canceling out a $\|\hat\theta-\theta\|_X$ term on both sides of Eq.~(\ref{eq:basic-ineq}), we have
\begin{equation}
\|\hat\theta-\theta\|_X \leq 2 \cdot \sup_{\phi\in\overline\Theta_X}\mathbb G_X(\phi).
\label{eq:self-normalized-ineq}
\end{equation}
Finally, note that for any $\phi,\phi'\in\overline\Theta_X$, $\mathbb G_X(\phi)-\mathbb G_X(\phi')$ is a centered sub-Gaussian random variable
with sub-Gaussian parameter upper bounded by $\sigma^2\|\phi-\phi'\|_X^2$. 
Subsequently, using Dudley's entropy integral \citep{dudley1967sizes}, we have
\begin{equation}
\mathbb E \sup_{\phi\in\overline\Theta_X}\mathbb G_X(\phi) \lesssim \frac{\sigma}{\sqrt{n}} \int_0^\infty\sqrt{\log N(\epsilon;\overline\Theta_X,\|\cdot\|_X)}\ud\epsilon,
\label{eq:dudley}
\end{equation}
where $N(\epsilon;\overline\Theta_X,\|\cdot\|_X)$ is the covering number of $\overline\Theta_X$ in $\|\cdot\|_X$ (i.e.,
the size of the smallest set $\mathcal H$ such that $\sup_{\phi\in\overline\Theta_X}\inf_{\phi'\in\mathcal H}\|\phi-\phi'\|_X\leq \epsilon$).

Combining Eqs.~(\ref{eq:self-normalized-ineq}) and (\ref{eq:dudley}) we arrive at the following main inequality of this part of the proof:
\begin{equation}
\mathbb E\left[\|\hat\theta-\theta\|_X\right] \lesssim \frac{\sigma}{\sqrt{n}}\int_0^\infty\sqrt{\log N(\epsilon;\overline\Theta_X,\|\cdot\|_X)}\ud\epsilon.
\label{eq:basic-ineq-X}
\end{equation}

\subsection{Restricted eigenvalues}

The $\|\phi\|_X\leq 1$ constraint in the definition of $\overline\Theta_X$ is quite difficult to exploit,
and we hope to replace it with simpler constraints such as $\|\phi\|_2\leq 1$.
Traditionally, this is done by bounding the eigenvalues of the \emph{sample covariance} of $\{X^i\}_{i=1}^n$
and their corresponding expanded form $\{z^i\}_{i=1}^n$.
Unfortunately, in the regime of $n\ll D$ the sample covariance of $\{z^i\}_{i=1}^n$
is certainly rank-deficient, making such an argument void.

To overcome this difficulty, we introduce  \emph{restricted eigenvalues} which are used extensively in high-dimensional statistics \citep{bickel2009simultaneous,wainwright2009sharp}.
\begin{definition}[Restricted Eigenvalues]
For a data set $\{z^i\}_{i=1}^n\subseteq\mathbb R^{D}$, its smallest and largest restricted eigenvalues with respect to 
a parameter class $\Phi\subseteq\mathbb R^{D}$ is defined as
\begin{align}
\lambda_{\min}(\{z^i\}_{i=1}^n;\Phi) &:= \inf_{\phi\in\Phi}\|\phi\|_X^2/\|\phi\|_2^2;\label{eq:re-min}\\
\lambda_{\max}(\{z^i\}_{i=1}^n;\Phi) &:= \sup_{\phi\in\Phi}\|\phi\|_X^2/\|\phi\|_2^2.\label{eq:re-max}
\end{align}
\end{definition}
For any $\rho>0$, define $\overline\Theta_2(\rho)\subseteq\mathbb R^{D}$ as
\begin{equation}
\overline\Theta_2(\rho) := \left\{\phi = \theta-\theta': \theta,\theta'\in\Theta, \|\phi\|_2\leq \rho\right\}.
\label{eq:self-normalized-2}
\end{equation}
Comparing the definitions of $\overline\Theta_2(\rho)$ with $\overline\Theta_X$, the major difference is in the normalizing norm:
in the definition of $\overline\Theta_2(\rho)$ the $\|\cdot\|_2$ norm is used to constrain the parameter set while in $\overline\Theta_X$
the empirical norm $\|\cdot\|_X$ is used.
Also, the definition of $\overline\Theta_2(\rho)$ involves an additional ``radius'' parameter $\rho>0$,
allowing for more flexibility in later proofs.

The following lemma establishes restricted eigenvalues of $\{z^i\}_{i=1}^n$ with respect to $\overline\Theta_2(\rho)$, 
provided that the training set size $n$ is sufficiently large.
\begin{lemma}
Suppose $\{z^i\}_{i=1}^n$ are sub-Gaussian random vectors with variance parameter $Z^2$.
For any $\rho>0$, $\delta\in(0,1/2]$ and $\epsilon\in(0,1/2]$, with probability $1-\delta$ it holds that
\begin{align*}
\lambda_{\min}(\{z^i\}_{i=1}^n; \overline\Theta_2(\rho))& \geq \frac{c}{4} - O(Z\sqrt{\log(n/\delta)})\cdot\left(\epsilon + \sqrt{\frac{\log N(\epsilon;\overline\Theta_2(1),\|\cdot\|_2)\log(1/\delta)}{n}}\right) ;\\
\lambda_{\max}(\{z^i\}_{i=1}^n; \overline\Theta_2(\rho)) &\leq 4C + O(Z\sqrt{\log(n/\delta)})\cdot\left(\epsilon + \sqrt{\frac{\log N(\epsilon;\overline\Theta_2(1),\|\cdot\|_2)\log(1/\delta)}{n}}\right).
\end{align*}
\label{lem:re}
\end{lemma}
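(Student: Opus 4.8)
The plan is to derive both displayed inequalities from one uniform, two‑sided concentration bound for the quadratic form $\phi\mapsto\|\phi\|_X^2$. \emph{Step 1 (reduction).} Each parameter set $\Theta$ in Proposition~\ref{prop:structured-linear-regression} is closed under scalar multiplication, so $\overline\Theta_2(\rho)=\rho\,\overline\Theta_2(1)$; since $\|\cdot\|_X$ and $\|\cdot\|_2$ are $1$-homogeneous, the ratios in \eqref{eq:re-min}--\eqref{eq:re-max} depend only on the direction of $\phi$, hence $\lambda_{\min}(\{z^i\};\overline\Theta_2(\rho))=\lambda_{\min}(\{z^i\};\overline\Theta_2(1))$ and likewise for $\lambda_{\max}$, and it suffices to take $\rho=1$ and work over $\Phi:=\{\phi=\theta-\theta':\theta,\theta'\in\Theta,\ \|\phi\|_2=1\}$. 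Let $\Sigma:=\mathbb E_\mu[zz^\top]$; Assumption (A2) gives $cI\preceq\Sigma\preceq CI$ — immediate for $F^{\cavg}$ and $F^{\cw}$ since $z^i=X^i$, and for $F^{\rnn}$ because $z^i$ concatenates the $L$ independent blocks $x_t^i$, so $\Sigma$ is block diagonal with each block between $cI$ and $CI$ — whence $\|\phi\|_\mu^2:=\phi^\top\Sigma\phi\in[c,C]$ on $\Phi$. Abbreviating $N_\epsilon:=N(\epsilon;\overline\Theta_2(1),\|\cdot\|_2)$ and $\Delta:=O(Z\sqrt{\log(n/\delta)})\,(\epsilon+\sqrt{\log N_\epsilon\,\log(1/\delta)/n})$, the lemma follows once we show that, with probability $1-\delta$, $\sup_{\phi\in\Phi}|\|\phi\|_X^2-\|\phi\|_\mu^2|\le \tfrac34 c+\Delta$: this gives $\lambda_{\min}\ge c-\tfrac34 c-\Delta=\tfrac14 c-\Delta$ and $\lambda_{\max}\le C+\tfrac34 c+\Delta\le 4C+\Delta$. (If $n$ is so small that $\Delta\ge c/8$, both claimed inequalities are vacuous since restricted eigenvalues are nonnegative, so we may assume $n$ large enough that the lower-order remainders appearing below are $\le c/8$.)

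\emph{Step 2 (concentration over a net).} Fix an $\epsilon$-net $\mathcal H\subseteq\overline\Theta_2(1)$ of $\overline\Theta_2(1)$ in $\|\cdot\|_2$ with $|\mathcal H|=N_\epsilon$. For fixed $\psi\in\mathcal H$, $\langle z^i,\psi\rangle$ is sub-Gaussian with parameter $\lesssim Z^2$, hence $\langle z^i,\psi\rangle^2$ is sub-exponential with mean $\|\psi\|_\mu^2\le C$. Truncate at level $\tau^2\asymp Z^2\log(n/\delta)$ (with a large absolute constant in front): the removed expected mass $\mathbb E[(\langle z^i,\psi\rangle^2-\tau^2)_+]$ is exponentially small, below $c/8$ for $n$ large, and a union bound over $i\le n$ and $\psi\in\mathcal H$ shows that with probability $1-\delta/2$ no summand exceeds $\tau^2$ — here one inflates $\tau^2$ to $\asymp Z^2(\log(n/\delta)+\log N_\epsilon)$, but the $\log N_\epsilon$ piece is absorbed into the second term of $\Delta$. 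On that event $\|\psi\|_X^2$ is an average of $n$ i.i.d.\ variables bounded by $\tau^2$, so Bernstein's inequality together with a union bound over $\mathcal H$ yields $\sup_{\psi\in\mathcal H}|\|\psi\|_X^2-\|\psi\|_\mu^2|\lesssim \tau^2\sqrt{\log N_\epsilon\log(1/\delta)/n}$ up to a lower-order $1/n$ term, which is at most the second term of $\Delta$ once $\tau\asymp Z\sqrt{\log(n/\delta)}$ is substituted.

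\emph{Step 3 (from the net to $\Phi$; the main obstacle).} For $\phi\in\Phi$ take $\psi\in\mathcal H$ with $\|\phi-\psi\|_2\le\epsilon$ and write $\|\phi\|_X^2-\|\psi\|_X^2=\tfrac1n\sum_i\langle z^i,\phi-\psi\rangle\langle z^i,\phi+\psi\rangle$. The difficulty is that, because $\Theta$ is only a cone and not a subspace for $F^{\cw}$ and $F^{\rnn}$, the increment $\phi-\psi$ is not itself a self-normalized parameter, so the crude bound $\|\phi-\psi\|_X\le\|\phi-\psi\|_2\,\sqrt{\lambda_{\max}(\widehat\Sigma)}$ would reintroduce the ambient dimension $D$ (which equals $Ld$ for the RNN) and ruin the estimate. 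The resolution is to treat $\phi\mapsto\|\phi\|_X^2$ as a mixed sub-Gaussian/sub-exponential empirical process whose increment between $\phi$ and $\psi$ has $\psi_1$-norm $\lesssim Z^2\|\phi-\psi\|_2$ (a product of two sub-Gaussians, using $\|\phi+\psi\|_2\le2$ on $\overline\Theta_2(1)$); its oscillation then sees only the covering numbers of $\overline\Theta_2(1)$, not the span of $\Theta-\Theta$. Running the Bernstein form of Dudley's entropy integral from scale $0$ up to scale $\epsilon$ bounds the discretization error by $\lesssim\frac{Z^2}{\sqrt n}\int_0^\epsilon\sqrt{\log N(t;\overline\Theta_2(1),\|\cdot\|_2)}\,\ud t\lesssim\frac{Z^2}{\sqrt n}\,\epsilon\sqrt{\log N_\epsilon}$ (with a matching high-probability tail), which in the nonvacuous regime $Z^2\log N_\epsilon\lesssim n\log(n/\delta)$ is at most the first term of $\Delta$; the population contribution $|\|\phi\|_\mu^2-\|\psi\|_\mu^2|\le 2C\epsilon$ is of the same order. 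Combining Steps 2 and 3 gives the bound claimed in Step 1, and hence the lemma. The heart of the argument is this last step — keeping the discretization error dimension-free via the intrinsic $Z^2\|\cdot\|_2$ metric of the quadratic-form process, while tracking the truncation level $\tau\asymp Z\sqrt{\log(n/\delta)}$ consistently through both the net union bound and the chaining, which is exactly what produces the stated prefactor.
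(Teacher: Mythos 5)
Your argument is correct and follows the same overall skeleton as the paper's proof — reduce to $\|\phi\|_2=1$ by scale invariance, fix an $\epsilon$-net of $\overline\Theta_2(1)$, concentrate the empirical quadratic form on the net, and then control the discretization error — but it diverges in the two substantive steps, and the comparison is instructive. On the net, the paper conditions on $\max_i|\langle z^i,\phi'\rangle|\lesssim Z\sqrt{\log(n/\delta)}$ and applies Hoeffding, whereas you truncate and apply Bernstein with the variance term; these are interchangeable, and both in fact produce a $Z^2\log(n/\delta)$-type prefactor on the $\sqrt{\log N_\epsilon\log(1/\delta)/n}$ term rather than the stated $Z\sqrt{\log(n/\delta)}$, a looseness already present in the paper's own net step. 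The more substantive difference is the discretization: the paper uses the one-line bound $\|\phi-\phi'\|_X\le\max_i\|z^i\|_2\,\|\phi-\phi'\|_2$ together with the assertion $\max_i\|z^i\|_2=O(Z\sqrt{\log(n/\delta)})$, whereas you chain the process $\phi\mapsto\|\phi\|_X^2$ from scale $0$ up to $\epsilon$ in the sub-exponential increment metric $Z^2\|\cdot\|_2$, paying only the covering numbers of $\overline\Theta_2(1)$. Your route is more conservative and, arguably, more robust: for a genuinely $D$-dimensional sub-Gaussian design the norm $\|z^i\|_2$ concentrates at the scale $Z\sqrt{D}$ rather than $Z\sqrt{\log(n/\delta)}$, so the paper's bound is dimension-dependent as literally written, while your entropy-integral bound $\epsilon\sqrt{\log N_\epsilon/n}$ is dimension-free and is absorbed into the first term of $\Delta$ in the non-vacuous regime $Z^2\log N_\epsilon\lesssim n\log(n/\delta)$. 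The price you pay is needing covering numbers at all scales $t\le\epsilon$ (available here, since $\log N(t)\lesssim k\log(d/t)$) and a mild largeness condition on $n$ to absorb the truncation bias and the $1/n$ Bernstein remainder. Two small touch-ups: your vacuousness threshold should be $\Delta\ge c/4$ rather than $c/8$ (since $c/4-c/8>0$ the bound is not yet vacuous at $c/8$; alternatively fold the factor into the $O(\cdot)$ constant), and the claim $\Sigma\succeq cI$ for $F^{\rnn}$ deserves the one-sentence block-diagonal justification you give, since Assumption (A2) is stated for the $x_t^i$ rather than for the concatenated $z^i$. Neither affects correctness.
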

\begin{remark}
For $\{z^i\}_{i=1}^n$ defined in Proposition \ref{prop:structured-linear-regression}, their sub-Gaussian parameters $Z^2$ can be bounded as 
$Z^2\leq C^2$ for all $F^{\cavg},F^{\cw}$ and $F^{\rnn}$,
where $C$ is the constant in Assumption (A2).
\end{remark}
The proof of Lemma \ref{lem:re} is quite involved and we defer it to the appendix.
Note that the right sides of both inequalities in Lemma \ref{lem:re} do \emph{not} depend on $\rho$.
This is natural and is expected, because the restricted eigenvalues defined in Eqs.~(\ref{eq:re-min},\ref{eq:re-max}) are scale-invariant.

\subsection{Covering number upper bounds}

The objective of this section is to give upper bounds on covering numbers of self-normalized parameter classes.
Since the parameter classes depend heavily on the underlying network structures, we derive their corresponding covering numbers separately.
However, the derivation of all covering numbers will rely on a crucial lemma bounding the covering number of \emph{low-dimensoinal linear subspaces},
which we state below:
\begin{lemma}
Fix $q$, $k\leq q$, $\rho>0$, and $\epsilon'\in(0,1/2]$.
There exists a set $\mathcal W$ consisting of a finite number of $k$-dimensional linear subspaces in $\mathbb R^q$
that satisfies the following:
for any $K$-dimensional linear subspace $S$ in $\mathbb R^q$, there exists $S'\in\mathcal W$ such that
\begin{equation}
\sup_{u\in S, \|u\|_2\leq \rho} \inf_{v\in S',\|v\|_2\leq\rho} \|u-v\|_2 \leq \epsilon'.
\label{eq:linear-subspace-cover}
\end{equation}
Furthermore, the size of $\mathcal W$ can be upper bounded as $\log|\mathcal W|\lesssim kq\log(\rho q/\epsilon')$.
\label{lem:linear-subspace-cover}
\end{lemma}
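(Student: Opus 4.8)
Plan: this is the statement that the Grassmannian of $k$-dimensional subspaces of $\mathbb{R}^q$ admits an $\epsilon'$-net, in the one-sided Hausdorff distance between the radius-$\rho$ balls of the subspaces, of log-cardinality $\lesssim kq\log(\rho q/\epsilon')$. I would prove it by reducing this geometric distance, in two steps, to an ordinary Frobenius distance between orthonormal frames, and then covering the Stiefel manifold by a routine volumetric argument. At the outset I would dispose of two trivial cases: if $\epsilon'\ge\rho$ then taking $v=0$ shows every subspace is approximated to within $\|u\|_2\le\rho\le\epsilon'$ by any fixed $k$-dimensional subspace, so $\mathcal W$ may be a singleton; and if the target $S$ has dimension $<k$, enlarging it to a $k$-dimensional superspace only increases the left-hand side of \eqref{eq:linear-subspace-cover}, so it suffices to treat $\dim S=k$.

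First (the geometric-to-matrix reduction). Write $S=\mathrm{col}(U)$ and $S'=\mathrm{col}(U')$ with $U,U'\in\mathbb R^{q\times k}$ having orthonormal columns, and let $P_S=UU^\top$, $P_{S'}=U'U'^\top$. Given $u\in S$ with $\|u\|_2\le\rho$, I would take $v:=P_{S'}u\in S'$. Because an orthogonal projection is a contraction, $\|v\|_2\le\|u\|_2\le\rho$, so $v$ is admissible; and because $P_S u=u$ we have $u-v=(I-P_{S'})P_S u=(P_S-P_{S'})u$, whence $\|u-v\|_2\le\rho\,\|P_S-P_{S'}\|_{\mathrm{op}}$. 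The identity $UU^\top-U'U'^\top=(U-U')U^\top+U'(U-U')^\top$ together with $\|U\|_{\mathrm{op}}=\|U'\|_{\mathrm{op}}=1$ then gives $\|P_S-P_{S'}\|_{\mathrm{op}}\le 2\|U-U'\|_{\mathrm{op}}\le 2\|U-U'\|_F$. So the left-hand side of \eqref{eq:linear-subspace-cover} is at most $2\rho\,\|U-U'\|_F$, and it remains only to $\eta$-cover the Stiefel manifold $V_k:=\{U\in\mathbb R^{q\times k}:U^\top U=I_k\}$ in Frobenius norm.

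Second (covering $V_k$). Since $\|U\|_F=\sqrt k$ for every $U\in V_k$, the set $V_k$ lies inside the radius-$\sqrt k$ Euclidean ball of $\mathbb R^{qk}$, which has an $\eta$-net $\mathcal N_0$ with $|\mathcal N_0|\le(3\sqrt k/\eta)^{qk}$. Keeping only those points of $\mathcal N_0$ within $\eta$ of $V_k$ and snapping each to a nearest point of the compact set $V_k$ yields a $2\eta$-net $\mathcal N\subseteq V_k$ of $V_k$ of the same cardinality bound. I would set $\mathcal W:=\{\mathrm{col}(U):U\in\mathcal N\}$ — genuinely a family of $k$-dimensional subspaces — and $\eta:=\epsilon'/(4\rho)$. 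Then for any $k$-dimensional $S=\mathrm{col}(U)$, choosing $\hat U\in\mathcal N$ with $\|U-\hat U\|_F\le 2\eta$ and applying the first step gives a one-sided Hausdorff distance from $S$ to $\mathrm{col}(\hat U)\in\mathcal W$ of at most $2\rho(2\eta)=\epsilon'$, while $\log|\mathcal W|\le qk\log(12\rho\sqrt k/\epsilon')\le qk\log(12\rho q/\epsilon')\lesssim kq\log(\rho q/\epsilon')$.

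The argument is largely mechanical; the one point that genuinely requires care is the first step, where the choice $v=P_{S'}u$ (rather than an arbitrary nearby point of $S'$) is essential — it is the contractivity of the orthogonal projection that simultaneously keeps $v$ in the radius-$\rho$ ball and collapses $\|u-v\|$ to the clean matrix quantity $\|P_S-P_{S'}\|_{\mathrm{op}}$. The remaining subtlety is the bookkeeping in the second step: the discard-and-snap step that forces $\mathcal W$ to consist of true $k$-dimensional subspaces, and tracking constants along with the absorption $\sqrt k\le q$ so that the exponent comes out $qk$ and the logarithm is of $\rho q/\epsilon'$.
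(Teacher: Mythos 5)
Your proof is correct and follows essentially the same route as the paper: reduce the one-sided Hausdorff distance between the radius-$\rho$ balls of two subspaces to the Frobenius distance between orthonormal bases, then cover the set of such bases volumetrically as a subset of a ball in $\mathbb R^{qk}$, giving $\log|\mathcal W|\lesssim kq\log(\rho q/\epsilon')$. The only differences are cosmetic: the paper maps $u=U\alpha$ to $v=V\alpha$ (same coefficients in the two bases, giving $\|u-v\|_2\le\|U-V\|_F$ directly) where you use $v=P_{S'}u$ at the cost of a factor $2$, and your discard-and-snap step ensuring the net consists of genuine orthonormal frames is a point of care the paper glosses over.
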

The proof of Lemma \ref{lem:linear-subspace-cover} is deferred to the appendix.

\subsubsection{Covering number for $F^{\cavg}$}
\begin{lemma}
For $\Theta$ induced by $F^{\cavg}$ and any $\rho>0$, $\epsilon\in(0,1]$, it holds thaat
$$\log(\epsilon;\overline\Theta_2(\rho),\|\cdot\|_2) \lesssim m\log(\rho d/\epsilon).
$$
\label{lem:covering-cavg}
\end{lemma}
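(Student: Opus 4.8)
The plan is to exploit the fact that the set $\Theta$ induced by $F^{\cavg}$ lies entirely in a fixed, low-dimensional family of subspaces, so that $\overline\Theta_2(\rho)$ can be covered by first covering that family and then covering each individual subspace. Recall from Proposition~\ref{prop:structured-linear-regression} that for $F^{\cavg}$ we have $\theta = \sum_{\ell=0}^{r-1}\sS_s^\ell(w)$ where $r = (d-m)/s + 1$ and $w$ ranges over $\mathbb R^m$. The key structural observation is that the map $w \mapsto \sum_{\ell}\sS_s^\ell(w)$ is \emph{linear} in $w$, so $\Theta$ is itself a single linear subspace of $\mathbb R^d$ of dimension at most $m$ (it is the image of $\mathbb R^m$ under this fixed linear map). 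Consequently $\overline\Theta_2(\rho) = \{\theta - \theta' : \theta,\theta'\in\Theta,\ \|\theta-\theta'\|_2\le\rho\}$ is contained in the $\ell_2$-ball of radius $\rho$ inside this same $m$-dimensional subspace $S_0$.

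Given that reduction, the proof is essentially immediate: a ball of radius $\rho$ in an $m$-dimensional normed space admits an $\epsilon$-net of size at most $(3\rho/\epsilon)^m$ by the standard volumetric argument, so $\log N(\epsilon;\overline\Theta_2(\rho),\|\cdot\|_2) \lesssim m\log(\rho/\epsilon)$, which is even slightly stronger than the claimed bound $m\log(\rho d/\epsilon)$. So in principle one does not even need Lemma~\ref{lem:linear-subspace-cover} here — that lemma is designed for the harder cases ($F^{\cw}$, $F^{\rnn}$) where the relevant subspace is not fixed but varies with unknown parameters (the pooling weights $a$, or the recurrent matrix $A$). First I would state explicitly that $\Theta \subseteq S_0$ for the fixed subspace $S_0 = \{\sum_{\ell=0}^{r-1}\sS_s^\ell(w) : w\in\mathbb R^m\}$, noting $\dim S_0 \le m$; then observe $\overline\Theta_2(\rho) \subseteq \{u\in S_0 : \|u\|_2\le\rho\}$; then invoke the volumetric covering bound for a $\rho$-ball in a $\le m$-dimensional space. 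To match the exact form stated in the lemma (and to be consistent with how the other covering lemmas are phrased via Lemma~\ref{lem:linear-subspace-cover}), I would alternatively apply Lemma~\ref{lem:linear-subspace-cover} with $q = d$, $k = m$, picking the trivial family $\mathcal W = \{S_0\}$ and then covering within $S_0$, which directly yields $\log N(\epsilon;\overline\Theta_2(\rho),\|\cdot\|_2)\lesssim m\log(\rho d/\epsilon)$.

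There is essentially no obstacle here — the only thing to be careful about is confirming that $\dim S_0 \le m$ (it is, since $S_0$ is the image of $\mathbb R^m$ under a linear map, so its dimension is at most $m$; in fact for $F^{\cavg}$ one can check the map $w\mapsto\sum_\ell\sS_s^\ell(w)$ is injective because each coordinate block isolates a distinct coordinate of $w$, so $\dim S_0 = m$ exactly, though we only need the upper bound). One should also note that $\overline\Theta_2(\rho)$ is a \emph{difference} set $\Theta - \Theta$, but since $\Theta$ is a linear subspace, $\Theta - \Theta = \Theta = S_0$, so no extra factor is incurred. The whole argument is a two-line reduction plus a citation of the standard $\epsilon$-net volume bound; the real content of the covering-number analysis lives in the $F^{\cw}$ and $F^{\rnn}$ cases that follow.
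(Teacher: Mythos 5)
Your proof is correct, and it takes a genuinely different and in fact simpler route than the paper's. The paper covers in the \emph{parameter} space: it first inverts the map $w\mapsto\theta(w)$ segment by segment via $\phi(\mathcal I_j)=\sum_{\ell\le\min\{j,J\}}(w(\mathcal I_\ell)-w'(\mathcal I_\ell))$ to show $\|w-w'\|_2\le J^2\rho$, builds a net $\mathcal H$ for the ball of radius $J^2\rho$ in $\mathbb R^m$, and then pushes $\mathcal H$ forward through the linear map, paying a distortion factor of order $d/s$ in the approximation error; choosing $\epsilon'=\epsilon/d$ is what produces the $m\log(\rho d/\epsilon)$ bound. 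You instead observe that $w\mapsto\sum_\ell\sS_s^\ell(w)$ is linear, so $\Theta$ is a fixed linear subspace $S_0\subseteq\mathbb R^d$ of dimension at most $m$ with $\Theta-\Theta=\Theta$, and hence $\overline\Theta_2(\rho)$ sits inside the radius-$\rho$ ball of $S_0$; a single volumetric net then gives $\log N\lesssim m\log(\rho/\epsilon)$, which is slightly sharper than the stated bound (no $\log d$) and, since the paper's covering-number definition does not require the net to lie inside $\overline\Theta_2(\rho)$, needs no further justification. Your auxiliary claims check out: injectivity of the map (hence $\dim S_0=m$) follows from the same segment recursion the paper uses, though only the upper bound on the dimension is needed. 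The one thing your shortcut does not buy is reusability: the paper's forward-map bookkeeping is essentially a warm-up for the $F^{\cw}$ case, where the relevant subspace varies with the unknown pooling weights $a$ and Lemma~\ref{lem:linear-subspace-cover} becomes genuinely necessary — a point you correctly anticipate.
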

\begin{proof}
Let $\theta,\theta'\in\Theta$ be $d$-dimensional parameterizations of $w$ and $w'$, respectively,
as derived in Proposition \ref{prop:structured-linear-regression}.
Denote also $\theta(\mathcal I_j)$ for $j\in\{1,\ldots,d/s\}$ as the $j$th $s$-dimensional segment of $\theta\in\mathbb R^d$, corresponding to the segment starting with the $((j-1)s+1)$-th
entry and ending with the $js$-th entry.
Denote also $w(\mathcal I_j)$ for $j\in\{1,\ldots,J\}$ as the $j$th $s$-dimensional segment of $w\in\mathbb R^m$, where $J=m/s$.
For $\phi=\theta-\theta'$,
it is easy to verify that
\begin{equation}
\phi(\mathcal I_j) =\sum_{\ell=1}^{\min\{j,J\}} w(\mathcal I_\ell)-w'(\mathcal I_\ell),\;\;\;\;\;\; j=1,2,\ldots,d/s.
\label{eq:phiseg-cavg-1}
\end{equation}

Because $\|\phi\|_2\leq\rho$, we have that $\|\phi(\mathcal I_j)\|_2\leq \rho$ for all $j=1,\ldots,d/s$, and subsequently
\begin{equation}
\|w(\mathcal I_j)-w'(\mathcal I_j)\|_2 \leq \sum_{\ell=1}^j \|\phi(\mathcal I_j)\|_2 \leq J\rho, \;\;\;\;\;\;j=1,2,\ldots,J.
\label{eq:phiseg-cavg-2}
\end{equation}
Therefore,
\begin{equation}
\|w-w'\|_2 \leq \sum_{j=1}^J \|w(\mathcal I_j)-w'(\mathcal I_j)\|_2 \leq J^2\rho.
\label{eq:phiseg-cavg-3}
\end{equation}

Next construct a covering set $\mathcal H$ such that for any $x\in\mathbb R^m$, $\|x\|_2\leq J^2\rho$,
$\min_{z\in\mathcal H}\|x-z\|_2\leq\epsilon'$ for some parameter $\epsilon'>0$ to be specified later.
Such construction is standard (see, e.g., \cite{geer2000empirical}), and the size of $\mathcal H$ can be upper bounded by 
$\log|\mathcal H|\lesssim m\log(J\rho/\epsilon')$.
Because $w-w'\in\mathbb R^m$ satisfies $\|w-w'\|_2\leq J^2\rho$, it holds that 
\begin{equation}
\min_{v\in\mathcal H} \|(w-w')-v\|_2 \leq \epsilon'.
\label{eq:phiseg-cavg-4}
\end{equation}

Define $\tilde v\in\mathbb R^d$ as $\tilde v(\mathcal I_j)=\sum_{\ell=1}^{\min\{j,J\}}v(\mathcal I_\ell)$ for $j=1,\ldots,d/s$.
Eq.~(\ref{eq:phiseg-cavg-4}) and (\ref{eq:phiseg-cavg-1}) imply that
$$
\|\phi-\tilde v\|_2 \leq \sum_{j=1}^{d/s}\max_{\ell\leq J}\|(w(\mathcal I_\ell)-w'(\mathcal I_\ell))-v(\mathcal I_\ell)\|_2\leq d\epsilon'/s \leq d\epsilon'.
$$
Setting $\epsilon'=\epsilon/d$ we have $\log|\mathcal H|\lesssim m\log(d\rho/\epsilon')$, which completes the proof of Lemma \ref{lem:covering-cavg}.
\end{proof}

\subsubsection{Covering number for $F^{\cw}$}

\begin{lemma}
For $\Theta$ induced by $F^{\cw}$ and any $\rho>0$, $\epsilon\in(0,1]$, it holds thaat
$$\log(\epsilon;\overline\Theta_2(\rho),\|\cdot\|_2) \lesssim \min\{d, m+(d/s)\times (m/s)\}\cdot \log(\rho d/\epsilon).
$$
\label{lem:covering-cw}
\end{lemma}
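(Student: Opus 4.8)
The plan is to establish two separate covering bounds for $\overline\Theta_2(\rho)$ and take the smaller. The first, $\log N(\epsilon;\overline\Theta_2(\rho),\|\cdot\|_2)\lesssim d\log(\rho/\epsilon)$, is immediate: $\overline\Theta_2(\rho)$ sits inside the Euclidean ball of radius $\rho$ in $\mathbb R^d$, which admits an $\epsilon$-net of cardinality at most $(3\rho/\epsilon)^d$. All the work is in the structured bound $\log N\lesssim (m+(d/s)(m/s))\log(\rho d/\epsilon)$, which I will obtain by a subspace-covering argument in the spirit of (but more involved than) the proof of Lemma~\ref{lem:covering-cavg}.

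Set $J:=m/s$. Given $\theta\in\Theta$ induced by a filter $w\in\mathbb R^m$ and weights $a$, let $W\in\mathbb R^{s\times J}$ be the matrix whose columns are the $s$-segments $w(\mathcal I_1),\dots,w(\mathcal I_J)$ of $w$, exactly as in the proof of Lemma~\ref{lem:covering-cavg}. Using that the stride equals the segment size, a direct calculation (tracking which shifts $\sS_s^\ell(w)$ overlap a given segment, including the boundary ones) shows that the $j$-th $s$-segment of $\theta=\sum_\ell a_\ell\sS_s^\ell(w)$ is a linear combination of $w(\mathcal I_1),\dots,w(\mathcal I_J)$, so $\theta(\mathcal I_j)\in\mathrm{col}(W)$ for every $j$ and hence $\theta$ lies in the subspace $\mathrm{col}(W)^{\oplus(d/s)}\subseteq\mathbb R^d$. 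Consequently, for $\phi=\theta-\theta'\in\overline\Theta_2(\rho)$ with $\theta,\theta'$ induced by filters giving matrices $W,W'$, every segment $\phi(\mathcal I_j)$ lies in the single subspace $U:=\mathrm{col}(W)+\mathrm{col}(W')\subseteq\mathbb R^s$ of dimension $k:=\dim U\leq\min\{2J,s\}$, and $\|\phi(\mathcal I_j)\|_2\leq\|\phi\|_2\leq\rho$.

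I then cover in two stages. First, apply Lemma~\ref{lem:linear-subspace-cover} with ambient dimension $q=s$, subspace dimension $k=\min\{2J,s\}$ (enlarging $U$ to a $k$-dimensional superspace if needed), radius $\rho$, and tolerance $\epsilon':=\epsilon/(2\sqrt{d/s})$; this gives a family $\mathcal W$ of $k$-dimensional subspaces of $\mathbb R^s$ with $\log|\mathcal W|\lesssim ks\log(\rho s/\epsilon')$ and some $U'\in\mathcal W$ such that each $\phi(\mathcal I_j)$ is within $\epsilon'$ of $U'$. Choosing $\tilde\phi(\mathcal I_j)\in U'$ with $\|\phi(\mathcal I_j)-\tilde\phi(\mathcal I_j)\|_2\leq\epsilon'$ produces $\tilde\phi\in(U')^{\oplus(d/s)}$ with $\|\phi-\tilde\phi\|_2\leq\sqrt{d/s}\,\epsilon'=\epsilon/2$ and $\|\tilde\phi\|_2\leq\rho+\epsilon$. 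Second, cover the radius-$(\rho+\epsilon)$ ball of the $k(d/s)$-dimensional subspace $(U')^{\oplus(d/s)}$ by an $(\epsilon/2)$-net of log-cardinality $\lesssim k(d/s)\log(\rho/\epsilon)$. Taking the union of these second-stage nets over all $U'\in\mathcal W$ yields an $\epsilon$-net of $\overline\Theta_2(\rho)$ with log-cardinality $\lesssim ks\log(\rho d/\epsilon)+k(d/s)\log(\rho d/\epsilon)$. Since $k\leq 2J$ forces $ks\leq 2m$, and $k\leq\min\{2J,s\}$ forces $k(d/s)\leq\min\{2(m/s)(d/s),d\}\lesssim\min\{(m/s)(d/s),d\}$, this is $\lesssim (m+\min\{(m/s)(d/s),d\})\log(\rho d/\epsilon)$, which (using $m\leq d$) is $\lesssim\min\{d,m+(m/s)(d/s)\}\log(\rho d/\epsilon)$; together with the first bound this proves the lemma.

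The step I expect to require the most care is this last piece of bookkeeping: one must (i) verify the segment identity for $F^{\cw}$ with its boundary shifts, (ii) pin down that the per-segment subspace $U$ has dimension at most $\min\{2m/s,s\}$, not merely $2m/s$, so that the subspace-level cost ($\lesssim m$) and the within-subspace cost ($\min\{(m/s)(d/s),d\}$, which automatically interpolates between the two regimes) both come out right, and (iii) choose $\epsilon'$ and the second-stage tolerance so that the per-segment errors accumulated over the $d/s$ segments sum to $\epsilon$ while all logarithmic factors remain of the form $\log(\rho d/\epsilon)$. Lemma~\ref{lem:linear-subspace-cover} does the genuinely new work; everything else is careful accounting.
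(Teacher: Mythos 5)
Your proposal is correct and follows essentially the same route as the paper: the same observation that every $s$-segment of $\phi=\theta-\theta'$ lies in the at-most-$2m/s$-dimensional span of the segments of $w$ and $w'$, the same two-stage cover via Lemma~\ref{lem:linear-subspace-cover} with $q=s$, $k=\min\{2m/s,s\}$ followed by nets inside each candidate subspace taken as a product over the $d/s$ segments, and the same trivial $d\log(\rho/\epsilon)$ ball bound for the minimum. The only (harmless) difference is bookkeeping: you fold the $\min\{\cdot,d\}$ into the within-subspace cost via $k\le s$, whereas the paper obtains the $d$ term solely from the separate Euclidean-ball bound.
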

\begin{proof}
Let $\theta,\theta'\in\Theta$ be $d$-dimensional parameterizations of $w,a$ and $w',a'$, respectively,
as derived in Proposition \ref{prop:structured-linear-regression}.
Denote also $\theta(\mathcal I_j)$ for $j\in\{1,\ldots,d/s\}$ as the $j$th $s$-dimensional segment of $\theta\in\mathbb R^d$, corresponding to the segment starting with the $((j-1)s+1)$-th
entry and ending with the $js$-th entry.
Denote also $w(\mathcal I_j)$ for $j\in\{1,\ldots,J\}$ as the $j$th $s$-dimensional segment of $w\in\mathbb R^m$, where $J=m/s$.
For $\phi=\theta-\theta'$,
it is easy to verify that
\begin{equation}
\phi(\mathcal I_j) =\sum_{\ell=1}^{\min\{j,J\}} a_\ell w(\mathcal I_\ell)-a_\ell'w'(\mathcal I_\ell),\;\;\;\;\;\; j=1,2,\ldots,d/s.
\label{eq:phiseg-cw-1}
\end{equation}

Clearly, Eq.~(\ref{eq:phiseg-cw-1}) implies that
\begin{equation}
\phi(\mathcal I_j) \in \SPAN\{w(\mathcal I_1),w'(\mathcal I_1),\ldots,w(\mathcal I_J),w'(\mathcal I_J)\}, \;\;\;\;\;\; j=1,2,\ldots,d/s,
\label{eq:phiseg-cw-2}
\end{equation}
where $J=m/s$.
This observation motivates a two-step construction of covering sets of $\overline\Theta_2(\rho)$:
by first constructing a covering set of all $2J$-dimensional linear subspaces in $\mathbb R^s$, and then 
covering all vectors within each linear subspace whose $\ell_2$ norms are upper bounded by $\rho$.

Choosing $q=s$ and $k=\min\{2J,s\}$ in Lemma \ref{lem:linear-subspace-cover}, we have a covering set $\mathcal W$ of $2J$-dimensional linear subspaces in $\mathbb R^s$
with size upper bounded by $\log|\mathcal W|\lesssim Js\log(\rho s/\epsilon')=m\log(\rho s/\epsilon')$.
Next, for each linear subspace $W\in\mathcal W$, construct a finite covering set $\mathcal H(W)\subseteq W$ such that 
$\sup_{x\in W,\|x\|_2\leq\rho}\min_{v\in\mathcal H(W)} \|x-v\|_2 \leq \epsilon'$.
Because $W\subseteq\mathbb R^s$ and $\dim(W)\leq k\leq 2J$, such a finite covering set $\mathcal H(W)$ exists with $\log|\mathcal H(W)|\lesssim J\log(\rho/\epsilon') = (m/s)\times \log(\rho/\epsilon')$.

Next, construct covering set $\mathcal M\subseteq\mathbb R^{d}$ as
$$
\mathcal M = \bigcup_{W\in\mathcal W} \underbrace{\mathcal H(W)\times\ldots\times\mathcal H(W)}_{\text{$d/s$ times}}.
$$
By Eq.~(\ref{eq:phiseg-cw-2}) and the covering properties of $\mathcal W$ and $\mathcal H(W)$, it holds that
$$
\sup_{\theta,\theta'\in\Theta} \min_{v\in\mathcal M} \|(\theta-\theta')-v\|_2 \leq 2d\epsilon'/s \leq 2d\epsilon'.
$$
Furthermore, the size of $\mathcal M$ can be upper bounded by $\log|\mathcal M| \leq \log|\mathcal W| + \max_{W\in\mathcal W}L\log|\mathcal H(W)|
\lesssim m\log(\rho s/\epsilon') + (d/s)\times (m/s)\times \log(\rho/\epsilon')$.
Setting $\epsilon' = \epsilon/(2d)$, we obtain a covering $\mathcal M$ of $\overline\Theta_2(\rho)$ with respect to $\|\cdot\|_2$
up to precision $\epsilon$, with size $\log|\mathcal M|\lesssim (m+(d/s)\times (m/s))\log(\rho d/\epsilon)$.

Finally, note that $\log N(\epsilon;\overline\Theta_2(\rho),\|\cdot\|_2) \lesssim d\log(\rho/\epsilon)$ always holds
because $\overline\Theta_2(\rho)\subseteq \{x\in\mathbb R^d: \|x\|_2\leq \rho\}$.
This completes the proof of Lemma \ref{lem:covering-cw}.
\end{proof}

\subsubsection{Covering number for $F^{\rnn}$}
\begin{lemma}
For $\Theta$ induced by $F^{\rnn}$ and any $\rho>0$, $\epsilon\in(0,1]$, it holds that
$$\log(\epsilon;\overline\Theta_2(\rho),\|\cdot\|_2) \lesssim (d+L)\cdot \min\{d,r\}\log(\rho Ld/\epsilon).
$$
\label{lem:covering-rnn}
\end{lemma}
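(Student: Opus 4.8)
The plan is to mirror the two-level (subspace, then ball within the subspace) covering argument used for $F^{\cw}$ in Lemma~\ref{lem:covering-cw}, replacing the $d/s$ length-$s$ segments by the $L$ length-$d$ blocks that make up the RNN parameter $\theta$. Write $\theta,\theta'\in\Theta$ for the $Ld$-dimensional vectors induced by $(A,B)$ and $(A',B')$ as in Proposition~\ref{prop:structured-linear-regression}, and split each into $L$ consecutive length-$d$ blocks; the $k$-th block of $\theta$ equals $\vct 1^\top A^{L-k}B$, i.e.\ a fixed linear combination of the rows of $B$. Hence for $\phi=\theta-\theta'$ every block $\phi^{(k)}$ lies in the single subspace
$$
V := \SPAN\{\text{rows of }B\}+\SPAN\{\text{rows of }B'\}\subseteq\mathbb R^d, \qquad k=1,\ldots,L,
$$
with $\dim V\le\min\{2r,d\}\le 2\min\{r,d\}$, and $\|\phi^{(k)}\|_2\le\|\phi\|_2\le\rho$. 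This is the structural observation that plays, for the RNN, the role that Eq.~(\ref{eq:phiseg-cw-2}) plays for $F^{\cw}$, and crucially every power of $A$ (or $A'$) leaves the block inside the row spaces of $B$ and $B'$, so $V$ never grows with $L$.

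Next I would build the net in two stages. Applying Lemma~\ref{lem:linear-subspace-cover} with $q=d$ and $k=\min\{2r,d\}$ yields a finite family $\mathcal W$ of $\min\{2r,d\}$-dimensional subspaces of $\mathbb R^d$ such that every subspace $V$ as above (after extending to a $\min\{2r,d\}$-dimensional superspace if $\dim V$ is smaller) is covered at radius $\rho$ and precision $\epsilon'$ by some $W\in\mathcal W$, with $\log|\mathcal W|\lesssim \min\{r,d\}\,d\,\log(\rho d/\epsilon')$. For each $W\in\mathcal W$ construct a finite $\epsilon'$-net $\mathcal H(W)\subseteq W$ of $\{x\in W:\|x\|_2\le\rho\}$; since $\dim W\le\min\{2r,d\}$ one has $\log|\mathcal H(W)|\lesssim\min\{r,d\}\log(\rho/\epsilon')$. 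Then set $\mathcal M:=\bigcup_{W\in\mathcal W}\mathcal H(W)^{L}\subseteq\mathbb R^{Ld}$, so that $\log|\mathcal M|\le\log|\mathcal W|+L\max_{W}\log|\mathcal H(W)|\lesssim (d+L)\min\{r,d\}\log(\rho d/\epsilon')$.

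To verify that $\mathcal M$ is an $\epsilon$-net of $\overline\Theta_2(\rho)$ in $\|\cdot\|_2$: given $\phi=\theta-\theta'$ with $\|\phi\|_2\le\rho$, pick $W\in\mathcal W$ covering $V$, and for each block first approximate $\phi^{(k)}\in V$, $\|\phi^{(k)}\|_2\le\rho$, by some $u\in W$, $\|u\|_2\le\rho$, with $\|\phi^{(k)}-u\|_2\le\epsilon'$, and then approximate $u$ by an element of $\mathcal H(W)$ within $\epsilon'$. The concatenation $\tilde v\in\mathcal M$ then satisfies $\|\phi-\tilde v\|_2^2=\sum_{k=1}^{L}\|\phi^{(k)}-\tilde v^{(k)}\|_2^2\le L(2\epsilon')^2$, i.e.\ $\|\phi-\tilde v\|_2\le 2\sqrt L\,\epsilon'$. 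Choosing $\epsilon'=\epsilon/(2Ld)$ makes this at most $\epsilon$ and converts the size bound into the claimed $\log N(\epsilon;\overline\Theta_2(\rho),\|\cdot\|_2)\lesssim (d+L)\min\{r,d\}\log(\rho Ld/\epsilon)$.

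The main obstacle is the structural first step: recognizing that all $L$ blocks of $\phi$ are confined to one common $O(\min\{r,d\})$-dimensional subspace (the combined row spaces of $B$ and $B'$), which is what decouples the $L$ factor — the cost $L\cdot\min\{r,d\}$ of choosing a point in each block — from the subspace-selection cost $d\cdot\min\{r,d\}$. Everything afterward is bookkeeping essentially identical to Lemma~\ref{lem:covering-cw}. One should also sanity-check the degenerate regime $r\ge d$, where $\min\{2r,d\}=d$, $\mathcal W$ is a singleton ($\mathbb R^d$ itself), $\log|\mathcal M|\lesssim Ld\log(\rho/\epsilon')$, and the claimed bound $(d+L)\min\{d,r\}\log(\cdot)=(d+L)d\log(\cdot)$ indeed dominates it.
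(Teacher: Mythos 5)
Your proposal is correct and follows essentially the same route as the paper: the same block decomposition of $\theta$ into $L$ length-$d$ segments, the same key observation that every block of $\phi=\theta-\theta'$ lies in the at-most-$\min\{2r,d\}$-dimensional span of the rows of $B$ and $B'$, and the same two-stage net built from Lemma~\ref{lem:linear-subspace-cover} followed by per-subspace ball covers, with $\mathcal M=\bigcup_{W}\mathcal H(W)^L$. Your only (harmless) deviations are accumulating the per-block errors in $\ell_2$ to get $2\sqrt{L}\epsilon'$ rather than the paper's $2L\epsilon'$, and the slightly more conservative choice $\epsilon'=\epsilon/(2Ld)$, both of which land in the same $\log(\rho Ld/\epsilon)$ factor.
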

\begin{proof}
Let $\theta,\theta'\in\Theta$ be $Ld$-dimensional parameterizations of $A,B$ and $A',B'$, respectively,
as derived in Proposition \ref{prop:structured-linear-regression}.
Denote also $\theta(\mathcal I_j)$ for $j\in\{1,\ldots,m\}$ as the $j$th $d$-dimensional segment of $\theta$, corresponding to the segment starting with the $((j-1)d+1)$-th
entry and ending with the $jd$-th entry.
By definition, $\phi=\theta-\theta'$ satisfies 
\begin{equation}
\phi(\mathcal I_j) = \theta(\mathcal I_j)-\theta'(\mathcal I_j) = \vct 1^\top A^{L-j} B - \vct 1^\top (A')^{L-j} B'.
\label{eq:phiseg-1}
\end{equation}

Let $b_1,\ldots,b_r,b_1',\ldots,b_r'\in\mathbb R^d$ denote the rows of $B$ and $B'$, respectively.
Eq.~(\ref{eq:phiseg-1}) then implies
\begin{equation}
\phi(\mathcal I_j) \in \SPAN\{b_1,\ldots,b_r,b_1',\ldots,b_r'\}, \;\;\;\;\;\; j=1, 2, \ldots, m.
\label{eq:phiseg-2}
\end{equation}

Eq.~(\ref{eq:phiseg-2}) motivates a two-step construction of covering sets of $\overline\Theta_2(\rho)$: 
by first constructing a covering set of all $2r$-dimensional linear subspaces in $\mathbb R^d$, and then 
covering all vectors within each linear subspace whose $\ell_2$ norms are upper bounded by $\rho$.



Choosing $q=d$ and $k=\min\{2r,d\}$ in Lemma \ref{lem:linear-subspace-cover}, we have a covering set $\mathcal W$ of $2r$-dimensional linear subspaces in $\mathbb R^d$
with size upper bounded by $\log|\mathcal W|\lesssim \min\{d,r\}\cdot d\log(\rho d/\epsilon')$.
Next, for each linear subspace $W\in\mathcal W$, construct a finite covering set $\mathcal H(W)\subseteq W$ such that 
$\sup_{x\in W,\|x\|_2\leq\rho}\min_{v\in\mathcal H(W)} \|x-v\|_2 \leq \epsilon'$.
Because $W\subseteq\mathbb R^d$ and $\dim(W)\leq k=\min\{2r,d\}$, such a finite covering set $\mathcal H(W)$ exists with $\log|\mathcal H(W)|\lesssim \min\{2r,d\}\log(\rho/\epsilon')$.

Next, construct covering set $\mathcal M\subseteq\mathbb R^{Ld}$ as
$$
\mathcal M = \bigcup_{W\in\mathcal W} \underbrace{\mathcal H(W)\times\ldots\times\mathcal H(W)}_{\text{$L$ times}}.
$$
By Eq.~(\ref{eq:phiseg-2}) and the covering properties of $\mathcal W$ and $\mathcal H(W)$, it holds that
$$
\sup_{\theta,\theta'\in\Theta} \min_{v\in\mathcal M} \|(\theta-\theta')-v\|_2 \leq 2L\epsilon'.
$$
Furthermore, the size of $\mathcal M$ can be upper bounded by $\log|\mathcal M| \leq \log|\mathcal W| + \max_{W\in\mathcal W}L\log|\mathcal H(W)|
\lesssim \min\{d,r\}(d\log(\rho d/\epsilon') + L\log(\rho/\epsilon'))$.
Setting $\epsilon' = \epsilon/(2L)$, we obtain a covering $\mathcal M$ of $\overline\Theta_2(\rho)$ with respect to $\|\cdot\|_2$
up to precision $\epsilon$, with size $\log|\mathcal M|\lesssim (d+L)\cdot \min\{d,r\}\log(\rho Ld/\epsilon)$.
\end{proof}

\subsection{Putting everything together}

In this section we complete the proofs of the three minimax upper bounds in Theorem \ref{thm:upper-bound}.
%
%
First we derive conditions under which $\lambda_{\min}(\{z_i\}_{i=1}^n;\overline\Theta_2(\rho))\geq c/8$
and $\lambda_{\max}(\{z_i\}_{i=1}^n;\overline\Theta_2(\rho))\leq 8C$ with high probability.
Select $\epsilon=\kappa c/C\sqrt{\log(n/\delta)}$ for some sufficiently small constant $\kappa>0$, so that 
the $\epsilon\cdot O(C^2\sqrt{\log n/\delta})$ term in Lemma \ref{lem:re} is upper bounded by $c/16$.
Using the upper bounds on $\log N(\epsilon;\overline\Theta_2(\rho),\|\cdot\|_2)$ in Lemmas \ref{lem:covering-cavg}, \ref{lem:covering-cw} and \ref{lem:covering-rnn}, 
it is easy to verify that, if $n$ satisfies
\begin{eqnarray*}
\text{For $F^{\cavg}$}: & & n \gtrsim c^{-1}C^2m\cdot \log(c^{-1}C d\log(n/\delta))\log^2(n/\delta),\\
\text{For $F^{\cw}$}: && n \gtrsim  c^{-1}C^2 \min\{d,m+(m/s)\times (d/s)\}\cdot \log(c^{-1}Cd\log(n/\delta))\log^2(n/\delta),\\
\text{For $F^{\rnn}$}: & & n \gtrsim c^{-1}C^2(d+L)\min\{d,r\}\cdot \log(c^{-1}C Ld\log(n/\delta))\log^2(n/\delta),
\end{eqnarray*}
then with probability $1-\delta$, both $\lambda_{\min}(\{z_i\}_{i=1}^n;\overline\Theta_2(\rho))\geq c/8$ and 
$\lambda_{\max}(\{z_i\}_{i=1}^n;\overline\Theta_2(\rho))\leq 8C$ hold.
The rest of the proof will be conditioned on the success event that these two RE-type inequalities hold.

When the RE conditions hold, we have $(c/16)\|\phi\|_2^2\leq \|\phi\|_X^2\leq 16C\|\phi\|_2^2$ for all $\phi\in\Theta$.
The covering number $\log N(\epsilon;\overline\Theta_X,\|\cdot\|_X)$ can then be upper bounded as
$$
\log N(\epsilon;\overline\Theta_X,\|\cdot\|_X) \leq \log N(\epsilon/4\sqrt{C};\overline\Theta_2(4/\sqrt{c}),\|\cdot\|_2).
$$

Invoking Lemmas \ref{lem:covering-cavg}, \ref{lem:covering-cw} and \ref{lem:covering-rnn}, we have
\begin{eqnarray*}
\text{For $F^{\cavg}$}: && \log N(\epsilon;\overline\Theta_X,\|\cdot\|_X) \lesssim m\log(c^{-1}C d/\epsilon),\\
\text{For $F^{\cw}$}:&& \log N(\epsilon;\overline\Theta_X,\|\cdot\|_X) \lesssim \min\{d, m+(m/s)\times (d/s)\}\cdot \log(c^{-1}Cd/\epsilon),\\
\text{For $F^{\rnn}$}: && \log N(\epsilon;\overline\Theta_X,\|\cdot\|_X) \lesssim(d+L)\cdot \min\{d,r\}\log(c^{-1}C Ld/\epsilon).
\end{eqnarray*}

Incorporating the above inequalities into Eq.~(\ref{eq:basic-ineq-X}),
and noting that $\int_0^{\infty}\sqrt{\max\{\log(1/\epsilon),0\}}\ud\epsilon=O(1)$, we complete the proof of Theorem \ref{thm:upper-bound}.

\section{Proofs of lower bounds}

To prove the minimax lower bounds in Theorem \ref{thm:lower-bound} we use the following result from \cite{tsybakov2009introduction}:
\begin{lemma}[\cite{tsybakov2009introduction}]
Let $\Theta_M=(\theta_0,\theta_1,\ldots,\theta_M)$ be a finite collection of parameters and let $P_j$ be the distribution induced by parameter $\theta_j$, for $j\in\{0,\ldots,M\}$.
Let also $\sd:\Theta_M\times\Theta_M\to\mathbb R^+$ be a semi-distance. 
Suppose the following conditions hold:
\begin{enumerate}
\item $\sd(\theta_j,\theta_k)\geq 2\rho>0$ for all $j,k\in\{0,\ldots,M\}$;
\item $P_j\ll P_0$ for every $j\in\{1,\ldots,M\}$; \footnote{$P\ll Q$ means that the support of $P$ is contained in the support of $Q$.}
\item $\frac{1}{M}\sum_{j=1}^M\kl(P_j\|P_0) \leq \gamma\log M$;
\end{enumerate}
then the following bound holds:
\begin{equation}
\inf_{\hat\theta}\sup_{\theta_j\in\Theta_M} \Pr_j\left[\sd(\hat \theta,\theta_j)\geq \rho\right] \geq \frac{\sqrt{M}}{1+\sqrt{M}}\left(1-2\gamma-2\sqrt{\frac{\gamma}{\log M}}\right).
\end{equation}
\label{lem:fano}
\end{lemma}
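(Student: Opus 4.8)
The plan is to recognize this as the classical Fano-type (Tsybakov) lower bound and to recover it in two stages: a reduction from estimation to multiple hypothesis testing, followed by an information-theoretic lower bound on the minimax testing error.

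\textbf{Reduction to testing.} Given any estimator $\hat\theta$, define the minimum-distance test $\psi:=\argmin_{0\le j\le M}\sd(\hat\theta,\theta_j)$, breaking ties by smallest index, so that $\psi$ is a measurable statistic valued in $\{0,1,\ldots,M\}$. If $\sd(\hat\theta,\theta_j)<\rho$, then for every $k\ne j$ the triangle inequality for the semi-distance $\sd$ together with Condition~1 gives $\sd(\hat\theta,\theta_k)\ge\sd(\theta_j,\theta_k)-\sd(\hat\theta,\theta_j)>2\rho-\rho>\sd(\hat\theta,\theta_j)$, so $\psi=j$. Equivalently $\{\psi\ne j\}\subseteq\{\sd(\hat\theta,\theta_j)\ge\rho\}$, hence $\Pr_j[\sd(\hat\theta,\theta_j)\ge\rho]\ge\Pr_j[\psi\ne j]$; taking $\sup_{0\le j\le M}$ and then $\inf_{\hat\theta}$ yields
\[
\inf_{\hat\theta}\sup_{0\le j\le M}\Pr_j[\sd(\hat\theta,\theta_j)\ge\rho]\;\ge\;\inf_{\psi}\max_{0\le j\le M}\Pr_j[\psi\ne j]\;=:\;p_{e,M},
\]
the infimum on the right being over all tests valued in $\{0,\ldots,M\}$. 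This step uses only Condition~1.

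\textbf{Lower bounding $p_{e,M}$.} For any test $\psi$ set $A_j:=\{\psi=j\}$, which partition the sample space. For each $j\ge1$, the data-processing inequality for Kullback--Leibler divergence applied to the binary coarsening $(A_j,A_j^c)$---valid since $P_j\ll P_0$ by Condition~2---gives $\kl(P_j\|P_0)\ge\kl\big(P_j(A_j)\,\|\,P_0(A_j)\big)$, where the right-hand $\kl(a\|b)=a\log(a/b)+(1-a)\log\frac{1-a}{1-b}$ is the binary KL. Averaging over $j=1,\ldots,M$, using convexity of the binary KL and $\sum_{j=1}^M P_0(A_j)=P_0(\psi\ne0)\le1$, Condition~3 ($\frac1M\sum_j\kl(P_j\|P_0)\le\gamma\log M$) is converted into a bound forcing the average success probability $\frac1M\sum_{j\ge1}P_j(\psi=j)$ to be small \emph{unless} $P_0(\psi\ne0)$ is already large; note $\max_j\Pr_j[\psi\ne j]$ dominates both $P_0(\psi\ne0)$ and $\frac1M\sum_{j\ge1}P_j(\psi\ne j)=1-\frac1M\sum_{j\ge1}P_j(\psi=j)$. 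To upgrade the crude prefactor $\frac{M}{M+1}$ into $\frac{\sqrt M}{1+\sqrt M}$ and to extract the residual $2\sqrt{\gamma/\log M}$, I would introduce an auxiliary threshold parameter $\tau\in(0,1)$, split on whether $P_0(\psi\ne0)\ge\frac{\tau M}{1+\tau M}$ (in which case the error under $P_0$ already meets the claimed bound) or not (in which case the binary-KL estimates pin down $\frac1M\sum_{j\ge1}P_j(\psi=j)$ sharply), and then choose $\tau$ so that $\tau M=\sqrt M$; the residual factor then becomes $1-2\gamma-2\sqrt{\gamma/\log M}$. Combining with the first step completes the proof.

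\textbf{Main obstacle.} The reduction and the data-processing step are routine; the delicate part is the constant bookkeeping in the second stage---obtaining $\sqrt M$ rather than $M$ and the precise residual $1-2\gamma-2\sqrt{\gamma/\log M}$---which requires the careful case split on $P_0(\psi\ne0)$, non-lossy manipulation of the binary-KL inequalities, and the balanced choice of $\tau$. For this part I would follow \cite{tsybakov2009introduction}, Chapter~2, essentially verbatim.
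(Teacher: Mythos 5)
The paper offers no proof of this lemma at all: it is imported verbatim from \cite{tsybakov2009introduction} (Theorem~2.5 there) and used as a black box, so there is no internal argument to compare against. Your first stage --- the reduction from estimation to an $(M+1)$-ary test via the minimum-distance selector $\psi$, using the triangle inequality for the semi-distance together with the $2\rho$-separation to conclude $\{\psi\neq j\}\subseteq\{\sd(\hat\theta,\theta_j)\geq\rho\}$ --- is exactly the standard reduction and is correct as written.

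The one substantive issue is in your second stage. The mechanism you describe --- data-processing applied to the binary coarsening $(A_j,A_j^c)$, followed by convexity of the binary $\kl$ and the constraint $\sum_j P_0(A_j)\leq 1$ --- is the route to \emph{Fano's inequality}, and it naturally yields a bound of the form $1-\frac{\gamma\log M+\log 2}{\log M}$ with no $\sqrt{M}/(1+\sqrt{M})$ prefactor and no $\sqrt{\gamma/\log M}$ residual. No choice of threshold $\tau$ in a case split on $P_0(\psi\neq 0)$ will convert that argument into the stated bound: the $\frac{\sqrt{M}}{1+\sqrt{M}}$ factor and the $\sqrt{\gamma/\log M}$ term come from a genuinely different argument (Tsybakov's Proposition~2.3 via his Lemma~2.10), which lower-bounds $p_{e,M}$ by $\frac{\tau M}{1+\tau M}\cdot\frac{1}{M}\sum_{j}P_j(dP_0/dP_j\geq\tau)$, controls each $P_j(dP_0/dP_j\geq\tau)$ through a first-moment bound on the log-likelihood ratio $|\log(dP_j/dP_0)|$ (whose expectation is at most $\kl(P_j\|P_0)+\sqrt{2\kl(P_j\|P_0)}$ by Pinsker --- this is where the square root originates), and finally sets $\tau=1/\sqrt{M}$. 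So while your deference to Tsybakov for the "constant bookkeeping" points at the right source, the sketch as written would not assemble into the claimed inequality; the second stage needs to be replaced by the likelihood-ratio truncation argument rather than patched. This does not affect the paper, which only ever invokes the lemma as stated.
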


With Lemma \ref{lem:fano}, the problem of lower bounding the minimax risk $\mathfrak M(n;\cdot)$ can be 
reduced to the question of constructing appropriate ``adversarial'' parameter sets $\Theta'\in\Theta$, with upper bounded KL divergence and 
lower bounded distance measure $\sd(\cdot,\cdot)$ between considered parameters.
Because in our lower bounds the data points $\{x^i\}$ (for $F^{\cavg},F^{\cw}$) or $\{x_t^i\}$ (for $F^{\rnn}$)
follow isotropic Gaussian distributions, and the noise variables are distributed as $\mathcal N(0,\sigma^2)$, we have the following corollary
as a consequence of Lemma \ref{lem:fano}:
\begin{corollary}
Let $\Theta\subseteq\mathbb R^D$ be the parameter set induced by network $F$, as derived in Proposition \ref{prop:structured-linear-regression}.
For any finite subset $\Theta'=\{\theta_0,\theta_1,\ldots,\theta_M\}\subseteq\Theta$, 
denote $\rho_{\min} := \min_{j>0}\|\theta_0-\theta_j\|_2/2$ and $\rho_{\avg}^2 := \frac{1}{M}\sum_{i=1}^M\|\theta_i-\theta_0\|_2^2$.
Then for any $n$,
$$
\inf_{\hat\theta_n}\sup_{\theta\in\Theta} \mathbb E_\mu[\|\hat\theta_n-\theta\|_2] \geq \rho_{\min}\times \frac{\sqrt{M}}{1+\sqrt{M}}\left(1-\frac{n\rho_{\avg}^2}{\sigma^2\log M} - 2\sqrt{\frac{n\rho_{\avg}^2}{2\sigma^2\log^2 M}}\right).
$$
\label{cor:fano}
\end{corollary}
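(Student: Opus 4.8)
The plan is to instantiate Tsybakov's Fano-type bound (Lemma~\ref{lem:fano}) with the Euclidean semidistance $\sd(\theta,\theta')=\|\theta-\theta'\|_2$ on $\mathbb R^D$ and separation parameter $\rho=\rho_{\min}$, and then to pass from the resulting probability lower bound to the claimed expectation lower bound via Markov's inequality together with the inclusion $\Theta'\subseteq\Theta$ granted by Proposition~\ref{prop:structured-linear-regression}. Condition~(2) of Lemma~\ref{lem:fano} is immediate: under each $P_j$ the design $\{z^i\}$ has a law that does not depend on the parameter, and conditionally on the design the responses are Gaussian with the common variance $\sigma^2$, so all the $P_j$ are mutually absolutely continuous. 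Condition~(1) asks that the members of $\Theta'$ be pairwise $2\rho_{\min}$-separated; by the very definition of $\rho_{\min}$ this holds for every pair containing $\theta_0$, and the corollary will only be invoked for families $\Theta'$ whose remaining pairs are separated by the same amount, which is transparent in each concrete construction used to prove Theorem~\ref{thm:lower-bound}.

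The only genuine computation is Condition~(3), i.e.\ bounding the averaged divergence $\frac1M\sum_{j=1}^M\kl(P_j\|P_0)$, and here the isotropy assumption of Theorem~\ref{thm:lower-bound} is exactly what is used. The joint law of $\{(z^i,Y^i)\}_{i=1}^n$ factorizes across $i$; the $z^i$-marginal is identical under $P_j$ and $P_0$ (it is $\mathcal N(0,I_d)$ for $F^{\cavg},F^{\cw}$, and the stacked vector $z^i$ is $\mathcal N(0,I_{Ld})$ for $F^{\rnn}$), and conditionally on $z^i$ one has $Y^i\sim\mathcal N(\langle z^i,\theta_j\rangle,\sigma^2)$ under $P_j$ versus $Y^i\sim\mathcal N(\langle z^i,\theta_0\rangle,\sigma^2)$ under $P_0$. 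By the chain rule for KL (the matching $z^i$-marginals contribute nothing) and the Gaussian KL formula,
\[
\kl(P_j\|P_0)=\sum_{i=1}^n\mathbb E\!\left[\frac{\langle z^i,\theta_j-\theta_0\rangle^2}{2\sigma^2}\right]=\frac{n\,\|\theta_j-\theta_0\|_2^2}{2\sigma^2},
\]
where the second equality uses $\mathbb E[z^i(z^i)^\top]=I_D$. Averaging over $j=1,\dots,M$ gives $\frac1M\sum_{j=1}^M\kl(P_j\|P_0)=\frac{n\rho_{\avg}^2}{2\sigma^2}$, so Condition~(3) holds with $\gamma=\frac{n\rho_{\avg}^2}{2\sigma^2\log M}$.

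Substituting this $\gamma$ into the conclusion of Lemma~\ref{lem:fano} (and using $2\gamma=\frac{n\rho_{\avg}^2}{\sigma^2\log M}$ and $2\sqrt{\gamma/\log M}=2\sqrt{n\rho_{\avg}^2/(2\sigma^2\log^2 M)}$) yields
\[
\inf_{\hat\theta}\sup_{\theta_j\in\Theta'}\Pr_j\!\left[\|\hat\theta-\theta_j\|_2\ge\rho_{\min}\right]\ \ge\ \frac{\sqrt M}{1+\sqrt M}\left(1-\frac{n\rho_{\avg}^2}{\sigma^2\log M}-2\sqrt{\frac{n\rho_{\avg}^2}{2\sigma^2\log^2 M}}\right).
\]
To finish, for any estimator and any $j$, Markov's inequality gives $\mathbb E_j\|\hat\theta-\theta_j\|_2\ge\rho_{\min}\Pr_j[\|\hat\theta-\theta_j\|_2\ge\rho_{\min}]$, and $\sup_{\theta\in\Theta}\mathbb E_\mu\|\hat\theta_n-\theta\|_2\ge\max_{0\le j\le M}\mathbb E_j\|\hat\theta_n-\theta_j\|_2$ because $\Theta'\subseteq\Theta$; combining these and taking the infimum over estimators delivers the stated inequality.

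Being a corollary, there is no serious obstacle here; the step requiring the most care is the KL computation, which relies essentially on the design being \emph{isotropic} Gaussian so that $\mathbb E[z^i(z^i)^\top]=I_D$ — this is precisely why Theorem~\ref{thm:lower-bound} is stated for isotropic inputs, and (as its remarks note) this costs no generality since isotropic Gaussian data is a special case of the setting covered by the upper bounds. The only point one must not overlook when later applying the corollary is that Lemma~\ref{lem:fano}'s separation hypothesis concerns all pairs of parameters, whereas $\rho_{\min}$ as defined controls only distances from $\theta_0$.
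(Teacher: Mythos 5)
Your proposal follows essentially the same route as the paper's proof: instantiate Lemma~\ref{lem:fano} with the Euclidean semidistance and $\rho=\rho_{\min}$, verify absolute continuity from the Gaussian noise, compute $\kl(P_j\|P_0)=n\|\theta_j-\theta_0\|_2^2/(2\sigma^2)$ using isotropy of the design, and set $\gamma=n\rho_{\avg}^2/(2\sigma^2\log M)$. If anything you are slightly more careful than the paper, which omits the explicit Markov step from probability to expectation and does not flag (as you correctly do) that $\rho_{\min}$ only certifies separation from $\theta_0$, so the all-pairs separation in Condition~(1) must be supplied by the downstream constant-weight-code construction.
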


The proof of Corollary \ref{cor:fano} involves some routine verifications of the conditions in Lemma \ref{lem:fano},
and is placed in the appendix.

The following lemma considers the special case when a certain number of components in $\Theta\subseteq\mathbb R^D$ are allowed to vary freely.
\begin{lemma}
Let $\Theta\subseteq\mathbb R^D$ be the parameter set induced by network $F$, and $\mathcal I\subseteq[D]$ be a subset of components.
Suppose for any $u\in\mathbb R^{|\mathcal I|}$, there exists $\theta\in\Theta$ such that $\theta$ restricted to $\mathcal I$ equals $u$.
Then there exists a finite subset $\Theta'\subseteq\Theta$ as in Corollary \ref{cor:fano},
with $\log M\asymp |\mathcal I|$ and $\rho_{\min}\asymp \rho_{\avg} \asymp \sqrt{|\mathcal I|}\epsilon$
for any $\epsilon>0$.
\label{lem:free-vary}
\end{lemma}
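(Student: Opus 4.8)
The plan is to construct the adversarial family $\Theta'$ by placing a packing of the free coordinate block $\mathcal I$ and lifting it into $\Theta$ via the hypothesis that these coordinates are unconstrained. Concretely, write $k := |\mathcal I|$. By the Varshamov--Gilbert bound, there exists a set of binary strings $\{\omega^{(0)},\omega^{(1)},\ldots,\omega^{(M)}\}\subseteq\{0,1\}^k$ with $\log M \asymp k$ such that the Hamming distance between any two distinct strings is at least $k/8$. For a scale parameter $\epsilon>0$ to be chosen, set $u^{(j)} := \epsilon\,\omega^{(j)}\in\mathbb R^k$. By the assumption of the lemma, for each $j\in\{0,\ldots,M\}$ there exists $\theta_j\in\Theta$ whose restriction to $\mathcal I$ equals $u^{(j)}$; we are free to choose the remaining $D-k$ coordinates of every $\theta_j$ to coincide (say, all equal to a single fixed completion), so that $\theta_j-\theta_0$ is supported on $\mathcal I$ and equals $\epsilon(\omega^{(j)}-\omega^{(0)})$ there. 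This defines $\Theta'=\{\theta_0,\ldots,\theta_M\}\subseteq\Theta$.

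Next I would verify the two metric quantities appearing in Corollary \ref{cor:fano}. Since each coordinate of $\omega^{(j)}-\omega^{(0)}$ lies in $\{-1,0,1\}$ and these vectors have at most $k$ nonzero entries, we immediately get $\|\theta_j-\theta_0\|_2^2 = \epsilon^2\|\omega^{(j)}-\omega^{(0)}\|_0 \le k\epsilon^2$, hence $\rho_{\avg}\le\sqrt{k}\,\epsilon$ and also $\|\theta_j-\theta_0\|_2\le\sqrt{k}\,\epsilon$ for every $j$. For the lower bound, the Varshamov--Gilbert separation gives $\|\theta_j-\theta_k\|_0\ge k/8$ for all distinct $j,k$, so $\|\theta_j-\theta_k\|_2\ge \epsilon\sqrt{k/8}$, and therefore $\rho_{\min}=\tfrac12\min_{j>0}\|\theta_0-\theta_j\|_2\gtrsim\sqrt{k}\,\epsilon$. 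Combining the two directions yields $\rho_{\min}\asymp\rho_{\avg}\asymp\sqrt{k}\,\epsilon = \sqrt{|\mathcal I|}\,\epsilon$, and $\log M\asymp k=|\mathcal I|$, which is exactly what the statement asserts. All the conditions of Corollary \ref{cor:fano} ($\Theta'$ finite, $\Theta'\subseteq\Theta$, the definitions of $\rho_{\min}$ and $\rho_{\avg}$) are satisfied by construction, so no further work is needed to make $\Theta'$ eligible for that corollary; the quantitative consequences (the actual lower bound on $\mathfrak M(n;F)$) are obtained afterward by plugging these scalings into Corollary \ref{cor:fano} and optimizing over $\epsilon$, but that step belongs to the subsequent case analysis for $F^{\cavg}$, $F^{\cw}$, $F^{\rnn}$, not to this lemma.

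I do not expect a serious obstacle here: the lemma is essentially a packaging of the standard Varshamov--Gilbert packing argument, and the only thing that needs care is the bookkeeping on the non-$\mathcal I$ coordinates --- one must use the freedom in the hypothesis to ensure the differences $\theta_j-\theta_0$ are exactly supported on $\mathcal I$ so that both $\rho_{\min}$ and $\rho_{\avg}$ are controlled purely by the packing in $\mathbb R^{|\mathcal I|}$. If one wanted the sharpest constants one could instead take $\{u^{(j)}\}$ to be a maximal $\tfrac{\epsilon\sqrt k}{2}$-packing of the ball of radius $\epsilon\sqrt k$ in $\mathbb R^{|\mathcal I|}$, which has $\log M\gtrsim|\mathcal I|$ by a volumetric argument and directly gives $\rho_{\min}\asymp\rho_{\avg}\asymp\sqrt{|\mathcal I|}\,\epsilon$; either route works, and I would present whichever yields the cleaner writeup.
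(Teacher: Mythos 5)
Your construction is essentially the paper's: place a well-separated binary code on the free coordinate block $\mathcal I$, scale it by $\epsilon$, and lift each codeword into $\Theta$ via the hypothesis; the paper uses a constant-weight code where you use Varshamov--Gilbert, but these are interchangeable here, and the resulting bounds $\log M\asymp|\mathcal I|$, $\rho_{\min}\gtrsim\sqrt{|\mathcal I|}\,\epsilon$, $\rho_{\avg}^2\gtrsim|\mathcal I|\,\epsilon^2$ are obtained identically. The one place you go beyond what the hypothesis licenses is the claim that the remaining $D-|\mathcal I|$ coordinates of every $\theta_j$ can be chosen to coincide: the hypothesis only asserts existence of \emph{some} $\theta\in\Theta$ with the prescribed restriction, and in the actual instantiations (e.g.\ Lemma~\ref{lem:linear-fcavg}) the completion is a function of $u$ that varies with $j$, so $\theta_j-\theta_0$ need not be supported on $\mathcal I$. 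This step is only used for the upper bound $\rho_{\avg}\lesssim\sqrt{|\mathcal I|}\,\epsilon$; the paper's own proof sidesteps it by proving only the lower bounds on $\rho_{\min}$ and $\rho_{\avg}$ and leaving the matching upper bound (which is what actually feeds the KL condition in Corollary~\ref{cor:fano}) implicit, to be verified from the explicit completions in the case-by-case lemmas. So your argument is not wrong in spirit, but as written that sentence should be replaced by a check, per network, that the lifted $\theta(u)$ satisfies $\|\theta(u)-\theta(u')\|_2\lesssim\|u-u'\|_2$.
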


Lemma \ref{lem:free-vary} will be proved in the appendix, based on the standard construction of separable constant-weight codes (e.g., \citep[Lemma 9]{wang2016noise}, \citep[Theorem 7]{graham1980lower}).
It will play a central role in the proofs of lower bounds for the networks $F^{\cavg}$, $F^{\cw}$ and $F^{\rnn}$, as we state separately below.


\subsection{Proof of minimax lower bound for network $F^{\cavg}$}

We shall prove the following lemma, showing that the first $m$ components of $\theta\in\mathbb R^d$ can vary freely under $F^{\cavg}$.
\begin{lemma}
Let $\mathcal I=\{1,\ldots,m\}$ and $\Theta=\{\theta(w):w\in\mathbb R^m\}\subseteq\mathbb R^d$, where $\theta(w) = \sum_{\ell=0}^{(d-m)/s}\sS_s^\ell(w)$
as defined in Proposition \ref{prop:structured-linear-regression}.
Then for any $u\in\mathbb R^m$, there exists $\theta\in\Theta$ such that $\theta$ restricted on $\mathcal I$ equals $u$.
\label{lem:linear-fcavg}
\end{lemma}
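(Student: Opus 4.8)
The plan is to show that the linear map $w\mapsto \theta(w)|_{\mathcal I}$ from $\mathbb R^m$ to $\mathbb R^m$ is surjective by exhibiting an explicit preimage for each target $u\in\mathbb R^m$.

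First I would record the block structure of $\theta(w)$ on its first $m$ coordinates. Write $J=m/s$ and let $\mathcal I_1,\ldots,\mathcal I_J$ denote the consecutive length-$s$ segments partitioning $\{1,\ldots,m\}$, with $w(\mathcal I_j)$ the corresponding segment of $w\in\mathbb R^m$. Since $\sS_s^\ell(w)$ is supported on the coordinates $\{\ell s+1,\ldots,\ell s+m\}$, for $\ell\geq J$ this support lies entirely in $\{m+1,\ldots,d\}$ (using $m=Js$) and thus contributes nothing to $\theta(w)|_{\mathcal I}$. Keeping only $\ell=0,\ldots,J-1$, the same segment computation as in Eq.~(\ref{eq:phiseg-cavg-1}) (with $w'=0$) yields $\theta(w)(\mathcal I_j)=\sum_{\ell=1}^{j} w(\mathcal I_\ell)$ for every $j\in\{1,\ldots,J\}$.

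Next, given $u\in\mathbb R^m$ split into segments $u(\mathcal I_1),\ldots,u(\mathcal I_J)$, I would define $w$ segment by segment via the telescoping relations $w(\mathcal I_1):=u(\mathcal I_1)$ and $w(\mathcal I_j):=u(\mathcal I_j)-u(\mathcal I_{j-1})$ for $j=2,\ldots,J$. A one-line induction on $j$ then shows $\sum_{\ell=1}^{j} w(\mathcal I_\ell)=u(\mathcal I_j)$, i.e.\ $\theta(w)(\mathcal I_j)=u(\mathcal I_j)$ for all $j$, hence $\theta(w)|_{\mathcal I}=u$, which is exactly the claim. Since $\theta(w)\in\Theta$ by construction, this produces the required element.

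There is essentially no obstacle here beyond indexing bookkeeping; the only point worth double-checking is that shifts with $\ell\geq J$ do not reach the first $m$ coordinates, which is precisely where the divisibility assumption $m=Js$ is used. (The telescoping solution for $w$ is in fact unique, so the map is a bijection, but only surjectivity is needed to invoke Lemma~\ref{lem:free-vary}.)
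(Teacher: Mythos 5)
Your proof is correct and follows essentially the same route as the paper: identify the first $m$ coordinates of $\theta(w)$ as the cumulative sums $\sum_{\ell\le j}w(\mathcal I_\ell)$ of the length-$s$ segments of $w$ (using $m=Js$ so that shifts $\ell\ge J$ miss $\mathcal I$), then invert by telescoping. Your explicit first-difference formula $w(\mathcal I_j)=u(\mathcal I_j)-u(\mathcal I_{j-1})$ is in fact the correct inverse, whereas the paper's displayed construction subtracts the cumulative sum $\sum_{k<j}u(\mathcal J_k)$, which appears to be a typo (it fails to invert the map once $J\ge 3$).
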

\begin{proof}
Recall that $J=m/s$ is a positive integer. Let $u(\mathcal J_1),\ldots,u(\mathcal J_J)$ denote the $J$ segments of $u$, each of length $s$.
To construct the weight vector $w\in\mathbb R^m$, we decompose $w$ into $w(\mathcal J_1),\ldots,w(\mathcal J_J)$ as well, and construct
$$
w(\mathcal J_j) = u(\mathcal J_j) - \sum_{k=0}^{j-1}u(\mathcal J_k), \;\;\;\;\;\; j=0,1,\ldots,J-1.
$$

Because $\theta(w) = \sum_{\ell=0}^{(d-m)/s}\sS_s^\ell(w)$, it is easy to verify that $w\in\mathbb R^m$ constructed above and its corresponding $\theta(w)$
has the same first $m$ components as $u$. The lemma is thus proved.
\end{proof}

With Lemma \ref{lem:linear-fcavg}, Eq.~(\ref{eq:lower-bound-conv1}) in Theorem \ref{thm:lower-bound} immediately follows from Corollary \ref{cor:fano}
and Lemma \ref{lem:free-vary}, with $\epsilon$ in Lemma \ref{lem:free-vary} set as $\epsilon\asymp \sigma\sqrt{m/n}$.

\subsection{Proof of minimax lower bound for network $F^{\cavg}$}

Because $d/s+m\leq 2\max(d/s,m)$, it suffices to prove minimax lower bounds of $\sqrt{\sigma^2m/n}$ and $\sqrt{\sigma^2 d/(sn)}$ separately.
\begin{lemma}
Let $\mathcal I_1=\{1,\ldots,m\}$ and $\mathcal  I_2 = \{1,1+s,\ldots,1+(d/s-1)s\}$.
Let also $\Theta=\{\theta(w,a):w\in\mathbb R^m,a\in\mathbb R^{d/s}\}\subseteq\mathbb R^d$ be the induced parameter space,
where $\theta(w,a) = \sum_{\ell=0}^{(d-m)/s}a_\ell \sS_s^\ell(w)$.
Then for any $\mathcal I\in\{\mathcal I_1,\mathcal I_2\}$ and $u\in\mathbb R^{|\mathcal I|}$,
there exists $\theta\in\Theta$ such that $\theta$ restricted on $\mathcal I$ equals $u$.
\label{lem:linear-fcw}
\end{lemma}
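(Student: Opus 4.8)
The plan is to handle the two index sets $\mathcal I_1$ and $\mathcal I_2$ separately, in each case exhibiting an explicit pair $(w,a)$ whose induced parameter $\theta(w,a)=\sum_{\ell=0}^{(d-m)/s}a_\ell\,\sS_s^\ell(w)$ reproduces a prescribed vector $u$ on the coordinates in question. For $\mathcal I_1=\{1,\dots,m\}$ I would reduce to the average-pooling case already proved in Lemma~\ref{lem:linear-fcavg}: taking the pooling weights to be all ones, $a=\mathbf 1$, turns $\theta(w,a)$ into $\sum_\ell\sS_s^\ell(w)$, which is exactly the $F^{\cavg}$ parameterization, so the $F^{\cavg}$ parameter set is contained in that of $F^{\cw}$. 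Given $u\in\mathbb R^m$, I reuse the filter from Lemma~\ref{lem:linear-fcavg} (built segment-wise via $w(\mathcal J_j)=u(\mathcal J_j)-\sum_{k<j}u(\mathcal J_k)$) together with $a=\mathbf 1$, and its first $m$ coordinates then match $u$ by that lemma.

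For $\mathcal I_2$ the key observation is that the coordinates of $\theta(w,a)$ at the stride positions $1+\ell s$ depend on $w$ only through its leading entries $w_1,w_{s+1},\dots,w_{(m/s-1)s+1}$, and in fact form a discrete convolution of those entries with $a$. I would \emph{localize} the filter to trivialize this convolution: set $w=e_1$, the first standard basis vector of $\mathbb R^m$. Then $\sS_s^\ell(e_1)=e_{\ell s+1}\in\mathbb R^d$, so $\theta(e_1,a)=\sum_{\ell=0}^{(d-m)/s}a_\ell\,e_{\ell s+1}$; that is, the coordinate of $\theta$ at position $1+\ell s$ equals precisely $a_\ell$. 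Choosing $a_\ell=u_{\ell+1}$ therefore realizes $u$ on every stride position $1+\ell s$ with $0\le\ell\le(d-m)/s$.

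The step I expect to be the main obstacle is the right boundary: the localized construction reaches only the leading $(d-m)/s+1$ stride positions, whereas $\mathcal I_2$ as written also names the last $m/s-1$ stride positions (those indexed by $\ell>(d-m)/s$), which no translate $\sS_s^\ell(e_1)$ can touch. Covering those requires reintroducing the remaining leading filter entries $w_{s+1},w_{2s+1},\dots$ and solving the small triangular linear system they induce together with the last few pooling weights---bookkeeping which products $a_\ell w_{ks+1}$ land on which coordinate and checking that enough free parameters survive (one may in fact need to shrink $\mathcal I_2$ slightly, since over the reals a degree-$(d/s-1)$ polynomial need not factor with the exact degrees $a$ and $w$ would force). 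For the downstream lower bound $\sqrt{\sigma^2 d/(sn)}$ this is inessential: the localized construction by itself already exhibits $(d-m)/s+1$ freely varying coordinates, of the same order as $d/s$ in the regime $m\ll d$ relevant here, which is all that Corollary~\ref{cor:fano} and Lemma~\ref{lem:free-vary} require to complete the proof of Eq.~\eqref{eq:lower-bound-conv2}.
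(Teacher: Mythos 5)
Your construction is the same as the paper's: for $\mathcal I_1$ the paper takes $a=(1,0,\ldots,0)$ and $w=u$ (your detour through $a=\mathbf 1$ and Lemma~\ref{lem:linear-fcavg} also works), and for $\mathcal I_2$ the paper takes exactly your $w=e_1$, $a=u$. The boundary obstruction you flag is real and the paper's proof silently ignores it: with $w=e_1$ the translates $\sS_s^\ell(e_1)$ reach only the stride positions $1,1+s,\ldots,1+(d-m)$, so the last $m/s-1$ elements of $\mathcal I_2$ are forced to zero; moreover your polynomial-factorization remark shows the lemma as literally stated is false for $m/s\geq 2$ (the restriction of $\theta$ to $\mathcal I_2$ is the convolution of $a$ with $(w_1,w_{1+s},\ldots,w_{1+(m/s-1)s})$, and e.g.\ the coefficient vector of $x^2+1$ is not a product of two real linear factors). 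The correct fix is the one you give: shrink $\mathcal I_2$ to its first $(d-m)/s+1$ elements, which is still of order $d/s$ in the relevant regime $m\ll d$, so Corollary~\ref{cor:fano} and Lemma~\ref{lem:free-vary} still yield Eq.~(\ref{eq:lower-bound-conv2}) unchanged.
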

\begin{proof}
We first prove the lemma for $\mathcal I=\mathcal I_1$ and $u\in\mathbb R^{|\mathcal I_1|}$.
Consider $a = (1,0,\ldots,0)$ and $w=u$.
Then $\theta(w,a)=(w, 0, \ldots, 0)$ and therefore the first $m$ components of $w$ equal $u$.

We next prove the lemma for $\mathcal I=\mathcal I_2$ and $u\in\mathbb R^{|\mathcal I_2|}$.
Consider $w=(1,0,\ldots,0)$ and $a=u$..
Then $\theta(w,a) = (a_0,0,\ldots,0,a_1,0,\ldots)$ and therefore $\theta(w,a)$ restricted to $\mathcal I_2$
equal $u$.
\end{proof}

With Lemma \ref{lem:linear-fcavg}, Eq.~(\ref{eq:lower-bound-conv2}) in Theorem \ref{thm:lower-bound} immediately follows from Corollary \ref{cor:fano}
and Lemma \ref{lem:free-vary}, with $\epsilon$ in Lemma \ref{lem:free-vary} set as $\epsilon\asymp \sigma\sqrt{\max\{m,d/s\}/n}$.

\subsection{Proof of minimax lower bound for network $F^{\rnn}$}

We establish the following lemma showing that for the network $F^{\rnn}$ and its equivalent linear parameter $\theta$ defined in Proposition \ref{prop:structured-linear-regression},
the last $\min\{rd,Ld\}$ components of $\theta$ are free to vary.
\begin{lemma}
Let $\mathcal I=\{1,2,\ldots,\min(rd,Ld)\}$ and $\Theta\subseteq\mathbb R^{Ld}$ be the parameter space induced by $F^{\rnn}$, as shown in Proposition \ref{prop:structured-linear-regression}.
Then for any $u\in\mathbb R^{|\mathcal I|}$, there exists $\theta\in\Theta$ such that $\theta$ restricted on $\mathcal I$ equals $u$.
\label{lem:linear-frnn}
\end{lemma}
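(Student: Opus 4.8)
The plan is to regard the $Ld$-dimensional parameter $\theta$ of Proposition~\ref{prop:structured-linear-regression} as an $L\times d$ matrix of $d$-dimensional blocks and to reduce the statement to a rank condition on a matrix depending only on $A$. Writing $\theta=(\vct 1^\top A^{L-1}B,\ \vct 1^\top A^{L-2}B,\ \ldots,\ \vct 1^\top B)$, the $a$-th length-$d$ block of $\theta$ is the row vector $\vct 1^\top A^{L-a}B$ for $a=1,\ldots,L$. Stacking these blocks as the rows of a matrix $\Theta\in\mathbb R^{L\times d}$ gives $\Theta=CB$, where $C\in\mathbb R^{L\times r}$ is the matrix whose $a$-th row is $\vct 1^\top A^{L-a}$ (so $C$ depends only on $A$ and the fixed all-ones pooling vector $\vct 1$). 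Set $k:=\min(r,L)$; then $|\mathcal I|=\min(rd,Ld)=kd$, and $\mathcal I=\{1,\ldots,kd\}$ is exactly the index set of the first $k$ blocks of $\theta$, i.e.\ of the top $k$ rows of $\Theta$, which equal $C_{[k]}B$ where $C_{[k]}\in\mathbb R^{k\times r}$ denotes the submatrix formed by the first $k$ rows of $C$.

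Next I would choose $A$ so that $C_{[k]}$ has full row rank $k$. Take $A=\diag(\lambda_1,\ldots,\lambda_r)$ with $\lambda_1,\ldots,\lambda_r$ distinct and nonzero, so that the $a$-th row of $C$ is $(\lambda_1^{L-a},\ldots,\lambda_r^{L-a})$. Factoring $\lambda_j^{L-k}$ out of the $j$-th column yields $C_{[k]}=\widetilde V\,\diag(\lambda_1^{L-k},\ldots,\lambda_r^{L-k})$, where $\widetilde V\in\mathbb R^{k\times r}$ has entries $\widetilde V_{a,j}=\lambda_j^{k-a}$; since the $\lambda_j$ are distinct, $\widetilde V$ (a row-reversed submatrix of a transposed Vandermonde matrix) has full row rank $k$, and since the $\lambda_j$ are nonzero the diagonal factor is invertible, so $\rank(C_{[k]})=k\le r$. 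Hence $B\mapsto C_{[k]}B$ is a surjection $\mathbb R^{r\times d}\to\mathbb R^{k\times d}$ (solve for one column of $B$ at a time), so there is $R\in\mathbb R^{r\times k}$ with $C_{[k]}R=I_k$. Given $u\in\mathbb R^{kd}$, reshape it into $T\in\mathbb R^{k\times d}$ whose $a$-th row is the $a$-th length-$d$ sub-block of $u$, and set $B:=RT$ together with $A:=\diag(\lambda_1,\ldots,\lambda_r)$; the resulting pair lies in the parameter space, and the first $k$ blocks of the induced $\theta$ are $C_{[k]}B=C_{[k]}RT=T$, i.e.\ $\theta$ restricted to $\mathcal I$ equals $u$, as required.

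The only step that is not routine linear algebra is the choice of $A$ making $C_{[k]}$ full rank: this is a Krylov/observability condition on the pair $(A^\top,\vct 1)$, and the diagonal construction with distinct nonzero eigenvalues resolves it via a Vandermonde argument. (Alternatively one may take any invertible $A$ for which $\vct 1^\top,\vct 1^\top A,\ldots,\vct 1^\top A^{k-1}$ are linearly independent, since $\vct 1^\top A^{L-a}=\vct 1^\top A^{k-a}A^{L-k}$ and right-multiplication by the invertible matrix $A^{L-k}$ preserves rank; the diagonal choice is the cleanest instance.) No further structure of $\Theta$ enters, because only the first $kd$ coordinates of $\theta$ are constrained while the remaining coordinates are left arbitrary.
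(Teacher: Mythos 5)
Your proof is correct and follows essentially the same route as the paper's: choose $A$ diagonal with distinct eigenvalues so that the relevant rows $\vct 1^\top A^{L-a}$ form a full-row-rank Vandermonde-type matrix, then solve linearly for $B$ (column by column, or via a right inverse). The only cosmetic difference is that you target the \emph{first} $\min(r,L)$ blocks of $\theta$ --- matching the lemma's literal index set $\mathcal I=\{1,\ldots,\min(rd,Ld)\}$ --- which forces you to additionally require the eigenvalues to be nonzero and to factor out the invertible power $A^{L-k}$, whereas the paper's proof works with the \emph{last} $\min(r,L)$ blocks (powers $A^0,\ldots,A^{r'-1}$) and avoids that extra step.
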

\begin{proof}
Denote $r'=\min\{r,L\}$ and for any $u\in\mathbb R^{|\mathcal I|}$, let $u(\mathcal J_1),\ldots,u(\mathcal J_{r'})\in\mathbb R^d$
be its $r'$ segments, each of length $d$.
Let also $\theta(\mathcal J_1),\ldots,\theta(\mathcal J_{r'})$ be the corresponding $d$-dimensional segments of the last $r'd$ components of $\theta$,
corresponding to an RNN network $F^{\rnn}$ with weight matrices $A\in\mathbb R^{r\times r}$ and $B\in\mathbb R^{r\times d}$.
By definition, $\theta(\mathcal J_{\ell}) = \vct 1^\top A^{r'-\ell} B$.

Consider diagonal matrix $A=\diag(\{a_i\}_{i=1}^r)$.
Because $\theta(\mathcal J_\ell)=u(\mathcal J_\ell)$ for all $\ell\in[r']$,
we have that
\begin{equation}
\sum_{i=1}^r a_i^\ell b_{ij} = [u(\mathcal J_{r'-\ell+1})]_j, \;\;\;\;\;\;\ell\in\{0,\ldots,r'-1\}, \;\; j\in\{1,\ldots,d\}.
\label{eq:rnn-lowerbound-eq1}
\end{equation} 
Define matrix $G\in\mathbb R^{r\times r'}$ as $G_{i\ell} = \{a_i^\ell\}_{i,\ell=1}^{r,r'}$.
Subsequently, Eq.~(\ref{eq:rnn-lowerbound-eq1}) can be compactly rewritten as
\begin{equation}
G c_j = v_j, \;\;\;\;\;\; j\in\{1,\ldots,d\},
\label{eq:rnn-lowerbound-eq2}
\end{equation}
where $c_j=\{b_{ij}\}_{i=1}^r$ and $v_j = \{[u(\mathcal J_{r'-\ell+1})]_{\ell=0}^{r'-1}\}$ are both $r$-dimensional vectors.
Because $r'\leq r$ and the vectors $\{c_j\}$ ``partition'' the parameter matrix $B\in\mathbb R^{r\times d}$,
to prove the existence of such $\{c_j\}$ for any $\{v_j\}$
we only need to show that the rows of $G$ are linearly independent.
By taking $a_i := i/r$, it is clear that $G$ has linearly independent rows because it is a Vandermonde matrix with distinct roots $\{a_i\}_{i=1}^r$.
\end{proof}

With Lemma \ref{lem:linear-frnn}, Eq.~(\ref{eq:lower-bound-rnn}) in Theorem \ref{thm:lower-bound} follows from Corollary \ref{cor:fano}
and Lemma \ref{lem:free-vary}, with $\epsilon$ in Lemma \ref{lem:free-vary} set as $\epsilon\asymp \sigma\sqrt{\min\{rd,Ld\}/n}$.

\section{Experiments}
\label{sec:exp}

In this section we use simulations to verify our theoretical findings..
We first consider CNNs.
We let the ambient dimension $d$ be $64$ and the input distribution be Gaussian with mean $0$ and identity covariance.
In all plots, CNN represents using convolutional parameterization corresponding to Eq.~\eqref{eqn:conv_filter_avg_pooling} or Eq.~\eqref{eqn:two_layer} and FNN represents using fully connected parametrization.

\begin{figure*}[t!]
	\centering
	\begin{subfigure}[t]{0.29\textwidth}
		\includegraphics[width=\textwidth]{./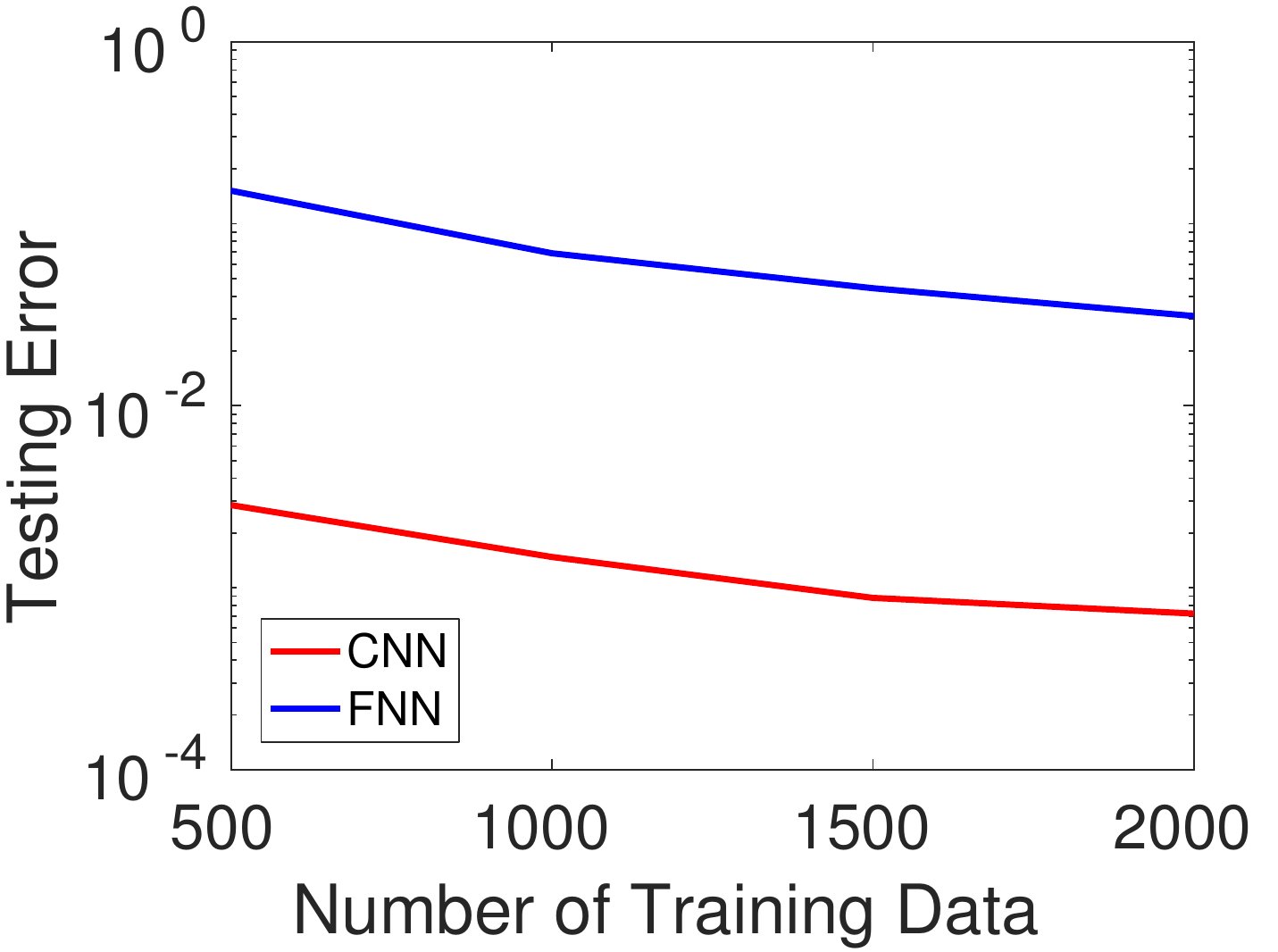}
		\caption{Filter size $m=2$.}
	\end{subfigure}	
	\quad
	\begin{subfigure}[t]{0.29\textwidth}
		\includegraphics[width=\textwidth]{./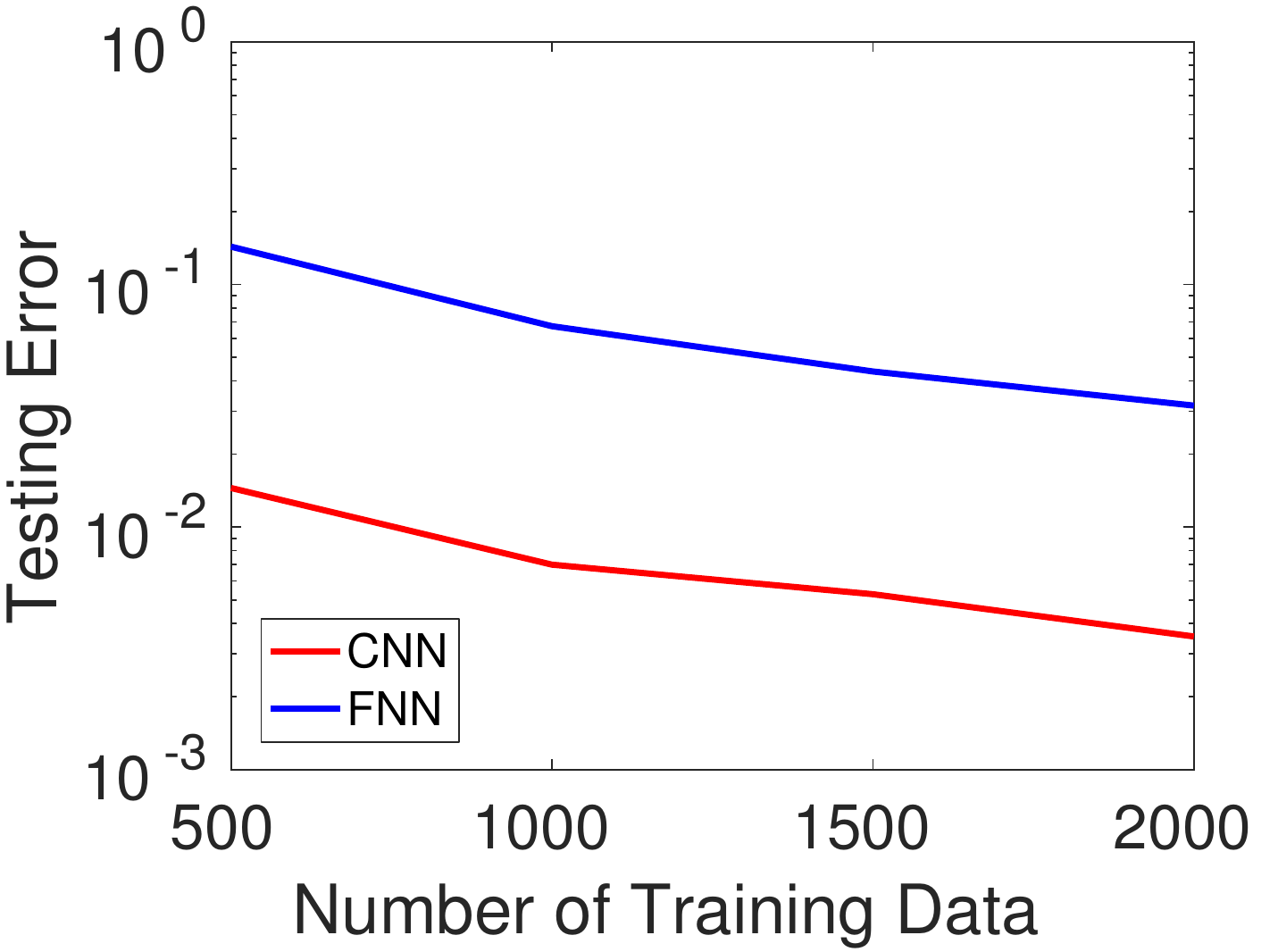}
		\caption{Filter size $m=8$.}
	\end{subfigure}	
	\quad
	\begin{subfigure}[t]{0.29\textwidth}
		\includegraphics[width=\textwidth]{./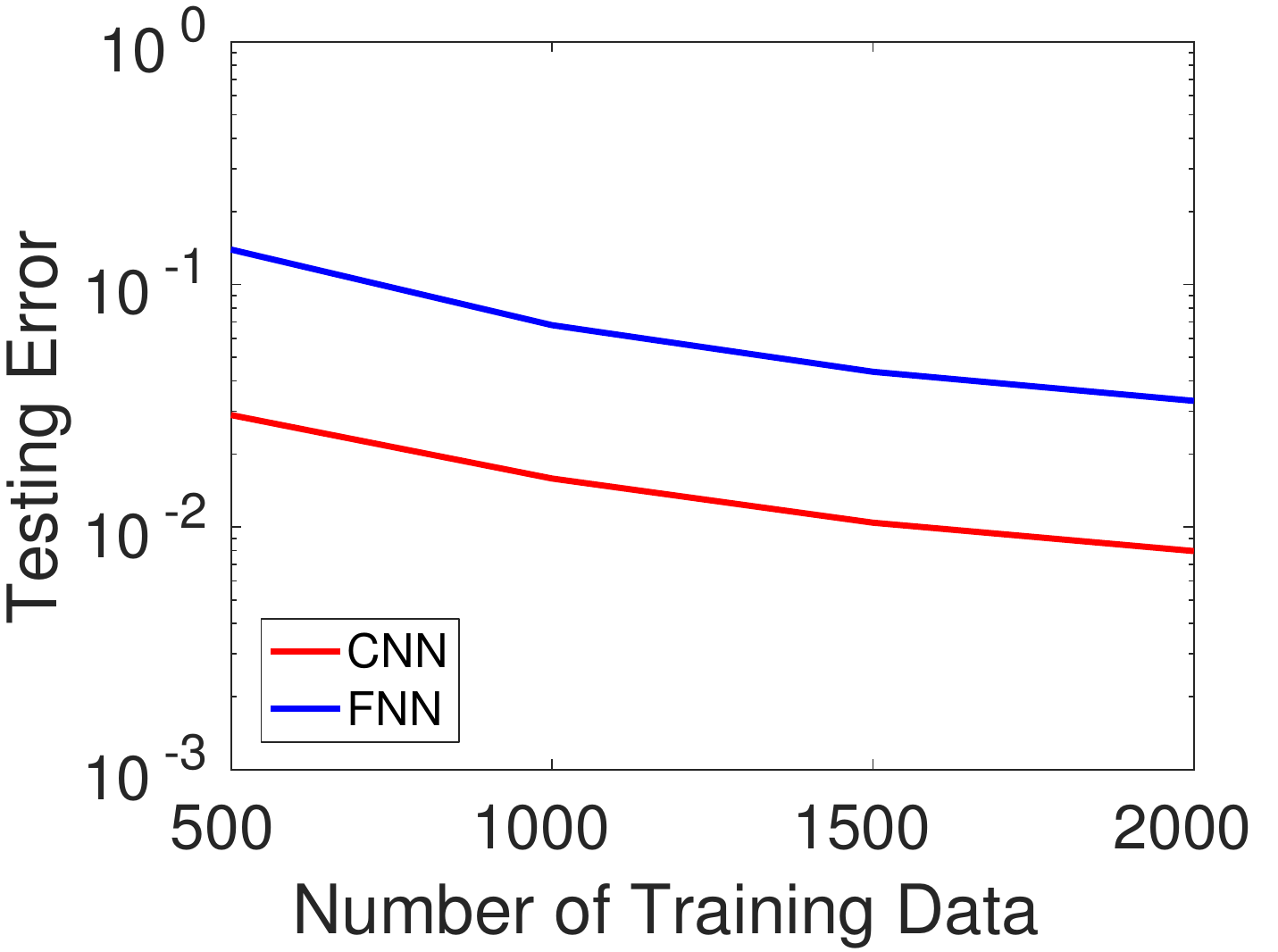}
		\caption{Filter size $m=16$.}
	\end{subfigure}	
	\caption{Experiments on the problem of estimating a convolutional filter with average pooling described in Eq.~\eqref{eqn:conv_filter_avg_pooling} with stride size $s=1$.
	}
	\label{fig:filter_stride1}
\end{figure*}

\begin{figure*}[t!]
	\centering
	\begin{subfigure}[t]{0.29\textwidth}
		\includegraphics[width=\textwidth]{./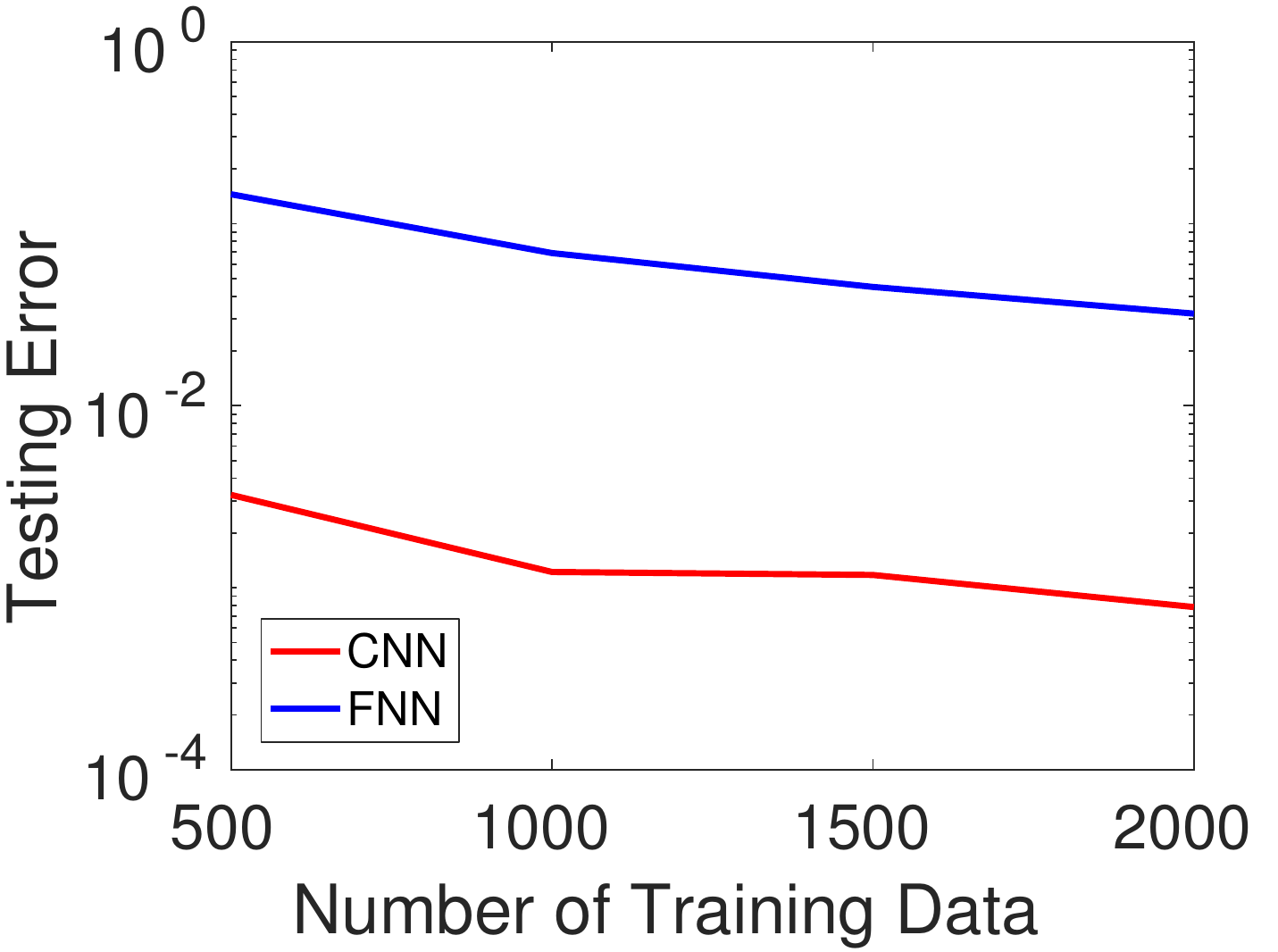}
		\caption{Filter size $m=2$.}
	\end{subfigure}	
	\quad
	\begin{subfigure}[t]{0.29\textwidth}
		\includegraphics[width=\textwidth]{./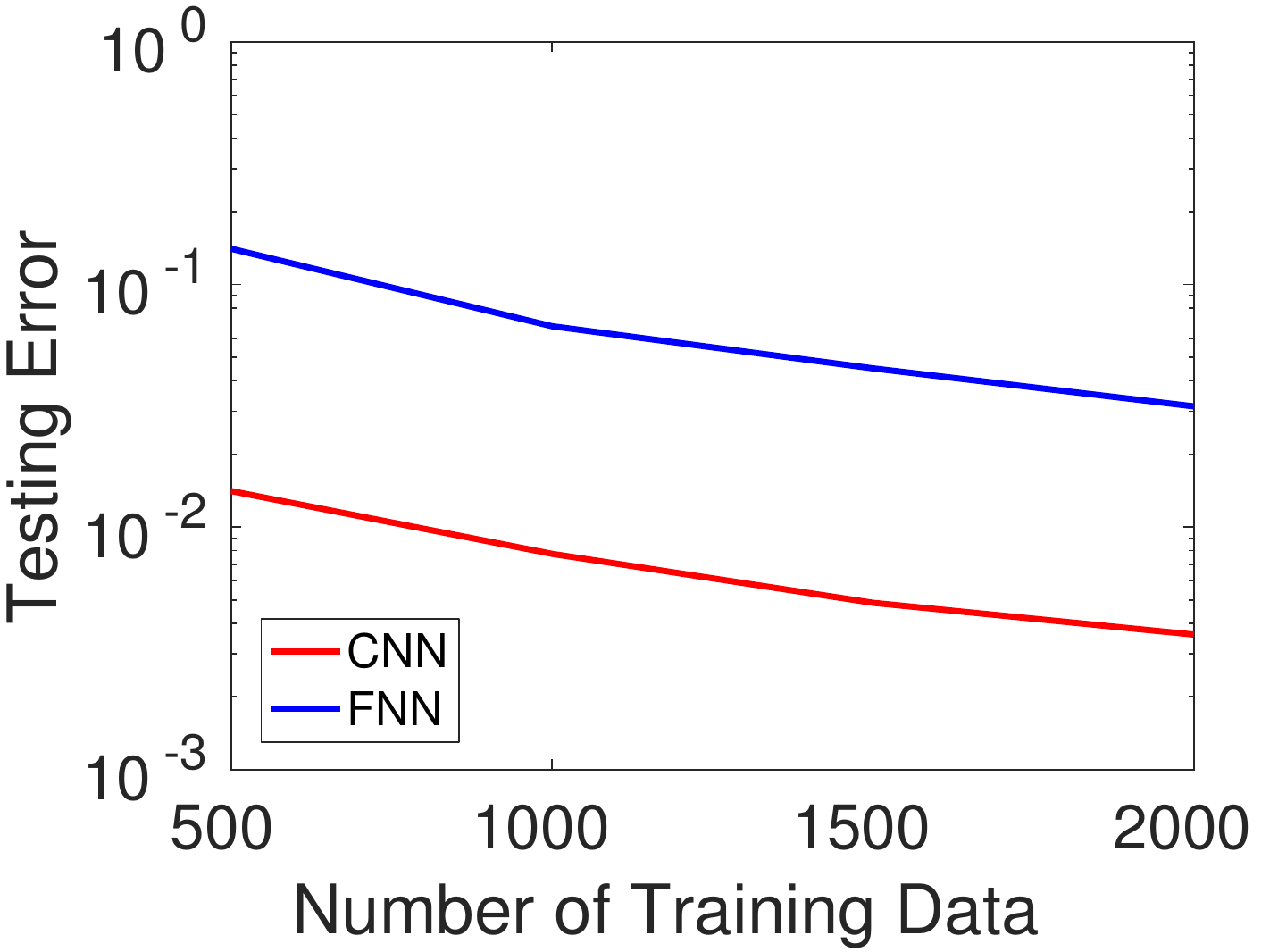}
		\caption{Filter size $m=8$.}
	\end{subfigure}	
	\quad
	\begin{subfigure}[t]{0.29\textwidth}
		\includegraphics[width=\textwidth]{./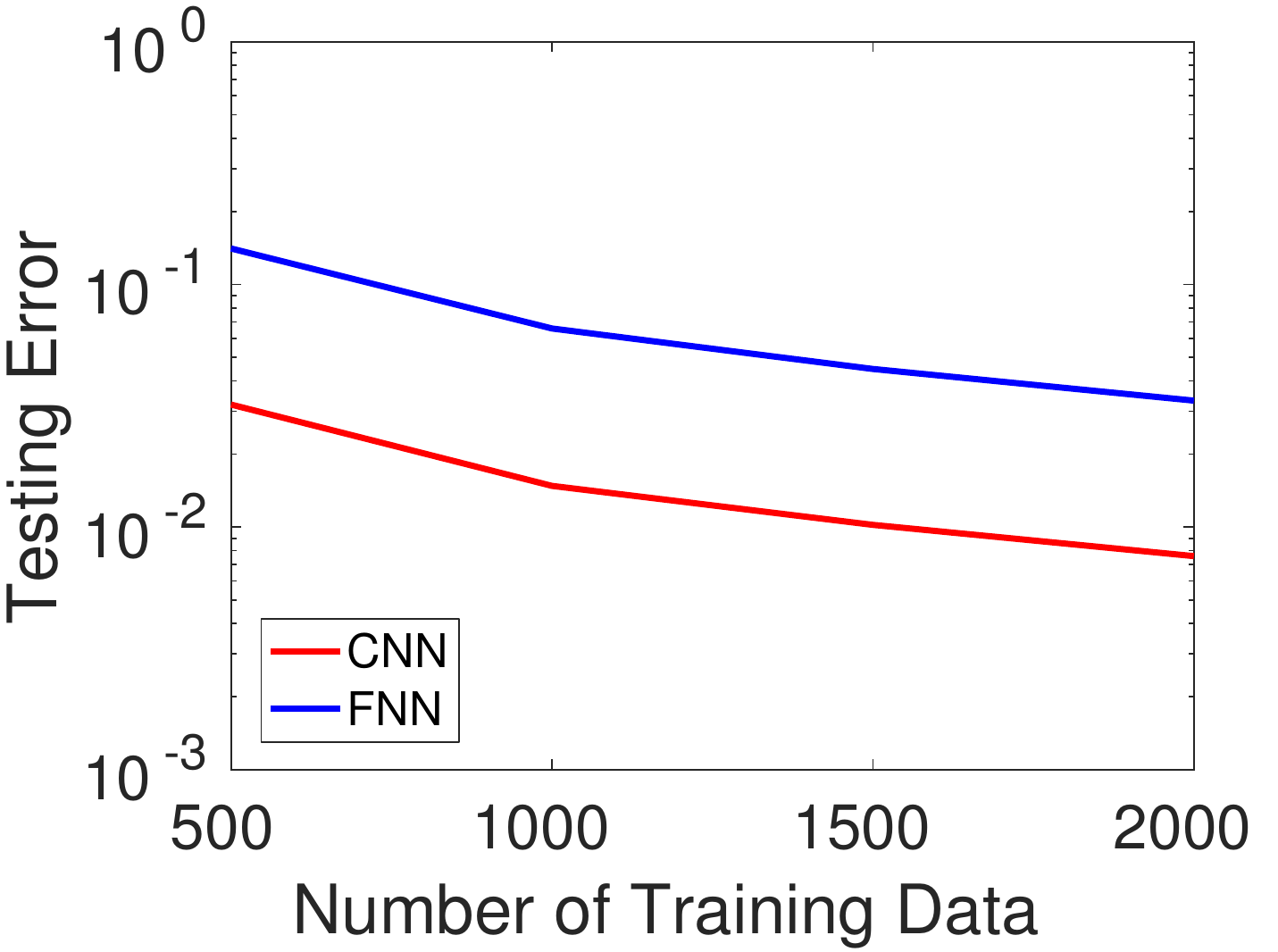}
		\caption{Filter size $m=16$.}
	\end{subfigure}	
	\caption{Experiments on the problem of estimating a convolutional filter with average pooling described in Eq.~\eqref{eqn:conv_filter_avg_pooling} with stride size $s=m$, i.e., non-overlapping.
	}
	\label{fig:filter_nonoverlap}
\end{figure*}

In Figure~\ref{fig:filter_stride1} and Figure~\ref{fig:filter_nonoverlap}, we consider the problem of estimating a convolutional filter with average pooling.
We vary the number of samples, the dimension of filters and the stride size.
Here we compare parameterizing the prediction function as a $d$-dimensional linear predictor and as a convolutional filter followed by average pooling.
Experiments show CNN parameterization is consistently better than the FNN parameterization. 
Further, as number of training samples increases, the prediction error goes down and as the dimension of filter increases, the error goes up.
These facts qualitatively justify our derived error bound $\widetilde{O}\left(\frac{m}{n}\right)$.
Lastly, in Figure~\ref{fig:filter_stride1} we choose stride $s=1$ and in Figure~\ref{fig:filter_nonoverlap} we choose stride size equals to the filter size $s=m$, i.e., non-overlapping.
Our experiment shows the stride does \emph{not} affect the prediction error in this setting which coincides our theoretical bound in which there is no stride size factor.

In Figure~\ref{fig:two_layer}, we consider the one-hidden-layer CNN model $F^{\cw}$.
Here we fix the filter size $m=8$ and vary the number of training samples and the stride size.
When stride $s=1$, convolutional parameterization has the same order parameters  as the linear predictor parameterization ($r=57$ so $r+m = 65 \approx d = 64$) and Figure~\ref{fig:two_layer_s1} shows they have similar performances.
In Figure~\ref{fig:two_layer_s4} and Figure~\ref{fig:two_layer_s8} we choose the stride to be $m/2 = 4$ and $m = 8$ (non-overlapping), respectively.
Note these settings have less parameters ($r+m = 23$ for $s=4$ and $r+m = 16$ for $s=8$) than the case when $s=1$ and so CNN gives better performance than FNN.

We conduct similar experiments to compare RNN and FNN in Figure~\ref{fig:rnn}.
We set input dimension $d=50$ and length of the sequence $L=50$.
Again we use Gaussian input and vary the number of hidden units and number of training data.
From Figure~\ref{fig:rnn}, it is clear that RNN parameterization requires much fewer samples than the naive FNN parameterization.

\begin{figure*}[t!]
	\centering
	\begin{subfigure}[t]{0.29\textwidth}
		\includegraphics[width=\textwidth]{./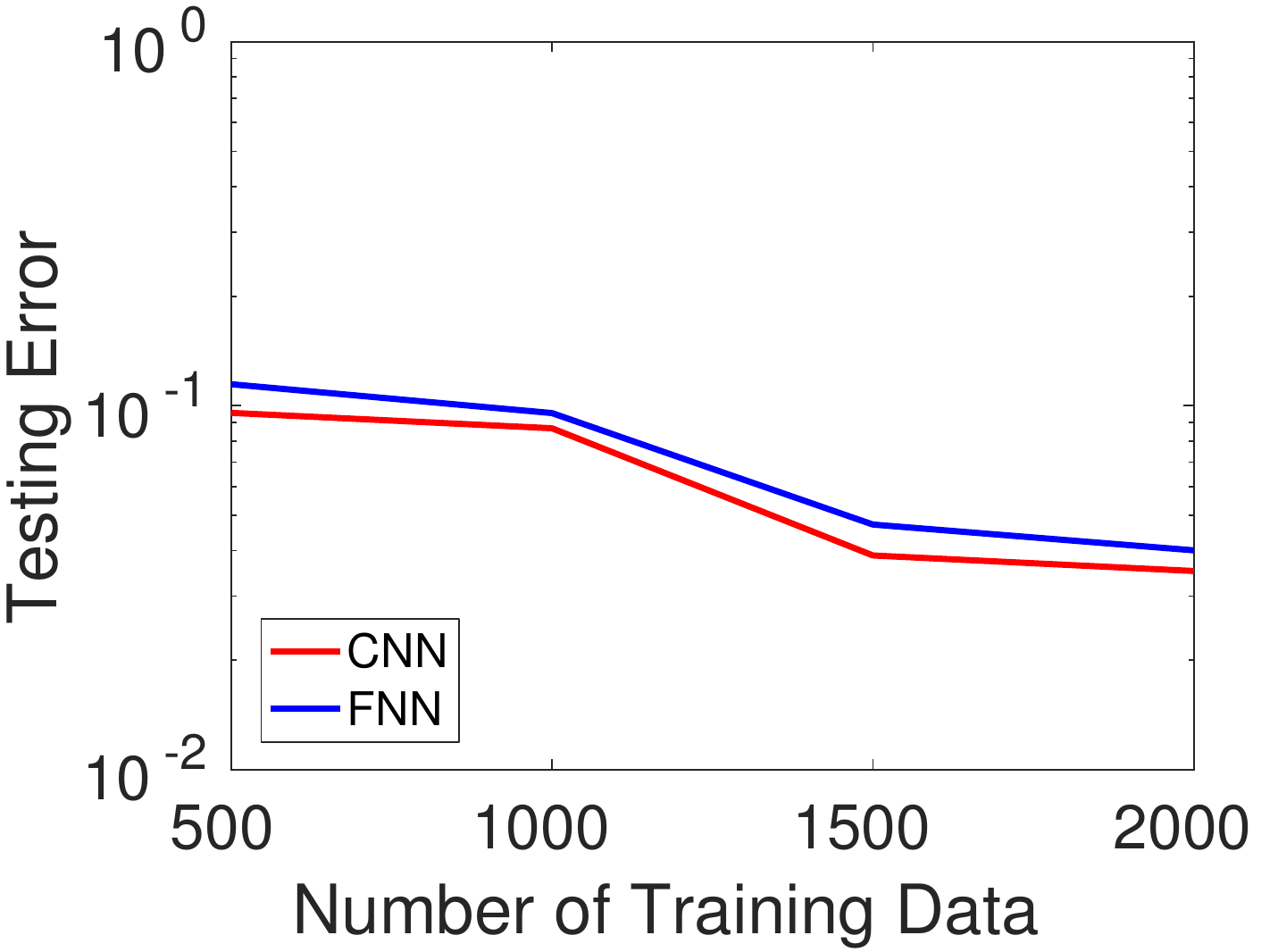}
		\caption{Stride size $s=1$.}
		\label{fig:two_layer_s1}
	\end{subfigure}	
	\quad
	\begin{subfigure}[t]{0.29\textwidth}
		\includegraphics[width=\textwidth]{./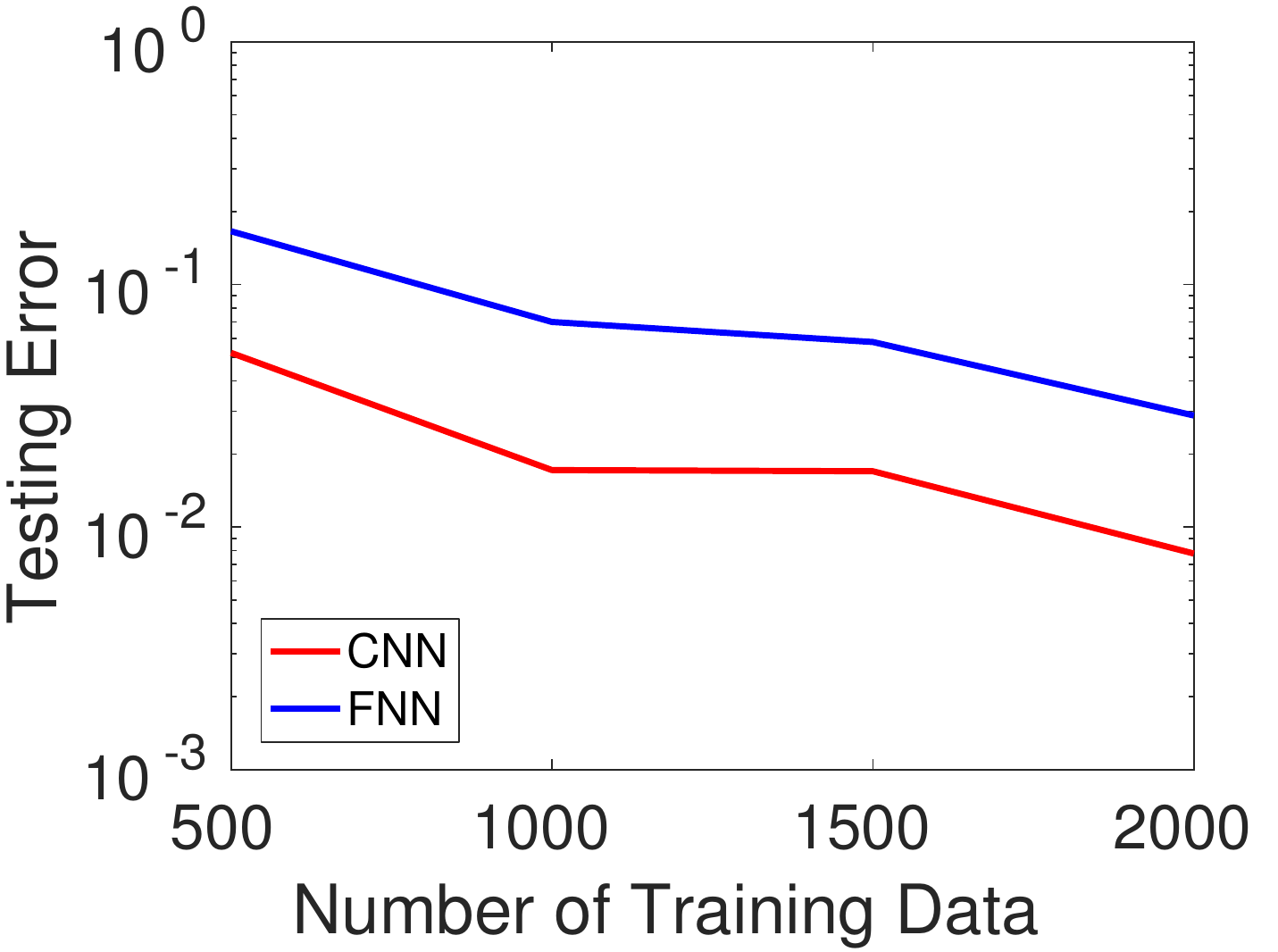}
		\caption{Stride size $s=m/2$.}
		\label{fig:two_layer_s4}
	\end{subfigure}	
	\quad
	\begin{subfigure}[t]{0.29\textwidth}
		\includegraphics[width=\textwidth]{./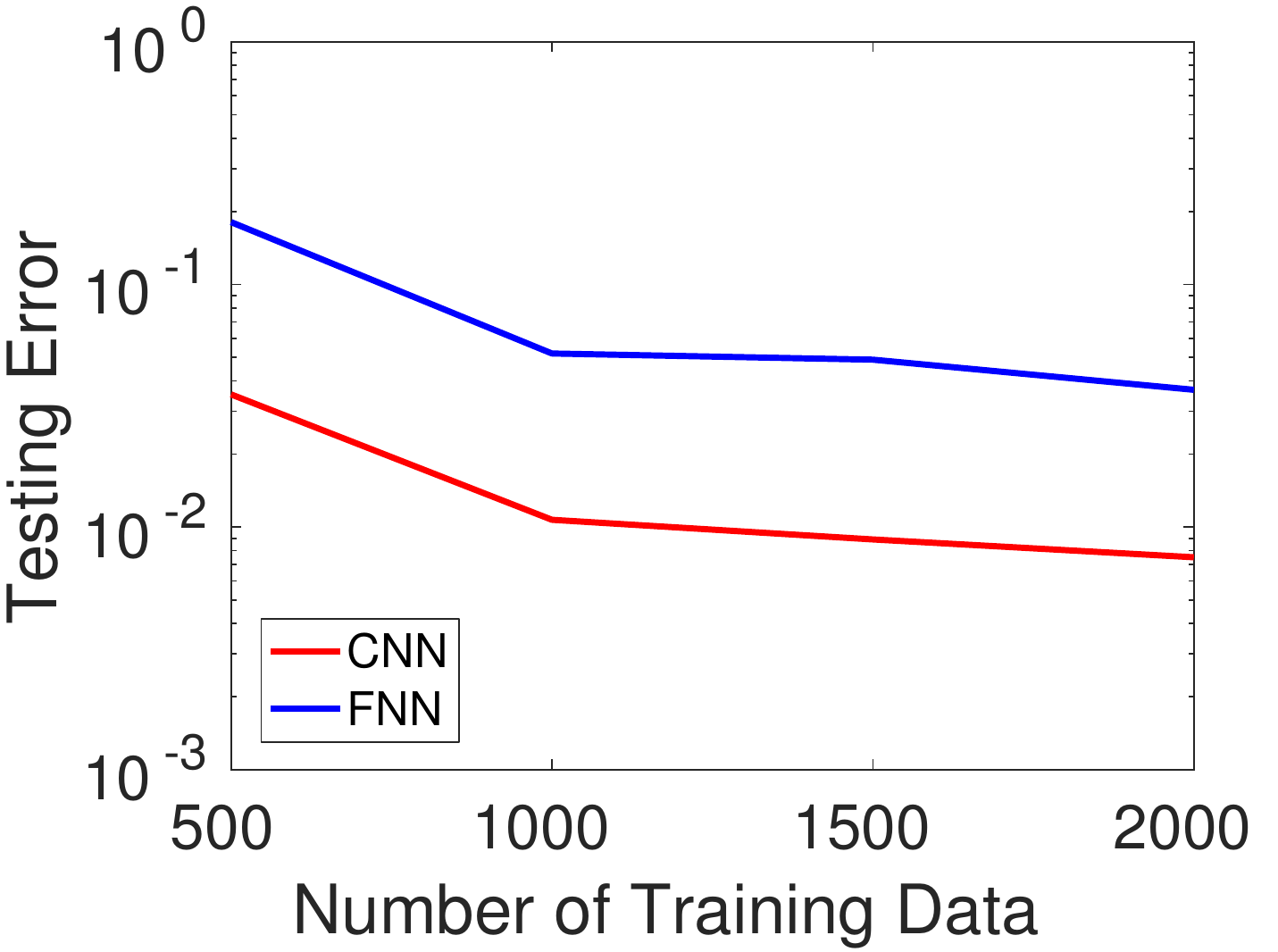}
		\caption{Stride size $s=m$, i.e., non-overlapping.}
		\label{fig:two_layer_s8}
	\end{subfigure}	
	\caption{Experiment on the problem of one-hidden-layer convolutional neural network with a shared filter and a prediction layer described in Eq.~\eqref{eqn:two_layer}. The filter size $m$ is chosen to be $8$.
	}
	\label{fig:two_layer}
\end{figure*}

\begin{figure*}[t!]
	\centering
	\begin{subfigure}[t]{0.29\textwidth}
		\includegraphics[width=\textwidth]{./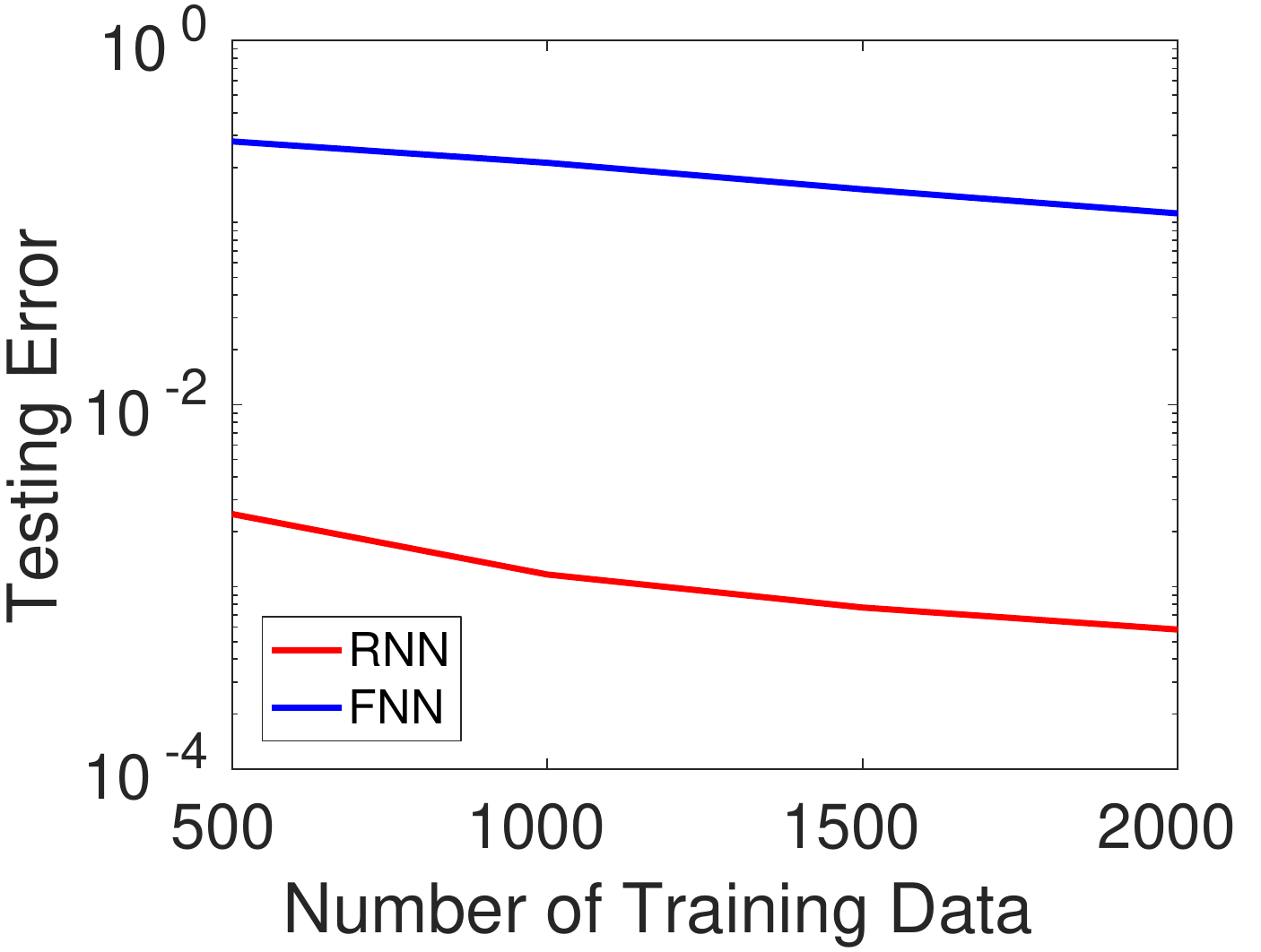}
		\caption{Hidden units $r=2$.}
		\label{fig:rnn_r2}
	\end{subfigure}	
	\quad
	\begin{subfigure}[t]{0.29\textwidth}
		\includegraphics[width=\textwidth]{./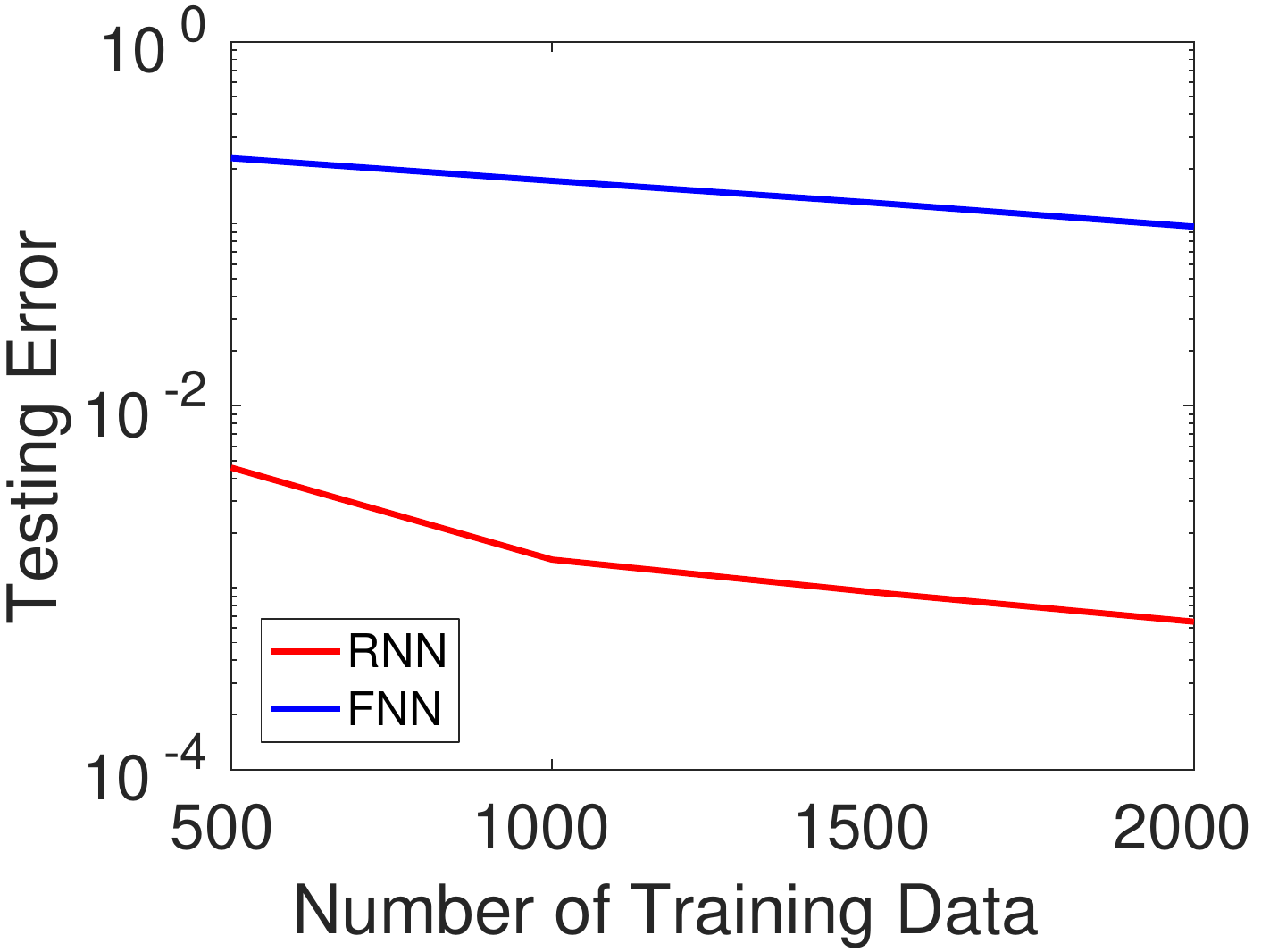}
		\caption{Hidden units $r=8$.}
		\label{fig:rnn_r8}
	\end{subfigure}	
	\quad
	\begin{subfigure}[t]{0.29\textwidth}
		\includegraphics[width=\textwidth]{./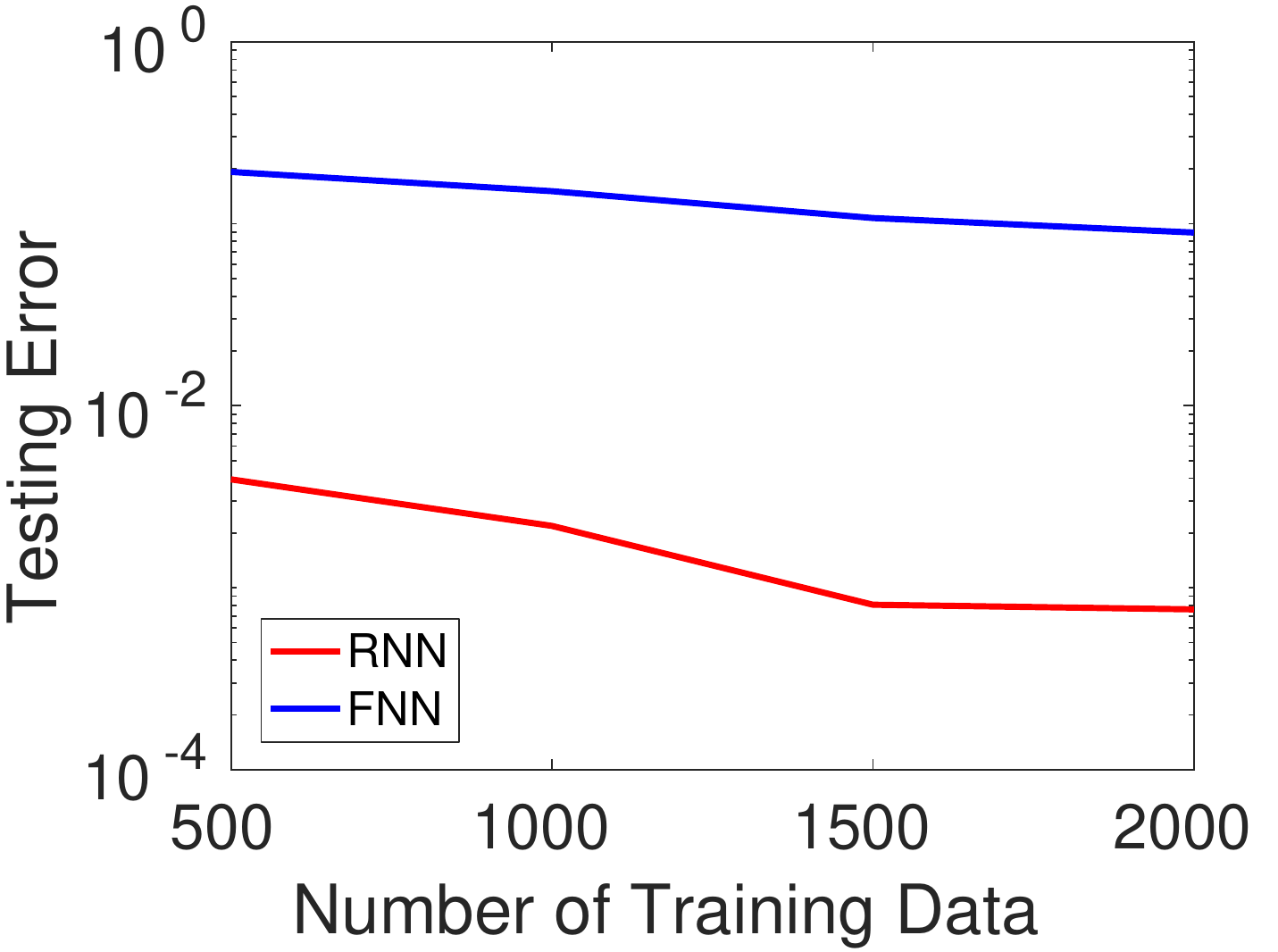}
		\caption{Hidden units $r=16$.}
		\label{fig:rnn_r16}
	\end{subfigure}	
	\caption{Experiment on RNN described in Eq.~\eqref{eq:rnn-model}, we choose $d=L=50$ and varies the number of hidden units $r$ and number of training data.
	}
	\label{fig:rnn}
\end{figure*}

\section{Concluding remarks and discussion}
In this paper we give rigorous characterizations of the statistical efficiency of CNN with simple architectures.
Now we discuss how to extend our work to more complex models and  main difficulties.

\paragraph{Non-linear Activation:}
Our paper only considered CNN and RNN with linear activation. 
A natural question is what is the sample-complexity of estimating a CNN and RNN with non-linear activation like Recitifed Linear Units (ReLU).
We find that even without convolution structure,
this is a difficult problem.
For linear activation function, we can show the empirical loss is a good approximation to the population loss
and we used this property to derive our upper bound.
However, for ReLU activation, we can find a counter example for any finite $n$.
We believe if there is a better understanding of non-smooth activation,
we can extend our analysis framework to derive sharp sample-complexity bounds for CNN and RNN with non-linear activation function.

\paragraph{Multiple Filters:}  
For both CNN models we considered in this paper, there is only one shared filter.
In commonly used CNN architectures, there are multiple filters in each layer and multiple layers.
Note that if one considers a model of $k$ filters with linear activation with $k > 1$, one can always replace this model by a single convolutional filter that equals to the summation of these $k$ filters.
Thus, we can formally study the statistical behavior of wide and deep architectures only after we have understood the non-linear activation function.
Nevertheless, we believe our empirical process based analysis is still applicable.

\appendix

\section{Proofs of technical lemmas}

\subsection{Proof of Lemma~\ref{lem:star}}
For each of the parameter sets $\Theta$ we consider we need to verify that,
\begin{align*}
\sup_{\widehat{\theta} \in \Theta} \sum_{i=1}^n \xi_j \langle z^i, \hat\theta-\theta\rangle \leq \|\widehat{\theta} - \theta\|_X
\sup_{\phi \in \Theta_X} \sum_{i=1}^n \xi_j \langle z^i, \phi\rangle.
\end{align*}
It suffices to show for $\widehat{\theta} \in \Theta$ that the vector $v := (\widehat{\theta} - \theta)/\|\widehat{\theta} - \theta\|_X \in \Theta_X$. It is clear that the vector $\|v\|_X \leq 1$, and to complete the proof using the definition of the set $\Theta_X$ it is sufficient to show that 
$\theta/\|\widehat{\theta} - \theta\|_X,~\widehat{\theta}/\|\widehat{\theta} - \theta\|_X \in \Theta.$ Recall, the definitions:
\begin{enumerate}
\item For $F^{\cavg}$, $\Theta$ is the set of all $\theta := \sum_{\ell=0}^{r-1}\sS_s^\ell(w)$, where
$$
\sS_s^\ell(w) = [\underbrace{0, \ldots, 0}_{\ell s\;\; \text{zeros}}, w_{1}, \ldots, w_{m}, 0, \ldots, 0] \in \mathbb R^d;
$$
\item For $F^{\cw}$, $\Theta$ is the set of all $\theta := \sum_{\ell=0}^{r-1}a_\ell\sS_s^\ell(w)$;
\item For $F^{\rnn}$, $\Theta$ is the set of all $\theta := (\vct 1^\top A^{L-1}B~~ \;\;\vct 1^\top A^{L-2}B~~\; \;\ldots\;\; ~~\vct 1^\top B)$.
\end{enumerate}
In each case, given $\theta \in \Theta$ we can see that $c \theta \in \Theta$ for any $c \geq 0$. In more detail, in cases (1) and (2) we simply replace $w$ by $c w$ and in case (3) we replace $B$ by $c B$ to obtain a valid vector $c\theta \in \Theta$. As a consequence we see that, $\theta/\|\widehat{\theta} - \theta\|_X,~\widehat{\theta}/\|\widehat{\theta} - \theta\|_X \in \Theta,$ completing the proof.

\subsection{Proof of Lemma \ref{lem:re}}

Without loss of generality we only need to consider $\phi=\theta-\theta'$, $\theta,\theta'\in\Theta$, and $\|\phi\|_2=1$ (regardless of the value of $\rho$),
because the restricted eigenvalues are scale-invariant.
Also, we shall only prove the lower bound on $\lambda_{\min}$, with the upper bound on $\lambda_{\max}$
being a simple symmetric argument.

Let $\mathcal H\subseteq\overline\Theta_2(1)$ be the smallest set such that for any $\phi=\theta-\theta'$, $\|\phi\|_2=1$,
there exists $\phi'\in\mathcal H$ such that $\|\phi-\phi'\|_2\leq\epsilon$.
We then have for $\epsilon\in(0,1/2]$ that 
\begin{align}
\lambda_{\min}(\{z^i\}_{i=1}^n; \overline\Theta_2(\rho))
&\geq \inf_{\phi'\in\mathcal H}(\|\phi'\|_X-\|\phi-\phi'\|_X)^2 
\geq \inf_{\phi'\in\mathcal H}(\|\phi'\|_X - \max_i\|z^i\|_2 \|\phi-\phi'\|_2)^2\nonumber\\
&\geq \inf_{\phi'\in\mathcal H}(\|\phi'\|_X - O(Z\sqrt{\log(n/\delta)})\cdot \epsilon)^2 \label{eq:re-intermediate-1}\\
&\geq \inf_{\phi'\in\mathcal H}\|\phi'\|_X^2 - O(Z\epsilon\sqrt{\log(n/\delta)}),
\label{eq:re-1}
\end{align}
where Eq.~(\ref{eq:re-intermediate-1}) holds with probability $1-\delta/2$ because $\max_i\|z^i\|_2= O(Z\sqrt{\log(n/\delta)})$ with probability $1-\delta/2$,
by standard sub-Gaussian concentration inequalities.

In the rest of this proof we lower bound $\inf_{\phi'\in\mathcal H}\|\phi'\|_X^2$.
For every $\phi'\in\mathcal H$, there must exist $\phi\in\overline\Theta_2(1)$, $\|\phi\|_2=1$ such that $\|\phi'-\phi\|_2\leq \epsilon$,
because otherwise we can remove $\phi'$ from $\mathcal H$, which would violate the minimality of $\mathcal H$.
We then have $\|\phi'\|_2\geq \|\phi\|_2-\|\phi-\phi'\|_2 \geq 1-\epsilon\geq 1/2$, because $\epsilon\leq 1/2$ as assumed.
%
This implies that $\phi'\in\mathcal H$, $\|\phi'\|_\mu := \sqrt{\mathbb E_\mu[|\langle z,\phi'\rangle|^2]}\geq 1/2\sqrt{c}$
and $\|\phi'\|_\mu\leq 2\sqrt{C}$
for all $\phi'\in\mathcal H$, thanks to Assumption (A2).

Next fix arbitrary $\phi'\in\mathcal H$.
By sub-Gaussianity of $\{z^i\}$, with probability $1-\delta/2$ we have that $\max_i |\langle z^i,\phi'\rangle| \lesssim Z\sqrt{\log(n/\delta)}$ because $\|\phi'\|_2\leq 2$.
Using Hoeffding's inequality \citep{hoeffding1963probability} we have with probability $1-\delta/2$, conditioned on the event $\max_i |\langle z^i,\phi'\rangle| \lesssim Z\sqrt{\log(n/\delta)}$, that
\begin{align}
\big|\|\phi'\|_X^2-\|\phi'\|_\mu^2\big| 
&= \left|\frac{1}{n}\sum_{i=1}^n |\langle z^i,\phi'\rangle|^2 - \mathbb E_\mu[|\langle z,\phi'\rangle|^2]\right|\lesssim Z\sqrt{\log(n/\delta)}\cdot \sqrt{\frac{\log(1/\delta)}{n}}.\label{eq:re-3}
\end{align}

Lemma \ref{lem:re} is then proved, by combining Eqs.~(\ref{eq:re-1},\ref{eq:re-3}) and the established fact that $1/2\sqrt{c}\leq \|\phi'\|_\mu\leq 2\sqrt{C}$,
and using an union bound over all $\phi'\in\mathcal H$,
whose size is upper bounded by $|\mathcal H|\leq N(\epsilon;\overline\Theta_2(1),\|\cdot\|_2)$.

\subsection{Proof of Lemma \ref{lem:linear-subspace-cover}}

Without loss of generality we only prove Lemma \ref{lem:linear-subspace-cover} for the case of $\rho=1$,
while the general case of $\rho\neq 1$ is implied by multiplying $u,v$ and $\epsilon'$ by $\rho$ in Eq.~(\ref{eq:linear-subspace-cover}).

Let $\mathcal U,\mathcal V$ be two linear subspace of $\mathbb R^q$ of dimension at most $K$.
Let $U,V\in\mathbb R^{q\times k}$ be the corresponding orthonormal basis of $\mathcal U$ and $\mathcal V$, with orthogonal columns.
Any $u\in\mathcal U$, $\|u\|_2\leq 1$ can then be written as $u=U\alpha$ with $\|\alpha\|_2=1$.
Consider $v:=V\alpha$. It is easy to verify that $v\in\mathcal V$ and $\|v\|_2\leq 1$. In addition,
$\|u-v\|_2 = \|(U-V)\alpha\|_2 \leq \|U-V\|_\op \leq \|U-V\|_F$.
Subsequently, a covering of $\{U\in\mathbb R^{q\times k}: \|U\|_F\leq \sqrt{k}\|U\|_\op = \sqrt{k}\}$ in $\|\cdot\|_F$
up to precision $\epsilon'$ implies a covering in the sense of Eq.~(\ref{eq:linear-subspace-cover}).
By viewing $U$ as a $(k\times q)$-dimensional vector in the Euclidean space, it is easy to see that such a cover exists with size $\log N \lesssim (kq)\log(kq/\epsilon')\lesssim kq\log(q/\epsilon')$.

\subsection{Proof of Corollary \ref{cor:fano}}

Select $\mathfrak d(\theta_j,\theta_k) := \|\theta_j-\theta_k\|_2$.
Condition 1 in Lemma \ref{lem:fano} is clearly satisfied with $\rho=\rho_{\min}=\min_{j>0}\|\theta_0-\theta_j\|_2/2$.
Condition 2 in Lemma \ref{lem:fano} is also satisfied, because the noise variables $\{\xi_i\}_{i=1}^n$ follow Gaussian distributions whose support
span the entire $\mathbb R$.
For Condition 3, note that the KL divergence between $P_j,P_0$ parameterized by $\theta_j$ and $\theta_0$ can be computed as
\begin{align*}
\kl(P_j\|P_0) &= \sum_{i=1}^n \mathbb E_{x_i}\left[ \kl(\mathcal N(x_i^\top\theta_j, \sigma^2)\|\mathcal N(x_i^\top\theta_0,\sigma^2))\right]
= \sum_{i=1}^n \mathbb E_{x_i}\left[\frac{|x_i^\top(\theta_j-\theta_0)|^2}{2\sigma^2}\right]\\
&= \frac{n\|\theta_j-\theta_0\|_2^2}{2\sigma^2},
\end{align*}
where the last equality holds because $\{x_i\}_{i=1}^n \overset{i.i.d.}{\sim} \mathcal N_d(0, I)$.
Hence,
$$
\frac{1}{M}\sum_{j=1}^M\kl(P_{j}\|P_0) = \frac{n}{2\sigma^2}\times \frac{1}{M}\sum_{j=1}^M \|\theta_j-\theta_0\|_2^2 = \frac{n}{2\sigma^2}\times \rho_{\avg}^2, 
$$
and therefore $\gamma$ in Condition 3 of Lemma \ref{lem:fano} can be chosen as $\gamma=(n\rho_{\avg}^2)/(2\sigma^2\log M)$.

\subsection{Proof of Lemma \ref{lem:free-vary}}

Without loss of generality assume $\mathcal I$ corresponds to the first $|\mathcal I|$ components of $\theta\in\mathbb R^D$.
The first step is to construct a finite set of binary vectors $\mathcal H\subseteq\{0,1\}^{|\mathcal I|}$ such that 
\begin{equation}
\forall h,h'\in\mathcal H, \;\;\;\; \Delta_H(h,h') = \sum_{i=1}^{|\mathcal I|} \vct 1\{h_i\neq h_i'\} \gtrsim |\mathcal I|.
\label{eq:cw-code}
\end{equation}

Using the construction of constant-weight codes (e.g., \citep[Lemma 9]{wang2016noise}, \citep[Theorem 7]{graham1980lower}), 
a finite set $\mathcal H$ satisfying Eq.~(\ref{eq:cw-code}) exists, with size lower bounded by $\log|\mathcal H|\gtrsim |\mathcal I|$.

Next define
$$
\Theta' := \{\theta\in\Theta: \theta(\mathcal I)=\epsilon h, h\in\mathcal H\}.
$$
The existence of such a $\Theta'$ is guaranteed by the condition of this lemma, where $\epsilon>0$ is a small positive number to be specified later.
It is easy to see that $\Theta'$ and $\mathcal H$ has one-to-one correspondence, and therefore $\log|\Theta'|=\log|\mathcal H|\gtrsim |\mathcal I|$.
Furthermore, for any $\theta,\theta'\in\Theta'$ and their corresponding $h,h'\in\mathcal H$, it holds that
\begin{align*}
\|\theta-\theta'\|_2 &\geq \|\theta(\mathcal I)-\theta'(\mathcal I)\|_2 = \epsilon\times \sqrt{\Delta_H(h,h')} \gtrsim \epsilon\sqrt{|\mathcal I|};\\
\|\theta-\theta'\|_2^2 &\geq \|\theta(\mathcal I)-\theta'(\mathcal I)\|_2^2 = \epsilon^2\times \Delta_H(h,h')\gtrsim \epsilon^2|\mathcal I|.
\end{align*}
Consequently, $\rho_{\min}\gtrsim \epsilon|\mathcal I|$ and $\rho_{\avg}^2\gtrsim \epsilon^2|\mathcal I|$.
Setting $\epsilon\asymp \sigma/\sqrt{n}$ and invoking Corollary \ref{cor:fano} we complete the proof of Lemma \ref{lem:free-vary}.

\bibliography{refs}

\end{document}